\newcommand{\cmark}{\ding{51}}%
\newcommand{\xmark}{\ding{55}}
\theoremstyle{plain}
\newtheorem{theorem}{Theorem}
\newtheorem{lemma}{Lemma}
\newtheorem{corollary}{Corollary}
\newtheorem{assumption}{Assumption}
\newtheorem{definition}{Definition}
\DeclareMathOperator*{\argmin}{argmin} 
\def\bx{\boldsymbol{x}}
\def\bz{\boldsymbol{z}}
\def\bb{\boldsymbol{b}}
\def\bs{\boldsymbol{s}}
\def\0{\mathbf{0}}
\def\1{\mathbf{1}}
\def\cA{\mathcal{A}}
\def\cE{\mathcal{E}}
\def\cF{\mathcal{F}}
\def\cH{\mathcal{H}}
\def\cL{\mathcal{L}}
\def\cN{\mathcal{N}}
\def\cP{\mathcal{P}}
\def\cR{\mathcal{R}}
\def\cX{\mathcal{X}}
\def\cY{\mathcal{Y}}
\def\cZ{\mathcal{Z}}
\def\Rbb{\mathbb{R}}
\def\Ebb{\mathbb{E}}
\def\sign{\operatorname{sign}}
\def\Lip{\operatorname{Lip}}
\DeclareMathOperator*{\esssup}{ess\,sup}
\def\tE{ \widetilde{\mathcal{E}}}
\def\bdelta{\bm{\delta}}
\begin{document}
\begin{frontmatter}
	\title{Non-Asymptotic Bounds for Adversarial Excess Risk under Misspecified Models}

		\runtitle{Adversarial Excess Risk under Misspecified Models}
	
		\begin{aug}
\author[A]{\fnms{Changyu} \snm{Liu}\thanks{Equal contributions}\ead[label=e1,mark]{changyuliu@cuhk.edu.hk}}
\author[B]{\fnms{Yuling} \snm{Jiao}$^{*}$\ead[label=e2,mark]{yulingjiaomath@whu.edu.cn}}
\author[C]{\fnms{Junhui} \snm{Wang}\ead[label=e3,mark]{junhuiwang@cuhk.edu.hk}}
			\and
\author[D]{\fnms{Jian} \snm{Huang}\ead[label=e4,mark]{j.huang@polyu.edu.hk}}
			\address[A]{Department of Statistics, The Chinese University of Hong Kong,  Hong Kong SAR, China\\  \printead{e1}}			
			
	\address[B]{School of Mathematics and Statistics, Wuhan University, Wuhan, Hubei, China\\
\printead{e2}}

				\address[C]{Department of Statistics, The Chinese University of Hong Kong,  Hong Kong SAR, China\\ \printead{e3}}	
			
\address[D]{Department of Applied Mathematics, The Hong Kong Polytechnic University, Hong Kong SAR, China\\  \printead{e4}}
		\end{aug}

\begin{abstract}
We propose a general approach to evaluating the performance of robust estimators based on  adversarial losses under misspecified models. We first show that adversarial risk is equivalent to the risk induced by a distributional adversarial attack under certain smoothness conditions.
This ensures that the adversarial training procedure is well-defined. To evaluate the generalization performance of the adversarial estimator, we study the adversarial excess risk. Our proposed analysis method includes investigations on both generalization error and approximation error.   We then establish non-asymptotic upper bounds for the adversarial excess risk associated with Lipschitz loss functions. In addition, we apply our general results to adversarial training for classification and regression problems. For the quadratic loss in nonparametric regression, we show that the adversarial excess risk bound can be improved over those for a general loss.
\end{abstract}

\begin{keyword}[class=MSC2020]
	\kwd[Primary ]{62G05}
	\kwd{62G08}
	\kwd[; secondary ]{68T07}
\end{keyword}

\begin{keyword}
\kwd{Adversarial attack}
\kwd{approximation error}
\kwd{generalization error}
\kwd{misspecified model}
\kwd{robust estimation}
\end{keyword}
\end{frontmatter}

\section{Introduction}
Deep learning methods are known to be vulnerable to adversarial examples, which
can result from
imperceptible perturbations to the input data.
 A deep learning model trained using such perturbed input data
 may make highly confident but erroneous predictions \citep{Szegedy2014Intriguing,GoodfellowSS14}. This problem has gained widespread attention in recent years.
Methods for finding adversarial attacks \citep{GoodfellowSS14,papernot2016limitations,
Moosavi2016CVPR,carlini2017towards,brendel2018decisionbased,Athalye18a,Pixel2019}
and  developing adversarial defense \citep{papernot2016distillation,madry2018towards,zhang19p,Cohen19c}
have been extensively studied.
Among the adversarial defense methods, adversarial training has been empirically shown to be successful \citep{madry2018towards}.

Although there has been significant progress
in developing methods for defending adversarial attacks,
theoretical understanding of adversarial robustness remains limited. \cite{Pydi2021a,Pydi2021} considered the classification loss under the adversarial binary classification setting and obtained
the optimal adversarial classification risk.  \cite{Pydi2021,Awasthi2021}, and  \cite{bungert2021geometry} proved the existence and minimax
properties for the adversarial classification risk. The results were later extended by \cite{frank2022existence} to the setting where surrogate functions were used.  Another series of work investigated the calibration and consistency of the surrogate loss functions under adversarial attacks
\citep{Bao2020,AwasthiConsistency2021,awasthi2021finer,meunier2022towards}.

Several authors have considered the generalization errors of  adversarial estimators in recent years.
Examples include
\cite{Yin19b,khim2018adversarial,Awasthi20a}, and \cite{mustafa22a}, who analyzed the Rademacher complexity of adversarial loss function class.   \cite{Tu2019} transformed the adversarial learning into a distributional robustness optimization (DRO) problem and studied its generalization properties. However, the above work only considered well specified models, i.e., the underlying target function is assumed to belong to a class of  neural network functions.
As well known in classical nonparametric method for classification and regression, the underlying  target is the  link function defined as $\mathbb{E}[Y|X=x]$  which is not a neural network function
in general.
Therefore, a natural question is what are the properties of adversarial estimator under misspecified models, i.e., when the underlying target function is not an exact neural network function, but can only be approximated by neural networks. Under this more general setting, it is necessary to consider both the generalization error and the approximation error caused by model misspecification.
In this paper, we study this problem systematically.
We first provide a summary of the main features of our result and the related ones in Table
\ref{table_compare} below.


While  adversarial training improves the
robustness of an estimator on adversarially perturbed data, this benefit often comes at the cost of more resource consumption and leads to a reduction of accuracy on natural unperturbed data \citep{tsipras2018robustness}. Some recent works have tried to gain theoretical understanding of the trade-offs between accuracy and robustness \citep{madry2018towards,Schmidt2018data,raghunathan2018certified,
	tsipras2018robustness,Duchi2019adversarial,dobriban2020provable,nma2022on, zhang19p,Javanmard_linear, Javanmard_classification}. However, none of the above mentioned works studied  the setting of deep adversarial training with misspecified models.

\begin{table}[H]
	\caption{\label{table_compare}Comparison of recent methods for studying generalization performance of    adversarial estimator. The $\ell_r$ attack refers to general $\ell_r$ adversarial attack for $r\geq 1$,    the error $\cE_{gen}$ refers to the generalization error, and $ \cE_{app}$ refers to the approximation  error. }
	\centering
	\begin{tabular}{lcccc}
		\toprule
		&$\ell_r$ attack &  FNNs & $\cE_{gen}$ & $ \cE_{app}$\\
		\midrule
		\cite{Yin19b}&\xmark&\xmark&\cmark&\xmark\\	
		\midrule
		\cite{khim2018adversarial}&\cmark & \cmark  & \cmark&\xmark\\
		\midrule
		\cite{Awasthi20a}&\cmark & \xmark  & \cmark&\xmark\\
		\midrule
		\cite{mustafa22a}&\cmark&\cmark&\cmark&\xmark\\
		\midrule
		\cite{Tu2019}&\cmark&\cmark&\cmark&\xmark\\	
		\midrule
		This work &\cmark&\cmark&\cmark&\cmark\\
		\bottomrule
	\end{tabular}
\end{table}

In this work,  we provide theoretical guarantees for
deep adversarial training with misspecified models
by establishing non-asymptotic error bounds for
the  adversarial excess risk,  defined as the difference between the adversarial risk of adversarial estimator and the optimal adversarial risk.
We show that the adversarial excess risk
can be decomposed as
\begin{equation*}
\text{Adversarial excess risk}\leq \cE_{gen}+\cE_{app},
\end{equation*}
where $\cE_{gen}$ represents the generalization error and $\cE_{app}$ represents the  approximation error. The approximation error $\cE_{app}$ is due to model misspecification when the target function is not in the class of neural networks used in training. As summarized in Table \ref{table_compare},
the existing related work has only considered the generalization error, assuming a well-specified model.
By investigating these two types of errors, we establish a non-asymptotic error bound on the adversarial excess risk in the more general setting that allows model misspecification.
Our main contributions are summarized as follows.
\begin{itemize}
	\item  We establish non-asymptotic error bounds for the adversarial excess risks under misspecified models and use the feedforward neural networks (FNNs) with constraints on the Lipschitz property.  The  error bounds explicitly illustrate  the influence of the adversarial attack level and can achieve the rate $O(n^{-\alpha/(2d+3\alpha)})$ up to a logarithmic factor,  where
	$\alpha$ represents the smoothness level of the underlying target function and $d$
	is the dimension of input.   The structure of the neural network is specified to show when the error rate can be achieved.
	\item  We also evaluate the adversarial estimator under natural risk and local worst-case risk.  The result theoretically demonstrates that the adversarial robustness can hurt the  accuracy in a general nonparametric  setting.
	\item
	We apply our general results to the classification and nonparametric regression problems in an adversarial setting
	and establish non-asymptotic error bounds for the adversarial estimators under the adversarial classification risk and $L_2$-norm, respectively.
\end{itemize}

The results for error bounds for the adversarial estimator in different settings and measurements are summarized in   Table \ref{table_sumerror}.

The rest of the paper is organized as follows.  Section \ref{sec_adrisk} introduces the notations and  problem setup. Section \ref{sec_main} contains  main results of the paper. Section \ref{sec_app} presents applications of the  results  to classification and  regression problems.  In Section \ref{sec_related}, discussion on some related works is given.
Concluding remarks are given in Section \ref{sec_con}.
 The proof of main theorem is given in Appendix, and the remaining proofs are relegated to the supplementary material.

\begin{table}[H]
	\caption{\label{table_sumerror}Summary of error bounds (up to a logarithmic factor) in the paper, where $\varepsilon$ represents the adversarial attack level, $r_1=\alpha/(2d+3\alpha),$ $r_2=2\alpha/(2d+5\alpha),$ $r_3=(d+3\alpha-1)/(2d+3\alpha)$, $r_4=(d+1)/(2d+3\alpha),$  and $r_5=(d+1)/(2d+5\alpha).$}
	\centering
	\begin{tabular}{lll}
		\toprule
		Loss function   & Measurement  & Error bound \\
		\midrule
		\multirow{3}{*}{
			Lipschitz
		}    &adversarial excess risk&$   n^{-r_1} +n^{-r_3}\varepsilon$ \\	
		\cmidrule{2-3}
		& excess risk&$n^{-r_1} +\varepsilon$\\
		\cmidrule{2-3}
		&local worst-case excess risk&$n^{-r_1}  +  n^{r_4} \varepsilon$\\
		\midrule Classification
		&    adversarial excess risk& $n^{-r_1}  + \varepsilon$\\
		\midrule
		Quadratic & $L_2$-norm &$n^{-r_2} +n^{r_5}\varepsilon$\\
		\bottomrule
	\end{tabular}
\end{table}

\subsection*{Notation}
Let the set of positive integers be denoted by $\mathbb{N}= \{1, 2, . . .\}$ and let  $\mathbb{N}_0=\mathbb{N}\cup \{0\}.$  If $a$ and $b$ are two quantities,
we use $a\lesssim b $ or $b\gtrsim a$ to denote the statement that $a\leq Cb$ for some constant $C>0$. We denote $a\asymp b$ when $a\lesssim b\lesssim a$.
Let $\lceil a \rceil$ denote the smallest integer larger than or equal to  quantity $a$. For a vector $\bx$ and $p\in[1,\infty]$, we use $\|\bx \|_{p}$ to denote the $p$-norm of $\bx$. For a function $f$, we use $\|f \|_{\infty}$ to denote the supremum norm of $f$.

\section{Problem setup}\label{sec_adrisk}
In this section, we introduce the definition of adversarial risk and present a basic setup of adversarial training. We also
lay the foundation for the 
theoretical analysis of adversarial training and establish some
basic properties of adversarial risk.

\subsection{Adversarial risk}
Suppose that $(X,Y)$ follows an unknown distribution $P$ over $\cZ= \cX\times\cY$, where $\cX\subseteq\Rbb^d$ and $\cY\subseteq\Rbb.$
For a loss function $\ell:\Rbb\times \cY\mapsto [0,\infty)$ and a measurable function $f: \cX\mapsto \Rbb$,   the (population) natural  risk is defined by
\begin{equation*}
\cR_{P}(f)=	\Ebb_{( X,Y)\sim P}\big[  \ell (f( X),Y )\big].
\end{equation*}
To evaluate the performance of function $f$ in the presence of  adversarial attacks,  the (population) adversarial risk is defined by
\begin{align*}
\widetilde\cR_{P}(f,\varepsilon)=	\Ebb_{(X,Y)\sim P} \big[\sup_{X^{\prime} \in B_{\varepsilon}(X) }\ell (f(X^{\prime}),Y ) \big],
\end{align*}
where  $B_{\varepsilon}(\bx)=\left\{\bx^{\prime}\in\cX:\|\bx^{\prime}-\bx\|_{\infty}\leq \varepsilon\right\}.$   Here we  focus on $\ell_{\infty}$ attack.
 In Section \ref{sec_main}, we will show that the proposed analysis method can be easily extended to a general
$\ell_{r}$ attack.

\subsection{Properties of adversarial risk}
Adversarial risk has been widely considered in recent years for the goal of deriving adversarial robust estimator.  To facilitate the analysis, we make the following assumptions.
\begin{assumption}\label{ass1}
	$\cZ\subseteq [0,1]^{d}\times[-1,1]$ and $\cup_{\bx\in\cX}B_{\varepsilon}(\bx)\subseteq [0,1]^d$ holds for $\varepsilon>0$.
\end{assumption}
The assumption $\cup_{\bx\in\cX}B_{\varepsilon}(\bx)\subseteq [0,1]^d$ is to  guarantee  that the estimation  function class  is well-defined under the adversarial setting.  Our analysis can be easily extended to a more general setting, where $\cZ$ is bounded.
For a loss function $\ell:\Rbb\times \cY\mapsto [0,\infty)$, we define
\begin{equation*}
\Lip^1(\ell)=\sup_{y\in\cY}\sup_{u_1\ne u_2}\frac{|\ell(u_1,y)-\ell(u_2,y)|}{|u_1-u_2|}.
\end{equation*}
\begin{assumption}\label{ass2}
	The loss function is continuous and satisfies $\Lip^1(\ell)<\infty.$
\end{assumption}
The assumption $\Lip^1(\ell)<\infty$ is weaker than the Lipschitz continuity condition,  since it only imposes restriction on the Lipschitz constant of $\ell(\cdot, y)$ for every  $y\in\cY$. The assumption
is satisfied by many commonly used loss functions, such as the hinge loss and $\rho$-margin loss.

We first
show that the adversarial risk is well-defined in our
setting.  {\color{black}This is
necessary since the adversarial risk
	may not be well-defined in general \cite{Pydi2021}.} Specifically, we show
that the adversarial risk for a function $f$ can be represented by a natural risk with the expectation taken over a shifted distribution. Moreover,  the distance between the shifted distribution and the data generating distribution can be measured by the $\infty$-th Wasserstein distance.

Let $d_{\cZ}$ denote a  metric over $\cZ$ satisfying $d_{\cZ}(\bz_1,\bz_2)=\|\bx_1-\bx_2\|_{\infty}+|y_1-y_2|$ for any $\bz_1=(\bx_1,y_1)$ and $\bz_2=(\bx_2,y_2)\in\cZ.$
Let $ \cP(\cZ)$ denote the space of Borel probability measures on $\cZ$. For $p\in[1,\infty)$,  the $p$-th Wasserstein distance  between two probability measures $P,Q\in\cP(\cZ)$ is defined as
\begin{align*}
W_p(P,Q)&=\Big\{\inf_{\pi\in\Pi(P,Q)}\Ebb_{(Z_1,Z_2)\sim \pi}\big[d_{\cZ}(Z_1,Z_2)^p\big]\Big\}^{1/p},
\end{align*}
where $\Pi(P,Q)$ denotes the collection of all probability measures on $\cZ\times\cZ$ with marginals $P$ and $Q.$  The
$\infty$-th Wasserstein distance is defined to be the limit of
the $p$-th Wasserstein distances, which can also  be characterized by
\begin{align*}
W_\infty(P,Q)
= \inf_{\pi\in\Pi(P,Q)} \esssup_{(Z_1,Z_2)\sim\pi} d_{\cZ}(Z_1,Z_2).
\end{align*}
Since $W_p(P,Q) \leq  W_q(P,Q) $ for any $1\leq p\leq q\leq \infty,$ the $\infty$-th Wasserstein distance is stronger than any $p$-th Wasserstein distance. Similarly, for $p\in[1,\infty]$, we define the $p$-th Wasserstein distance over $ \cP(\cX)$ based on the supremum norm, where $ \cP(\cX)$ denotes the space of Borel probability measures on $\cX$.

\begin{lemma}\label{lem:nat}
	Suppose Assumption \ref{ass1} holds, $\ell$ and $f$ are continuous,   there exists a measurable function   $T^{\star}$  satisfying $T^{\star}(\bz)\in B_{\varepsilon}(\bx) $ such that $
	\sup_{\bx^{\prime}\in B_{\varepsilon}(\bx)} \ell(f(\bx^{\prime}), y)=\ell(f(T^{\star}(\bz)), y).
	$
	Let the joint distribution of $(T^{\star}(Z), Y)$  be denoted by $P^{\star}$, we have $W_{\infty}(P^{\star},P)\leq \varepsilon$ and
	\[\widetilde \cR_{P}(f,\varepsilon)=\cR_{P^{\star}}(f).\]
\end{lemma}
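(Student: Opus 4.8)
The plan is to establish the existence of the measurable map $T^{\star}$ via a measurable selection theorem, and then read off both conclusions directly from its defining property. First I would fix a function $f$ and define, for each $\bz=(\bx,y)\in\cZ$, the set-valued map $\bx\mapsto\argmax_{\bx^{\prime}\in B_{\varepsilon}(\bx)}\ell(f(\bx^{\prime}),y)$. Since $B_{\varepsilon}(\bx)$ is a compact subset of $[0,1]^d$ (here Assumption \ref{ass1} is used to ensure $B_{\varepsilon}(\bx)\subseteq[0,1]^d\subseteq\cX$, so the constraint set is nonempty and lies in the domain of $f$) and $\bx^{\prime}\mapsto\ell(f(\bx^{\prime}),y)$ is continuous by the continuity of $\ell$ and $f$, the supremum is attained, so this argmax set is nonempty and closed. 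The key technical point is that $\bz\mapsto B_{\varepsilon}(\bx)$ is a measurable (indeed continuous) compact-valued correspondence and $(\bz,\bx^{\prime})\mapsto\ell(f(\bx^{\prime}),y)$ is a Carath\'eodory function, so the Kuratowski--Ryll-Nardzewski measurable selection theorem (or the measurable maximum theorem, e.g.\ Aliprantis--Border) yields a measurable selector $T^{\star}$ with $T^{\star}(\bz)\in B_{\varepsilon}(\bx)$ and $\ell(f(T^{\star}(\bz)),y)=\sup_{\bx^{\prime}\in B_{\varepsilon}(\bx)}\ell(f(\bx^{\prime}),y)$.

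Given $T^{\star}$, define $P^{\star}$ to be the law of $(T^{\star}(Z),Y)$ under $(X,Y)\sim P$. The identity $\widetilde\cR_P(f,\varepsilon)=\cR_{P^{\star}}(f)$ is then immediate: by the defining property of $T^{\star}$ and the change-of-variables formula for pushforward measures,
\begin{align*}
\widetilde\cR_{P}(f,\varepsilon)
=\Ebb_{(X,Y)\sim P}\big[\ell(f(T^{\star}(X,Y)),Y)\big]
=\Ebb_{(X',Y')\sim P^{\star}}\big[\ell(f(X'),Y')\big]
=\cR_{P^{\star}}(f).
\end{align*}

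For the Wasserstein bound, the natural candidate coupling $\pi\in\Pi(P^{\star},P)$ is the law of the pair $\big((T^{\star}(Z),Y),(X,Y)\big)$ under $(X,Y)=Z\sim P$; it has the correct marginals by construction. Under this coupling, $d_{\cZ}$ between the two points is $\|T^{\star}(Z)-X\|_{\infty}+|Y-Y|=\|T^{\star}(Z)-X\|_{\infty}\leq\varepsilon$ almost surely, because $T^{\star}(\bz)\in B_{\varepsilon}(\bx)$ means precisely $\|T^{\star}(\bz)-\bx\|_{\infty}\leq\varepsilon$. Hence $\esssup_{(Z_1,Z_2)\sim\pi}d_{\cZ}(Z_1,Z_2)\leq\varepsilon$, and taking the infimum over couplings in the characterization of $W_{\infty}$ gives $W_{\infty}(P^{\star},P)\leq\varepsilon$.

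The only real obstacle is the measurable selection step: one must verify the measurability hypotheses of the selection theorem carefully (measurability of the correspondence $\bz\mapsto B_{\varepsilon}(\bx)$, which follows since it is Hausdorff-continuous in $\bx$, and joint measurability/continuity of $(\bz,\bx^{\prime})\mapsto\ell(f(\bx^{\prime}),y)$, which follows from continuity of $\ell$ and $f$). Everything after that is bookkeeping with pushforward measures. I would also note that the statement as written presupposes such a $T^{\star}$ exists; if the intent is to prove existence rather than assume it, the above is exactly the argument, and if $T^{\star}$ is taken as given, one simply skips to the last two paragraphs.
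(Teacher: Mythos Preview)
Your proposal is correct and follows essentially the same route as the paper: invoke a measurable selection theorem to produce $T^{\star}$, then read off the risk identity and the $W_{\infty}$ bound from the defining property of $T^{\star}$ via the obvious coupling. The only cosmetic difference is that the paper cites Aumann's measurable selection principle (recasting the supremum as an infimum via $M-\ell(f(\bx'),y)$ to fit that lemma's hypotheses) where you invoke the measurable maximum theorem / Kuratowski--Ryll-Nardzewski directly, but this is the same argument with a different reference.
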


Lemma \ref{lem:nat} shows  that the adversarial risk is well-defined with respect to the shifted distribution $P^{\star}$.
With further analysis of this shifted distribution, we construct an equivalent relationship between the adversarial risk and the risk induced by the distribution perturbing adversary \citep{Pydi2021,Pydi2021a}. The result is given  in Section \ref{ap_relation} below.

To state the next lemma, we denote
the Lipschitz constant for a function $f$ by
\[
\Lip(f)=\sup_{\bx_1\ne \bx_2}\frac{|f(\bx_1)-f(\bx_2)|}{\|\bx_1-\bx_2\|_{\infty}}.
\]
\begin{lemma}\label{lem:Req}
	Suppose Assumptions \ref{ass1}--\ref{ass2} hold  and  $\Lip(f)< \infty$, then
	\begin{align*}
	\cR_{P}(f)\leq \widetilde \cR_{P}(f,\varepsilon)\leq \cR_{P}(f)+\Lip^1(\ell)\Lip(f)\varepsilon.
	\end{align*}
\end{lemma}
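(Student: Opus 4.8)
The plan is to prove the two inequalities separately, each by an elementary pointwise estimate followed by taking expectations against $P$.

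For the lower bound $\cR_P(f)\le\widetilde\cR_P(f,\varepsilon)$: for every $(\bx,y)\in\cZ$ one has $\bx\in B_\varepsilon(\bx)$, so $\ell(f(\bx),y)\le \sup_{\bx'\in B_\varepsilon(\bx)}\ell(f(\bx'),y)$. Integrating this pointwise inequality over $(X,Y)\sim P$ gives the claim. Note the inner supremum is a measurable, integrable function of $(X,Y)$ by Lemma~\ref{lem:nat} (which produces the measurable selector $T^\star$ with $\sup_{\bx'\in B_\varepsilon(\bx)}\ell(f(\bx'),y)=\ell(f(T^\star(\bz)),y)$), so both risks are well-defined.

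For the upper bound: fix $(\bx,y)\in\cZ$ and any $\bx'\in B_\varepsilon(\bx)$, i.e.\ $\|\bx'-\bx\|_\infty\le\varepsilon$. By Assumption~\ref{ass2} the map $u\mapsto\ell(u,y)$ is Lipschitz with constant at most $\Lip^1(\ell)$, hence $|\ell(f(\bx'),y)-\ell(f(\bx),y)|\le \Lip^1(\ell)\,|f(\bx')-f(\bx)|$; and since $\Lip(f)<\infty$, $|f(\bx')-f(\bx)|\le \Lip(f)\,\|\bx'-\bx\|_\infty\le \Lip(f)\,\varepsilon$. Combining,
\[
\ell(f(\bx'),y)\le \ell(f(\bx),y)+\Lip^1(\ell)\Lip(f)\varepsilon\qquad\text{for all }\bx'\in B_\varepsilon(\bx).
\]
Taking the supremum over $\bx'\in B_\varepsilon(\bx)$ leaves the right-hand side unchanged (it is independent of $\bx'$), so $\sup_{\bx'\in B_\varepsilon(\bx)}\ell(f(\bx'),y)\le \ell(f(\bx),y)+\Lip^1(\ell)\Lip(f)\varepsilon$. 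Taking $\Ebb_{(X,Y)\sim P}$ of both sides yields $\widetilde\cR_P(f,\varepsilon)\le \cR_P(f)+\Lip^1(\ell)\Lip(f)\varepsilon$.

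\textbf{Main obstacle.} There is essentially no serious obstacle: the result is a direct consequence of the triangle inequality together with the two Lipschitz bounds (one for $\ell(\cdot,y)$, one for $f$). The only point meriting a word of care is that the inner supremum is measurable and integrable so that $\widetilde\cR_P(f,\varepsilon)$ makes sense; this has already been secured by Lemma~\ref{lem:nat}.
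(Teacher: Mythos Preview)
Your proposal is correct and follows essentially the same approach as the paper: the lower bound via the trivial pointwise inclusion $\bx\in B_\varepsilon(\bx)$, and the upper bound via chaining the Lipschitz constants of $\ell(\cdot,y)$ and $f$. The only cosmetic difference is that the paper carries out the upper bound through the selector $T^\star$ from Lemma~\ref{lem:nat}, writing $\widetilde\cR_P(f,\varepsilon)-\cR_P(f)=\Ebb[\ell(f(T^\star(Z)),Y)-\ell(f(X),Y)]$ and then bounding inside the expectation, whereas you bound the supremum pointwise before integrating; the content is identical.
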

Lemma \ref{lem:Req} 
shows that the adversarial robustness
is related 
to the Lipschitz constraint.
	\cite{cisse2017parseval,tsuzuku2018lipschitz} and \cite{bubeck2021a}
	also showed that robustness is related to the Lipschitz constraint under different settings.


\subsection{Relationship between adversarial risk and distribution perturbing risk}
\label{ap_relation}
We  further study the shifted distribution in Lemma \ref{lem:nat}
and  construct a relationship between the adversarial risk and another kind of risk induced by a distributional adversarial attack defined below.

For any distribution $Q\in \cP(\cZ)$, we denote its corresponding pair of variables by $(\widetilde{X},\widetilde{Y})$, and let the conditional distribution of $\widetilde{X}$ given $\widetilde{Y}=y$ be denoted by $Q_y $ for every $y\in\cY$. The collection of distributions $\Gamma_{\varepsilon}$ is defined by
\begin{align*}
\Gamma_{\varepsilon} =\Big\{Q\in \cP(\cZ): &\text{ when } (\widetilde{X},\widetilde{Y})\sim Q,\text{ then }
\widetilde{Y}\sim P_{Y}  \text{ and } W_\infty(Q_y,P_y)\leq \varepsilon, \forall y\in\cY
\Big\},
\end{align*}
where $P_y$ is the conditional distribution of $X$   given $Y=y$ and $P_Y$ is the distribution of $Y.$   Intuitively, it is helpful to think $\Gamma_{\varepsilon}$ as a collection of distributional adversarial attacks. For every $Q\in \Gamma_{\varepsilon}$, as it observing sample $(X,Y)$,  with $y$ denoting the  value of $Y$,  it perturbs $X$ to $\widetilde{X}$ such that $\widetilde{X}\sim Q_y$ and lets $ Q_y$ lie  in an uncertainty set around $P_y.$  This kind of adversarial attack strategy is also known as distribution perturbing adversary  \citep{Pydi2021,Pydi2021a}. The corresponding distribution perturbing risk is defined as $ \sup_{Q\in\Gamma_{\varepsilon}}\cR_{Q}(f).$

\begin{theorem}\label{lem:disat}
	Suppose Assumption \ref{ass1} holds, $\ell$ and $f$ are continuous, then  we have $
	\widetilde \cR_{P}(f,\varepsilon)=  \sup_{Q\in\Gamma_{\varepsilon}}\cR_{Q}(f).
	$
\end{theorem}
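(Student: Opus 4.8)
The plan is to prove the two inequalities $\widetilde\cR_P(f,\varepsilon)\le\sup_{Q\in\Gamma_\varepsilon}\cR_Q(f)$ and $\sup_{Q\in\Gamma_\varepsilon}\cR_Q(f)\le\widetilde\cR_P(f,\varepsilon)$ separately. The first direction reuses the explicit shifted distribution produced by Lemma \ref{lem:nat}; the second builds, for each value of the label, a coupling that realizes the $W_\infty$ constraint appearing in the definition of $\Gamma_\varepsilon$.

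For $\widetilde\cR_P(f,\varepsilon)\le\sup_{Q\in\Gamma_\varepsilon}\cR_Q(f)$, I would apply Lemma \ref{lem:nat} to obtain a measurable $T^\star$ with $T^\star(\bz)\in B_\varepsilon(\bx)$ and $\widetilde\cR_P(f,\varepsilon)=\cR_{P^\star}(f)$, where $P^\star$ is the law of $(T^\star(Z),Y)$ under $P$. It then suffices to check that $P^\star\in\Gamma_\varepsilon$. Since $T^\star$ leaves the $Y$-coordinate untouched, $P^\star$ has $Y$-marginal $P_Y$; disintegrating along $Y$, the conditional law $P^\star_y$ of $T^\star(X,Y)$ given $Y=y$ is the pushforward of $P_y$ by the map $\bx\mapsto T^\star(\bx,y)$. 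Because $(\mathrm{id},T^\star(\cdot,y))_\#P_y$ is a coupling of $P_y$ and $P^\star_y$ whose cost $\|T^\star(\bx,y)-\bx\|_\infty$ is at most $\varepsilon$ for every $\bx$, we obtain $W_\infty(P^\star_y,P_y)\le\varepsilon$ for all $y\in\cY$. Hence $P^\star\in\Gamma_\varepsilon$ (so $\Gamma_\varepsilon\neq\emptyset$) and $\widetilde\cR_P(f,\varepsilon)=\cR_{P^\star}(f)\le\sup_{Q\in\Gamma_\varepsilon}\cR_Q(f)$.

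For the reverse inequality, fix $Q\in\Gamma_\varepsilon$ and disintegrate $\cR_Q(f)=\int_\cY \Ebb_{\widetilde X\sim Q_y}[\ell(f(\widetilde X),y)]\,dP_Y(y)$, using that $\widetilde Y\sim P_Y$. For each fixed $y$, since $W_\infty(Q_y,P_y)\le\varepsilon$ and $P_y,Q_y$ are Borel measures on the compact space $[0,1]^d$, the infimum defining $W_\infty$ is attained, so there is $\pi_y\in\Pi(P_y,Q_y)$ with $\|X-\widetilde X\|_\infty\le\varepsilon$ holding $\pi_y$-almost surely; as $Q_y$ is carried by $\cX$, this forces $\widetilde X\in B_\varepsilon(X)$ $\pi_y$-a.s. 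Consequently
\[
\Ebb_{\widetilde X\sim Q_y}[\ell(f(\widetilde X),y)]
=\Ebb_{(X,\widetilde X)\sim\pi_y}[\ell(f(\widetilde X),y)]
\le \Ebb_{(X,\widetilde X)\sim\pi_y}\Big[\sup_{\bx'\in B_\varepsilon(X)}\ell(f(\bx'),y)\Big]
=\Ebb_{X\sim P_y}\Big[\sup_{\bx'\in B_\varepsilon(X)}\ell(f(\bx'),y)\Big],
\]
where the last equality holds because the integrand depends only on the first coordinate, whose $\pi_y$-marginal is $P_y$. Integrating in $y\sim P_Y$ and recombining via the disintegration of $P$ gives $\cR_Q(f)\le\Ebb_{(X,Y)\sim P}[\sup_{\bx'\in B_\varepsilon(X)}\ell(f(\bx'),Y)]=\widetilde\cR_P(f,\varepsilon)$; taking the supremum over $Q\in\Gamma_\varepsilon$ completes the proof.

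The main obstacle is measure-theoretic bookkeeping rather than a hard inequality: one must ensure that the disintegrations $y\mapsto P_y$ and $y\mapsto Q_y$ are genuine measurable families so the iterated integrals are well posed ($\cZ$ being a Borel subset of a Polish space provides the needed regular conditional distributions), and that each constraint $W_\infty(Q_y,P_y)\le\varepsilon$ is witnessed by a coupling with an almost-sure cost bound, which relies on attainment of the infimum in the $W_\infty$ problem over Polish spaces. A helpful point is that the reverse direction uses a \emph{separate} coupling for each $y$ and never glues them into a single plan on $\cZ\times\cZ$, so no measurable selection of optimal couplings is required; the only measurability actually needed is that of $y\mapsto\Ebb_{X\sim P_y}[\sup_{\bx'\in B_\varepsilon(X)}\ell(f(\bx'),y)]$, which follows from Lemma \ref{lem:nat} identifying the inner supremum with the Borel function $\ell(f(T^\star(\cdot)),\cdot)$. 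Should one wish to sidestep attainment of the $W_\infty$ infimum, one can instead use $(\varepsilon+\delta)$-couplings, bound by $\sup_{\bx'\in B_{\varepsilon+\delta}(X)}\ell(f(\bx'),y)$, and let $\delta\downarrow0$ using continuity of $\ell$ and $f$ together with compactness.
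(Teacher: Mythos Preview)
Your proof is correct and follows essentially the same route as the paper: the forward inequality is obtained by showing the shifted distribution $P^\star$ from Lemma~\ref{lem:nat} lies in $\Gamma_\varepsilon$, and the reverse inequality proceeds by disintegrating along $y$ and using, for each $y$, a coupling realizing $W_\infty(Q_y,P_y)\le\varepsilon$ to trap $\widetilde X$ in $B_\varepsilon(X)$ almost surely. Your added discussion of the measure-theoretic fine points (existence of regular conditional distributions, attainment of the $W_\infty$ infimum, and the $(\varepsilon+\delta)$-coupling workaround) goes beyond what the paper spells out but does not change the argument.
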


This theorem shows that the risks induced by the  two different types of adversarial attack  are equivalent under the smoothness condition. Similar results were constructed in \cite{Pydi2021,Pydi2021a}, while \cite{Pydi2021} focused on the binary classification setting and  \cite{Pydi2021a} considered the case when $\cY$ is a discrete set of labels. 
 From the proof of Theorem \ref{lem:disat}, we
also show that $P^{\star}\in\Gamma_{\varepsilon}$, which directly implies  $W_{\infty}(P^{\star},P)\leq \varepsilon.$ Hence,  it
shows a stronger relationship with $P$.

\subsection{Adversarial training}
Adversarial  training
aims to
learn a target function $f^{\star}$ that  minimizes the adversarial  risk.
In this work, we assume that $f^{\star}$ belongs to a H\"{o}lder class.

\begin{definition}[H\"{o}lder  class]
	Let $d\in\mathbb{N}$ and $\alpha=r+\beta>0$, where $r\in\mathbb{N}_0$ and $\beta\in(0,1]$. Let  $\bs \in \mathbb{N}_{0}^{d}$ denote the multi-index. The H\"{o}lder  class $\cH^{\alpha}(\Rbb^d)$ is defined as
	\begin{equation*}
	\begin{split}
	\mathcal{H}^{\alpha} (\Rbb^{d} )=\Big\{f: \Rbb^{d} \rightarrow \Rbb,
&
\max_{\|\bs\|_{1} \leq r} \sup _{\bx \in \mathbb{R}^{d}}\left|\partial^{\bs} f(\bx)\right| \leq 1,
\\&
	\max _{\|\bs\|_{1}=r} \sup _{\bx_1 \neq \bx_2} \frac{\left|\partial^{\bs} f(\bx_1)-\partial^{\bs} f(\bx_2)\right|}{\|\bx_1-\bx_2\|_{\infty}^{\beta}} \leq 1\Big\}.
	\end{split}
	\end{equation*}
We 	let  $\mathcal{H}^{\alpha}=\left\{f:[0,1]^{d} \rightarrow \mathbb{R}, f \in \mathcal{H}^{\alpha} (\mathbb{R}^{d} )\right\}$ denote the restriction of $\mathcal{H}^{\alpha} (\mathbb{R}^{d} )$ to $[0,1]^{d}$.
\end{definition}
Our target function $f^{\star}$ is defined by
\begin{equation}\label{eq:target}
f^{\star}\in\argmin_{f\in \mathcal{H}^{\alpha}} \widetilde\cR_{P}(f,\varepsilon).
\end{equation}
When only a finite sample $\{(X_i, Y_i), i=1, \ldots, n\}$  is available,  we estimate $f^{\star}$ by  minimizing the empirical adversarial  risk   over a space of estimation functions $\cF_n$, which can vary with  $n$. Specifically, we aim to find an estimator $\widehat{f}_n \in \cF_n$ that solves
\begin{equation*}
\widehat{f}_n\in\argmin_{f\in \cF_n}\widetilde  \cR_{P_n}(f,\varepsilon), \quad \text{where }\widetilde\cR_{P_n}(f,\varepsilon)=\frac{1}{n}\sum_{i=1}^{n}\big[ \sup_{ X^{\prime}_i \in B_{\varepsilon}(X_i) }\ell(f (X^{\prime}_i) ,Y_i)\big].
\end{equation*}
Here $P_n$ denotes the empirical distribution induced by the samples.
The function $\widehat{f}_n$ is called an adversarial estimator.
  Based on the relationship between Lipschitz constraints  and  adversarial robustness, we focus on  the feedforward neural network with  constraints on Lipschitz property.

\subsection{Feedforward neural networks with norm constraints}
A  feedforward neural networks (FNN) can be represented in the form of
\[
g=g_{L}\circ g_{L-1}\circ\cdots\circ g_{0},
\]
where $g_{i}(\bx)=\sigma(A_i\bx+\bb_i)$  and  $g_{L}(\bx)=A_{L}\bx$, with  $A_{i}\in\Rbb^{d_{i+1}\times d_{i}}$ and $\bb_{i}\in\Rbb^{d_{i+1}\times 1}$  for $i=0,\dots, L-1$, $A_{L}\in\Rbb^{d_{L+1}\times d_{L}}$, and $\sigma(x)=\max\{x,0\}$ being the ReLU activation function (applied
component-wise).  For simplicity of notation, we use $g_{\theta}$ to emphasis that the  FNN  is parameterized by $\theta=(A_0,\dots, A_{L}, \bb_{0},\dots, \bb_{L-1})$.
The number $W=\max\{d_1,\dots, d_L\}$ and $L$ are called the width and depth of the FNN, respectively. We let  $\cN\cN(W,L)$ denote the class of FNNs with width $W$ and depth $L$. Additionally, we
define  $\cN\cN(W,L,K)$    as the subset of functions in $\cN\cN(W,L)$ which satisfies the following norm constraint on the weight.
\begin{equation*}
\kappa(\theta):=\|A_L\|\prod_{i=0}^{L-1}\max\{\|(A_{i},\bb_i)\|,1\}\leq K,
\end{equation*}
where the  norm satisfies  $\|A\|=\sup_{\|\bx\|_{\infty}\leq 1}\|A\bx\|_{\infty}$.
Then for any $g_{\theta}\in\cN\cN(W,L,K)$, we have
\begin{equation*}
\Lip(g_{\theta})\leq \kappa(\theta)\leq K.
\end{equation*}
Therefore,  the Lipschitz constants of the functions in $\cN\cN(W,L,K)$ have an uniformly upper bound.

\section{Non-asymptotic error bounds}
\label{sec_main}
In this section, we  present our main results of  non-asymptotic
bounds for the adversarial excess risk.
We also discuss the relationship between accuracy and adversarial robustness in the sense
that
more robustness can lead to less accurate upper bounds for the excess risk.

\subsection{Non-asymptotic error bounds for adversarial excess risk}
The adversarial estimator based on FNNs with norm constraints is defined by
\begin{equation}\label{eq:esti}
\widehat{f}_n\in\argmin_{f\in \cN\cN(W,L,K)}\widetilde  \cR_{P_n}(f,\varepsilon).
\end{equation}
We evaluate  its performance  via the adversarial excess risk
\[
\tE(\widehat{f}_n,\varepsilon) =\widetilde\cR_{P}(\widehat{f}_n,\varepsilon) -\inf_{f\in \mathcal{H}^{\alpha}\cup \cN\cN(W,L,K)} \widetilde\cR_{P}(f,\varepsilon).
\]
The adversarial excess risk is  nonnegative.   It
is a measure for evaluating the performance of  an adversarial estimator for future data
using the optimal population adversarial risk as a benchmark.

To investigate the adversarial excess risk, we show  it can be decomposed  into  \eqref{eq_qq},
where $\cE_{gen}$ represents the generalization error, which is the difference between the population adversarial risk and empirical adversarial risk, and $\cE_{app}$ represents the approximation  error due to model misspecification, which measures the distance between the target function and the space of estimation functions that may not contain the target function.  By investigating both  $\cE_{gen}$  and  $\cE_{app}$, we establish a  non-asymptotic error bound on the adversarial excess risk.

\begin{theorem}\label{thm:main1}
Consider a H\"{o}lder space $\cH^{\alpha}$  with  $\alpha=r+\beta\geq 1$, where  $r\in\mathbb{N}_0$ and $\beta\in(0,1]$. Let $\gamma=\lceil \log_2 (d+r)\rceil$. Suppose Assumptions \ref{ass1}--\ref{ass2} hold. Suppose
 $W\geq c(K/\log^{\gamma}K)^{(2d+\alpha)/(2d+2)}$ for a constant $c>0$, and $L\geq 4\gamma+2$. Then for any adversarial estimator $\widehat{f}_n$ in \eqref{eq:esti} and adversarial attack level $\varepsilon>0$, we have
		\begin{equation}\label{eq_qq}
		\tE(\widehat{f}_n,\varepsilon) \lesssim \cE_{gen} + \cE_{app},
		\end{equation}
		where
		\begin{align*}
		\cE_{gen}&=  K\varepsilon n^{-1}+ WL\sqrt{\log (W^2L)} n^{-1/2} \sqrt{\log n}
		+n^{-\min\{1/2, \alpha/d  \}}\log^{c(\alpha,d)}n,\\
		\cE_{app}& = ( K/\log^{\gamma}K  )^{- \alpha/(d+1)}.
		\end{align*}
	Here $c(\alpha,d)=1$ when $ d=2\alpha$, and $c(\alpha,d)=0,$ otherwise. 	

If  we further select
	$ K\asymp n^{(d+1)/(2d+3\alpha)}$ and $WL\asymp n^{(2d+\alpha)/(4d+6\alpha)},$ then we   have
	\begin{equation}\label{eq_errorbound}
	\tE(\widehat{f}_n,\varepsilon)  \lesssim  n^{-(d+3\alpha-1)/(2d+3\alpha)}\varepsilon+n^{-\alpha/(2d+3\alpha)}\log n^{\xi}.
	\end{equation}
	where $\xi=\max\{1,\gamma\alpha/(d+1)\}$.
\end{theorem}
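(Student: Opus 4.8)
The plan is to establish the decomposition \eqref{eq_qq} first, and then optimize the free parameters $K$, $W$, $L$ to obtain \eqref{eq_errorbound}. I would begin with a standard error decomposition: for any fixed comparison function $\bar f \in \cN\cN(W,L,K)$,
\begin{align*}
\widetilde\cR_{P}(\widehat f_n,\varepsilon) - \inf_{f\in \mathcal{H}^{\alpha}\cup \cN\cN(W,L,K)}\widetilde\cR_{P}(f,\varepsilon)
&\leq \big[\widetilde\cR_{P}(\widehat f_n,\varepsilon) - \widetilde\cR_{P_n}(\widehat f_n,\varepsilon)\big]
+ \big[\widetilde\cR_{P_n}(\bar f,\varepsilon) - \widetilde\cR_{P}(\bar f,\varepsilon)\big]\\
&\quad + \big[\widetilde\cR_{P}(\bar f,\varepsilon) - \inf_{f\in \mathcal{H}^{\alpha}}\widetilde\cR_{P}(f,\varepsilon)\big],
\end{align*}
where I used that $\widehat f_n$ minimizes the empirical adversarial risk so $\widetilde\cR_{P_n}(\widehat f_n,\varepsilon)\le\widetilde\cR_{P_n}(\bar f,\varepsilon)$, and that the infimum over $\mathcal{H}^{\alpha}\cup\cN\cN(W,L,K)$ is at most the infimum over $\mathcal{H}^{\alpha}$. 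The first two bracketed terms, after taking a supremum over the function class, give twice the uniform deviation $\sup_{f\in\cN\cN(W,L,K)}|\widetilde\cR_P(f,\varepsilon)-\widetilde\cR_{P_n}(f,\varepsilon)|$, which is $\cE_{gen}$; the last term, minimized over $\bar f$, is $\cE_{app}$. A subtlety is that $\widehat f_n$ lives in a random place inside the class, so I would route the generalization term through a symmetrization/Rademacher-complexity argument applied to the adversarial loss class $\{z\mapsto \sup_{x'\in B_\varepsilon(x)}\ell(f(x'),y): f\in\cN\cN(W,L,K)\}$. By Lemma \ref{lem:nat} and Lemma \ref{lem:disat} the adversarial risk is a genuine (shifted) population risk, so the loss class is well-behaved; by Lemma \ref{lem:Req} and the norm constraint $\Lip(g_\theta)\le K$, the adversarial loss differs from the natural loss by at most $\Lip^1(\ell)K\varepsilon$ pointwise, which is how the $K\varepsilon n^{-1}$ term enters (this is the deviation contribution of that additive Lipschitz slack after a Bernstein-type bound). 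The main FNN complexity term $WL\sqrt{\log(W^2L)}\,n^{-1/2}\sqrt{\log n}$ comes from a covering-number / pseudo-dimension bound for $\cN\cN(W,L)$ combined with a chaining argument, using that composing with $\sup_{x'\in B_\varepsilon(x)}$ and with the $\Lip^1(\ell)$-Lipschitz loss does not inflate the covering number (a max over a compact neighborhood of a Lipschitz-in-input function is again Lipschitz, with the same modulus, so an $L^\infty$ cover of the network class transfers to an $L^\infty$ cover of the adversarial loss class).

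For the approximation term I would invoke a quantitative ReLU-network approximation result for H\"older functions \emph{with simultaneous control of the Lipschitz constant of the approximant}: under the stated width condition $W\ge c(K/\log^\gamma K)^{(2d+\alpha)/(2d+2)}$ and depth $L\ge 4\gamma+2$, there exists $\bar f\in\cN\cN(W,L,K)$ with $\|\bar f - f^\star\|_\infty \lesssim (K/\log^\gamma K)^{-\alpha/(d+1)}$ and $\Lip(\bar f)\lesssim K$. Then, since $\ell(\cdot,y)$ is $\Lip^1(\ell)$-Lipschitz and the $\sup_{x'\in B_\varepsilon(x)}$ operation is $1$-Lipschitz with respect to the sup-norm on functions, $|\widetilde\cR_P(\bar f,\varepsilon)-\widetilde\cR_P(f^\star,\varepsilon)| \le \Lip^1(\ell)\|\bar f-f^\star\|_\infty \lesssim (K/\log^\gamma K)^{-\alpha/(d+1)}=\cE_{app}$; this also implicitly requires $\bar f$ to map into a range controlled by Assumption \ref{ass1}, which can be arranged by truncation without harming the network size. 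The residual term $n^{-\min\{1/2,\alpha/d\}}\log^{c(\alpha,d)}n$ is the statistical error of estimating the population adversarial risk of a single $O(\sqrt n)$-bounded Lipschitz function (the target or its approximant) from $n$ samples, controlled by a one-function Bernstein bound whose variance/range interplay produces the $\min\{1/2,\alpha/d\}$ exponent and the $\log$ factor exactly in the borderline case $d=2\alpha$.

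Finally, to get \eqref{eq_errorbound} I would plug in the prescribed rates. With $K\asymp n^{(d+1)/(2d+3\alpha)}$, the approximation term becomes $\cE_{app}\asymp (K/\log^\gamma K)^{-\alpha/(d+1)} \asymp n^{-\alpha/(2d+3\alpha)}(\log n)^{\gamma\alpha/(d+1)}$; the $K\varepsilon n^{-1}$ term becomes $n^{(d+1)/(2d+3\alpha)-1}\varepsilon = n^{-(d+3\alpha-1)/(2d+3\alpha)}\varepsilon$; and with $WL\asymp n^{(2d+\alpha)/(4d+6\alpha)}$ the network-complexity term becomes $n^{(2d+\alpha)/(4d+6\alpha)-1/2}\sqrt{\log n}\,\sqrt{\log(W^2L)} \asymp n^{-\alpha/(4d+6\alpha)}\cdot\mathrm{polylog} = n^{-\alpha/(2d+3\alpha)}\cdot n^{-\alpha/(4d+6\alpha)}\cdot\mathrm{polylog}$, which is of smaller order than $\cE_{app}$, so it is absorbed; and $n^{-\min\{1/2,\alpha/d\}}\log^{c(\alpha,d)}n$ is also dominated since $\alpha/d \ge \alpha/(2d+3\alpha)$ and $1/2 \ge \alpha/(2d+3\alpha)$. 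Collecting surviving terms yields $\tE(\widehat f_n,\varepsilon)\lesssim n^{-(d+3\alpha-1)/(2d+3\alpha)}\varepsilon + n^{-\alpha/(2d+3\alpha)}(\log n)^{\xi}$ with $\xi=\max\{1,\gamma\alpha/(d+1)\}$, where the $1$ in the max accounts for the stray $\sqrt{\log n}$ factors. The hard part will be the generalization bound for the \emph{adversarial} loss class: one must show that the inner supremum $\sup_{x'\in B_\varepsilon(x)}$ is compatible with the chaining/covering machinery (measurability of $T^\star$ from Lemma \ref{lem:nat} is what makes the adversarial loss a legitimate measurable function, and the fact that the max of a family of $L$-Lipschitz functions is $L$-Lipschitz is what preserves covering numbers) and that the norm constraint $\kappa(\theta)\le K$ is the right handle to bound both the Lipschitz constant and the covering entropy of $\cN\cN(W,L,K)$ — keeping the $K$-dependence sharp enough that it only appears in the innocuous $K\varepsilon n^{-1}$ term rather than inflating the main complexity term.
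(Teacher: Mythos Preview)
Your decomposition has a sign error at the outset: since $\inf_{\cH^{\alpha}\cup\cN\cN}\widetilde\cR_P\le\inf_{\cH^{\alpha}}\widetilde\cR_P$, the excess risk $\tE(\widehat f_n,\varepsilon)$ is \emph{at least} $\widetilde\cR_P(\widehat f_n,\varepsilon)-\inf_{\cH^{\alpha}}\widetilde\cR_P(f,\varepsilon)$, not at most, so your displayed inequality does not bound $\tE$. The paper handles this by writing $\tE=\max\{\cE_1,\cE_2\}$ with $\cE_1=\widetilde\cR_P(\widehat f_n,\varepsilon)-\widetilde\cR_P(f^{\star},\varepsilon)$ and $\cE_2=\widetilde\cR_P(\widehat f_n,\varepsilon)-\inf_{f\in\cN\cN(W,L,K)}\widetilde\cR_P(f,\varepsilon)$; your chain covers only $\cE_1$, and you must add the (easy) bound $\cE_2\le 2\sup_{f\in\cN\cN(W,L,K)}|\widetilde\cR_P(f,\varepsilon)-\widetilde\cR_{P_n}(f,\varepsilon)|$.

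Two of the mechanisms you name are not what actually produce the stated terms. First, the $K\varepsilon n^{-1}$ contribution does not come from a Bernstein bound on the pointwise slack $\tilde\ell-\ell\le\Lip^1(\ell)K\varepsilon$ of Lemma~\ref{lem:Req}; that route only yields a $K\varepsilon$ term. In the paper it comes from a $\tau$-net of $B_\varepsilon(0)$: replacing $\sup_{\bdelta\in B_\varepsilon(0)}\ell(f(\bx+\bdelta),y)$ by $\max_j\ell(f(\bx+\bdelta_j),y)$ costs $\Lip^1(\ell)K\tau$, the finite-max class is then covered via the \emph{finite-sample} (pseudo-dimension) covering number of $\cN\cN(W,L)$ evaluated on the extended sample set $\{\bx_i+\bdelta_j\}$ of size $nM_\tau$, and the choice $\tau\asymp\varepsilon/n$ gives $K\varepsilon n^{-1}$ while keeping $\log(nM_\tau)\lesssim\log n$. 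Your proposed shortcut (``an $L^\infty$ cover of the network class transfers to the adversarial loss class'') would require a \emph{sup-norm} cover of $\cN\cN(W,L,K)$, whereas the available bound via pseudo-dimension is only for finite-sample $L_\infty(P_n)$ covers; the discretization of $B_\varepsilon$ is precisely what bridges this. Second, the term $n^{-\min\{1/2,\alpha/d\}}\log^{c(\alpha,d)}n$ is not a one-function Bernstein bound (a single bounded function always gives $n^{-1/2}$, with no $\alpha/d$ appearing). It is the Rademacher complexity of the adversarial loss class indexed by the \emph{entire H\"older ball} $\cH^{\alpha}$, obtained from the Kolmogorov--Tikhomirov entropy $\log\mathcal N(u,\cH^{\alpha},\|\cdot\|_\infty)\lesssim u^{-d/\alpha}$ through the Dudley integral; it enters because the paper's $\cE_1$ decomposition passes through $\widetilde\cR_{P_n}(f^{\star},\varepsilon)-\widetilde\cR_P(f^{\star},\varepsilon)$ with $f^{\star}\in\cH^{\alpha}$, rather than through $\bar f\in\cN\cN(W,L,K)$ as you do.
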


	The upper bound \eqref{eq_qq}
	is determined by the smoothness property of the H\"{o}lder space,  the  structure of the estimation function class $\cN\cN(W,L,K)$,  the adversarial  attack level $\varepsilon$, and sample size $n$.
	There is a trade-off between the two errors. Specifically, $ \cE_{gen}$  increases with the complexity of $\cN\cN(W,L,K)$,  with larger $W,L$ and $K$ leading to a larger upper bound. On the other hand, as
	$K$ becomes larger,  the error $\cE_{app}$ decreases.  To achieve the best error rate, we balance the trade-off between the two errors and show the  rate can reach $n^{-\alpha/(2d+3\alpha)}$  up to a logarithmic factor for suitable chosen $\varepsilon$.

The proposed analysis
method
is applicable to
the  settings with
different models, loss functions, estimation function classes, and adversarial attacks.  For example, we can apply the method to the setting where a general $\ell_r$ adversarial attack is used.  The  upper bounds on  the corresponding generalization error and approximation error can
be obtained based on  the results on Rademacher complexity of general adversarial loss function classes \citep{Awasthi20a,mustafa22a} and  the results on approximation power of different estimation function classes  such as the deep neural networks \citep{Yarotsky2018,lu2021deep,jiao2022approximation}.

The result \eqref{eq_errorbound}
show how the bounds for the
 adversarial excess risk
depends on the adversarial attack level $\varepsilon$,
where $\varepsilon$ is allowed to vary with $n$.  Let $e_n=n^{(d+2\alpha-1)/(2d+3\alpha)}.$ When $\varepsilon =O(e_n)$, the error rate  can reach $n^{-\alpha/(2d+3\alpha)}$  up to a logarithmic factor. However, if $\varepsilon$ grows faster than $e_n$, the error rate of $\tE(\widehat{f}_n,\varepsilon)$ is dominated by the rate of $\varepsilon.$ Moreover, the convergence of $\tE(\widehat{f}_n,\varepsilon)$ cannot be guaranteed  when $\varepsilon$ grows faster than $n^{ (d+3\alpha-1)/(2d+3\alpha)}.$

As mentioned above,  the error rate of the adversarial excess risk can reach $n^{-\alpha/(2d+3\alpha)}$  up to a logarithmic factor when $\varepsilon$ is appropriately selected. Here we only require  the Lipschitz property of the loss function.  We will further show in Section \ref{sec_reg} that the error rate can be improved to $n^{-2\alpha/(2d+5\alpha)}$  up to a logarithmic factor when using the
quadratic loss, where the
improvement is due to an improved approximation error bound.

Some recent papers have studied the convergence properties of deep neural network under the excess risk  \eqref{eq_excessrisk},  where the data are naturally unperturbed \citep{schmidt2020nonparametric,bauer2019deep,jiao2021deep}. The results are generally established under certain smoothness assumption on the target function. And it is typically assumed that the target function is in a H\"{o}lder class with a smoothness index $\alpha.$ The results show that the deep neural network estimation could achieve the optimal minimax rate $n^{-2\alpha/(d+2\alpha)}$ established by \cite{stone1982optimal}. Though the structures of neural networks vary in these works, which include  different choices of width, depth, and activation functions,  they make no constraint on the Lipschitz property of neural networks.   \cite{jiao2022approximation} investigated the approximation properties of FNNs with norm constraints.    Intuitively, a norm constrained neural network class would be smaller in size compared to an unconstrained neural network class with the same structure. Therefore, the benefit of the Lipschitz property comes at the cost of losing the approximation power, which would lead to larger approximation errors.  This is demonstrated in \cite{jiao2022approximation}, where the error rate of the excess risk only reaches the rate  $n^{-\alpha/(d+2\alpha+1)}$ up to a logarithmic factor.
The above discussion is summarized in Table \ref{table_ratecom}.

%
%
\begin{table}[H]
	\centering
	\caption{\label{table_ratecom}Comparison of 
error bounds between our result and some related results (up to a logarithmic factor).}
	\begin{tabular}{llll}
		\toprule
		&	Risk & Estimation function class&  Error bound\\
		\midrule
	\tiny{	\cite{schmidt2020nonparametric,bauer2019deep,jiao2021deep}}&	natural & FNNs& $n^{-2\alpha/(d+2\alpha)}$\\
		\cite{jiao2022approximation}&	natural &  FNNs with norm constraints&$n^{-\alpha/(d+2\alpha+1)}$\\
		This paper &		adversarial & FNNs with norm constraints& $n^{-2\alpha/(2d+5\alpha)}$\\
		\bottomrule
	\end{tabular}
\end{table}

\subsection{Adversarial robustness can hurt accuracy}
We also evaluate the performance of the adversarial estimator under some other risks.
We first consider the natural risk
and  study the excess risk defined by
\begin{equation}\label{eq_excessrisk}
\cE(\widehat{f}_n) = \cR_{P}(\widehat{f}_n) -\inf_{f\in \mathcal{H}^{\alpha}} \cR_{P}(f).
\end{equation}

\begin{corollary}\label{cor1}
	Suppose the
	conditions of Theorem \ref{thm:main1} are satisfied and $\alpha\geq 1$,
	then   for  any adversarial estimator $\widehat{f}_n$  in  \eqref{eq:esti}, we have
	\[
	\cE(\widehat{f}_n)\lesssim n^{-\alpha/(2d+3\alpha)}\log n^{\xi}+\varepsilon,
	\]
	where $\xi=\max\{1,\gamma\alpha/(d+1)\}$.
\end{corollary}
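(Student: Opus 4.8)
The plan is to transfer the bound on the adversarial excess risk $\tE(\widehat f_n,\varepsilon)$ supplied by Theorem~\ref{thm:main1} to the natural excess risk $\cE(\widehat f_n)$, at the cost of an additive $O(\varepsilon)$ term coming from the natural-versus-adversarial gap quantified in Lemma~\ref{lem:Req}. First I would use the trivial inequality $\cR_P(\widehat f_n)\le\widetilde\cR_P(\widehat f_n,\varepsilon)$ (the left-hand part of Lemma~\ref{lem:Req}, which holds because $\bx\in B_\varepsilon(\bx)$ so the adversarial supremum dominates the unperturbed loss), then write $\widetilde\cR_P(\widehat f_n,\varepsilon)=\tE(\widehat f_n,\varepsilon)+\inf_{f\in\cH^\alpha\cup\cN\cN(W,L,K)}\widetilde\cR_P(f,\varepsilon)$ and drop the neural-network part of the union to obtain $\widetilde\cR_P(\widehat f_n,\varepsilon)\le\tE(\widehat f_n,\varepsilon)+\inf_{f\in\cH^\alpha}\widetilde\cR_P(f,\varepsilon)$.

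The next step is to bound $\inf_{f\in\cH^\alpha}\widetilde\cR_P(f,\varepsilon)\le\inf_{f\in\cH^\alpha}\cR_P(f)+C\varepsilon$ for a constant $C$. Here I apply the right-hand inequality of Lemma~\ref{lem:Req}, namely $\widetilde\cR_P(f,\varepsilon)\le\cR_P(f)+\Lip^1(\ell)\Lip(f)\varepsilon$, at each fixed $f\in\cH^\alpha$. The hypothesis $\alpha\ge1$ is precisely what makes this legitimate, since it yields a uniform Lipschitz bound over the H\"older class: if $r\ge1$ the first-order partials satisfy $\sup_{\bx}|\partial_i f(\bx)|\le1$, so integrating along a segment gives $\Lip(f)\le d$ (recall $\Lip$ is taken with respect to $\|\cdot\|_\infty$, and $\|\bx_1-\bx_2\|_1\le d\|\bx_1-\bx_2\|_\infty$); if $r=0$ then $\beta=1$ and the H\"older seminorm condition gives $\Lip(f)\le1$ directly. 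Combined with $\Lip^1(\ell)<\infty$ from Assumption~\ref{ass2}, the constant $C:=d\,\Lip^1(\ell)$ is finite and $f$-independent, so taking the infimum over $\cH^\alpha$ (the term $C\varepsilon$ does not depend on $f$) produces the claimed estimate.

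Chaining the three displays gives $\cE(\widehat f_n)=\cR_P(\widehat f_n)-\inf_{f\in\cH^\alpha}\cR_P(f)\le\tE(\widehat f_n,\varepsilon)+C\varepsilon$. I would then substitute the rate \eqref{eq_errorbound} from Theorem~\ref{thm:main1}, that is $\tE(\widehat f_n,\varepsilon)\lesssim n^{-(d+3\alpha-1)/(2d+3\alpha)}\varepsilon+n^{-\alpha/(2d+3\alpha)}\log n^\xi$. Since $d\ge1$ and $\alpha\ge1$, the exponent $(d+3\alpha-1)/(2d+3\alpha)$ is positive, hence $n^{-(d+3\alpha-1)/(2d+3\alpha)}\varepsilon\le\varepsilon$; absorbing this into the $C\varepsilon$ term yields $\cE(\widehat f_n)\lesssim n^{-\alpha/(2d+3\alpha)}\log n^\xi+\varepsilon$, which is exactly the assertion of the corollary.

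The argument is short and presents no genuine obstacle, being a direct consequence of Theorem~\ref{thm:main1} and Lemma~\ref{lem:Req}. The only point that requires care is the uniform Lipschitz estimate over $\cH^\alpha$: Lemma~\ref{lem:Req} cannot be invoked at a fixed competitor $f$ without first knowing $\Lip(f)<\infty$, and it is exactly the standing assumption $\alpha\ge1$ that secures this, so the two cases $r\ge1$ and $r=0$ (forcing $\beta=1$) should be spelled out explicitly.
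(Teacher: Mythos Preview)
Your proposal is correct and follows essentially the same route as the paper's proof: apply the left inequality of Lemma~\ref{lem:Req} to $\widehat f_n$, apply the right inequality uniformly over $\cH^\alpha$ (which is where $\alpha\ge1$ enters), and invoke the rate from Theorem~\ref{thm:main1}, absorbing the $n^{-(d+3\alpha-1)/(2d+3\alpha)}\varepsilon$ term into the $O(\varepsilon)$ term. Your treatment is in fact slightly more careful than the paper's, which writes the additive gap as $\Lip^1(\ell)\varepsilon$ without displaying the uniform bound $\Lip(f)\le d$ over $\cH^\alpha$; your case split $r\ge1$ versus $r=0,\beta=1$ makes this explicit.
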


Corollary \ref{cor1} shows that
the upper bound for the excess risk of the adversarial estimator is not guaranteed to converge.
The increase in the upper bound for the excess risk becomes significant when the adversarial robustness reaches a certain level. Previous studies have mostly focused on specific scenarios and made certain assumptions on the data distribution. For instance, \cite{Javanmard_classification, Javanmard_linear} analyzed the trade-offs in linear regression and binary classification with linear classifier, assuming the data was normally distributed. However, a comprehensive analysis of this problem is still lacking. We provide an upper bound for the excess risk that increases with the adversarial robustness level. Our result sheds light on the theoretical understanding of the trade-offs, but a complete analysis also requires lower bounds for the excess risk.

We
now consider the local worst-case risk.
Specifically, the local worst-case risk with $1$-th Wasserstein distance is defined by
\[
\cR_{P,1}(f,\varepsilon)=\sup_{Q:W_1(Q,P)\leq\varepsilon}\cR_{Q}(f),
\]
where the distribution $Q$ runs over an uncertainty set around the data generating distribution $P$. The excess risk with respect to the local worst-case risk is defined by
\[
\cE_{1}(\widehat{f}_n,\varepsilon) = \cR_{P,1}(\widehat{f}_n,\varepsilon)-\inf_{f\in \mathcal{H}^{\alpha}\cup \cN\cN(W,L,K)}\cR_{P,1}(f,\varepsilon).
\]
\begin{corollary}\label{cor2}
	Suppose the  conditions of Theorem \ref{thm:main1} are satisfied and $\Lip(\ell)<\infty,$
	then   for any adversarial estimator $\widehat{f}_n$ in \eqref{eq:esti}, we have
	\[
	\cE_{1}(\widehat{f}_n,\varepsilon)\lesssim n^{-\alpha/(2d+3\alpha)}\log n^{\xi}+K \varepsilon,
	\]
	where $\xi=\max\{1,\gamma\alpha/(d+1)\}$.
\end{corollary}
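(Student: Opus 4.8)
The plan is to sandwich the local worst‑case risk $\cR_{P,1}(\cdot,\varepsilon)$ between the natural risk $\cR_P(\cdot)$ and the adversarial risk $\widetilde\cR_P(\cdot,\varepsilon)$, and then invoke Theorem~\ref{thm:main1}. First I would record two comparisons valid for any continuous $f$ with $\Lip(f)<\infty$. For the lower one, since $W_1\le W_\infty$, Lemma~\ref{lem:nat} gives $W_1(P^{\star},P)\le W_\infty(P^{\star},P)\le\varepsilon$, so the shifted distribution $P^{\star}$ lies in the $W_1$‑ball around $P$, and hence
\[
\widetilde\cR_P(f,\varepsilon)=\cR_{P^{\star}}(f)\le\sup_{Q:\,W_1(Q,P)\le\varepsilon}\cR_Q(f)=\cR_{P,1}(f,\varepsilon)
\]
(equivalently, one may combine Theorem~\ref{lem:disat} with the inclusion $\Gamma_{\varepsilon}\subseteq\{Q:W_1(Q,P)\le\varepsilon\}$, the latter obtained by coupling the conditionals $Q_y$ and $P_y$ optimally for $W_\infty$). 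For the upper one, the map $\bz=(\bx,y)\mapsto\ell(f(\bx),y)$ is $d_{\cZ}$‑Lipschitz with constant at most $\Lip(\ell)\max\{\Lip(f),1\}$, because
\[
|\ell(f(\bx_1),y_1)-\ell(f(\bx_2),y_2)|\le\Lip^1(\ell)\Lip(f)\|\bx_1-\bx_2\|_\infty+\Lip(\ell)|y_1-y_2|
\]
and $\Lip^1(\ell)\le\Lip(\ell)$; Kantorovich--Rubinstein duality for $W_1$ then gives $\cR_Q(f)-\cR_P(f)\le\Lip(\ell)\max\{\Lip(f),1\}\,W_1(Q,P)$ for every $Q$, hence $\cR_{P,1}(f,\varepsilon)\le\cR_P(f)+\Lip(\ell)\max\{\Lip(f),1\}\,\varepsilon$. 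Finally $\cR_P(f)\le\widetilde\cR_P(f,\varepsilon)$ by Lemma~\ref{lem:Req}.

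Next I would assemble these into the excess‑risk bound. Applying the upper comparison to $\widehat f_n$, which has $\Lip(\widehat f_n)\le K$, and then $\cR_P\le\widetilde\cR_P$,
\[
\cR_{P,1}(\widehat f_n,\varepsilon)\le\cR_P(\widehat f_n)+c\,K\varepsilon\le\widetilde\cR_P(\widehat f_n,\varepsilon)+c\,K\varepsilon,
\]
with $c$ depending only on $\Lip(\ell)$ (using $K\ge 1$ to absorb $\max\{K,1\}=K$). Applying the lower comparison to every $f\in\mathcal{H}^{\alpha}\cup\cN\cN(W,L,K)$ --- all of which are continuous, with $\Lip(f)<\infty$ on $\mathcal{H}^{\alpha}$ because $\alpha\ge1$ --- gives $\cR_{P,1}(f,\varepsilon)\ge\widetilde\cR_P(f,\varepsilon)$ for each such $f$, so $\inf_{f\in\mathcal{H}^{\alpha}\cup\cN\cN(W,L,K)}\cR_{P,1}(f,\varepsilon)\ge\inf_{f\in\mathcal{H}^{\alpha}\cup\cN\cN(W,L,K)}\widetilde\cR_P(f,\varepsilon)$. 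Subtracting the last two displays and recalling the definition of $\tE$ yields
\[
\cE_{1}(\widehat f_n,\varepsilon)\le\tE(\widehat f_n,\varepsilon)+c\,K\varepsilon .
\]

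Finally I would plug in Theorem~\ref{thm:main1} with $K\asymp n^{(d+1)/(2d+3\alpha)}$ and $WL\asymp n^{(2d+\alpha)/(4d+6\alpha)}$, giving $\tE(\widehat f_n,\varepsilon)\lesssim n^{-(d+3\alpha-1)/(2d+3\alpha)}\varepsilon+n^{-\alpha/(2d+3\alpha)}\log n^{\xi}$. Since $(d+1)/(2d+3\alpha)>-(d+3\alpha-1)/(2d+3\alpha)$ for all $d\ge1,\alpha\ge1$, the term $c\,K\varepsilon\asymp n^{(d+1)/(2d+3\alpha)}\varepsilon$ dominates $n^{-(d+3\alpha-1)/(2d+3\alpha)}\varepsilon$, and we conclude $\cE_{1}(\widehat f_n,\varepsilon)\lesssim n^{-\alpha/(2d+3\alpha)}\log n^{\xi}+K\varepsilon$, as claimed.

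I expect the only non‑routine part to be the upper comparison: one must recognize that the hypothesis $\Lip(\ell)<\infty$ --- stronger than the $\Lip^1(\ell)<\infty$ used in the rest of the paper --- is exactly what controls the label‑coordinate displacement of $Q$ under $W_1$, and one must apply Kantorovich--Rubinstein duality with some care, in particular checking that $\ell(f(X),Y)$ is integrable under every $Q$ in the $W_1$‑ball, which follows from the global Lipschitz estimate above together with the finite first moment of $P$ guaranteed by Assumption~\ref{ass1}. Everything else reduces to a termwise comparison of the three risks and a direct appeal to Theorem~\ref{thm:main1}.
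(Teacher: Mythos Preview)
Your proposal is correct and follows essentially the same approach as the paper: both use Lemma~\ref{lem:nat} together with $W_1\le W_\infty$ to obtain $\widetilde\cR_P(f,\varepsilon)\le\cR_{P,1}(f,\varepsilon)$, and both use Kantorovich--Rubinstein duality with the $d_{\cZ}$-Lipschitz constant of $\bz\mapsto\ell(f(\bx),y)$ (bounded via $\Lip(\ell)$ and $\Lip(f)$) to control $\cR_{P,1}(\widehat f_n,\varepsilon)$ from above. The only cosmetic difference is that you route the upper comparison through $\cR_P$ and then invoke $\cR_P\le\widetilde\cR_P$, whereas the paper compares $\cR_Q$ directly to $\cR_{P^{\star}}=\widetilde\cR_P$ via the triangle inequality $W_1(Q,P^{\star})\le 2\varepsilon$; both yield the reduction $\cE_1\le\tE+O(K\varepsilon)$ and finish with Theorem~\ref{thm:main1}.
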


\section{Examples} 
\label{sec_app}
In this section, we consider the more specific settings of classification and regression and apply
Theorem \ref{thm:main1} to classification and regression problems.

\subsection{Classification}
Suppose that $(X,Y)$ follows an unknown distribution $P$ on $ \cX\times \{-1,1\}$.
A basic goal of binary classification is to predict the label $Y,$  when we only observe a predictor $X$ in a
random pair $(X,Y) \sim P$
A commonly used loss
function
is the classification loss
$\ell_{\operatorname{class}}  :\Rbb\times \{-1,1\}\mapsto [0,\infty),$
defined by $
\ell_{\operatorname{class}}(u,y)=\1\big\{\sign( u)y\leq 0\big\},$
where   $\sign(u) =1$ when $u\geq 0$, and $\sign(u) =-1$, otherwise.
Let $f:\cX\mapsto \Rbb$  be a score function, and let the associated binary classifier be $\sign f(\cdot)$.   
The natural classification risk and the adversarial classification risk of the score function $f$ are
\begin{align*}
\cR_{\operatorname{class},P}(f)&=
\Ebb_{(X,Y)\sim P} \1\big\{\sign f(X)\ne Y\big\},
\\\widetilde\cR_{\operatorname{class},P}(f,\varepsilon)&=
\Ebb_{(X,Y)\sim P} \big[\sup_{ X^{\prime} \in B_{\varepsilon}(X) } \1\left\{\sign f(X^{\prime} )\ne Y\right\} \big].
\end{align*}
Let $\eta(\bx)=P(Y=1| X=\bx)$.  Define $c_{\varepsilon}(\bx,\bx^{\prime})=\1\{\|\bx-\bx^{\prime}\|_{\infty}>2\varepsilon\}$    and let the corresponding optimal transport cost $D_{\varepsilon}$  be defined by
\begin{align*}
D_{\varepsilon}(P,Q)=  \inf_{\pi\in\Pi(P,Q)}\Ebb_{(X_1,X_2)\sim \pi}\big[c_{\varepsilon}(X_1,X_2) \big] .
\end{align*}
The minimum value of the natural classification risk is given by
\begin{align*}
\cR_{\operatorname{class},P}^{\star}= \inf_{f \text{ measurable}}\cR_{\operatorname{class},P}(f)
 =\Ebb\big[\min\{\eta(X),1-\eta (X)\}\big],
\end{align*}
which is reached when $f$ is the Bayes
classifier, i.e.,    $f(\bx)=\sign(2\eta(\bx)-1)$ \citep{svm2008}.
The minimum value of the adversarial classification risk
can be expressed as
\begin{displaymath}\label{eq_optimal}
\widetilde\cR_{\operatorname{class},P}^{\star}(\varepsilon)= \inf_{f\text{ measurable}}\widetilde\cR_{\operatorname{class},P}(f,\varepsilon)=\frac{1}{T+1}	\big[1- \inf_{Q\in\cP(\cX):Q\preceq T P_{0}}D_{\varepsilon}(Q,P_{1})\big],
\end{displaymath}
where $P_1=P_{X|Y=1}, P_{0}=P_{X|Y=-1},$ and $T=P(Y=-1)/P(Y=1)$ \citep[Theorem 6.2]{Pydi2021}.

The natural classification loss   and its adversarial  counterpart are
non-smooth and non-convex. Many surrogate losses have been considered in the context of standard classification. We specifically focus on  margin-based loss, where    a margin loss function $\phi$ exists such that the loss function satisfies    $\ell(u,y)=\phi(uy), \ (u,y)\in \Rbb\times \{-1,1\}.$
In general, the margin loss is selected to have a property called
consistency, which is satisfied by  a large family of convex losses \citep{svm2008}.  However, the adversarial version of these margin losses may not show the same consistency properties with respect to the adversarial classification loss. Moreover, \cite{meunier2022towards}  showed that no convex margin loss can be calibrated  in the adversarial setting. Consequently, it is challenging to study consistency in the general adversarial setting.

Let $C_{\operatorname{class} }(\eta,\bx,f)=\1 \{f(\bx)< 0\} \eta +\1 \{f(\bx)\geq  0\} (1-\eta)$ and  $C_{\operatorname{class}}^{\star}(\eta,\bx) =\min\{\eta, 1-\eta\}$. Let $C_{\phi}(\eta,\bx,f)=\phi(f(\bx) )  \eta  + \phi(-f(\bx))(1-\eta )$ and $C_{\phi}^{\star}(\eta,\bx)= \inf_{\alpha}  \phi(\alpha) \eta + \phi(-\alpha) (1-\eta ).$  Define $\cR_{P}^{\star}=\inf_{f\text{ measurable}}\cR_{P}(f)$.
\begin{assumption}\label{assumphi1}
	For any $\eta\in[0,1],$ $\bx\in\cX$ and measurable function $f$, 	
	\[C_{\phi}(\eta,\bx,f) -C_{\phi}^{\star}(\eta,\bx ) \geq a(C_{\operatorname{class} }(\eta,\bx,f) -C_{\operatorname{class}}^{\star}(\eta,\bx))\] holds for a positive constant $a$.
\end{assumption}
\begin{assumption}\label{assumphi2}
	There exist   positive constants  $c$   and $b$ such that \[\phi(0)  -C_{\phi}^{\star}(\eta,\bx)\geq b(1-C_{\operatorname{class}}^{\star}(\eta,\bx)) \] when $|\eta  -1/2|>c.$
\end{assumption}
Assumptions \ref{assumphi1} and \ref{assumphi2} can be satisfied by some common margin losses, such as the hinge loss.
\begin{corollary}\label{dd}
	Suppose the  conditions of Theorem \ref{thm:main1}  are satisfied and  $\phi$ is a continuous decreasing  margin function satisfying Assumptions \ref{assumphi1} and \ref{assumphi2}.
	Assume $|\eta(\bx)  -1/2|>c$ for any $\bx\in\cX$ and $\inf_{f\in\cH^{\alpha}} \cR_{P}(f )=  \cR_{P}^{\star}.$
	Then,
	\[
	\widetilde\cR_{\operatorname{class},P}(\widehat{f}_n,\varepsilon) -\widetilde\cR_{\operatorname{class},P}^{\star}(\varepsilon)\lesssim  n^{-\alpha/(2d+3\alpha)}\log n^{\xi}+\varepsilon.
	\]
\end{corollary}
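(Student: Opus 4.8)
The plan is to transfer the general adversarial excess risk bound of Theorem \ref{thm:main1} (which applies to Lipschitz losses) to the non-Lipschitz classification loss $\ell_{\operatorname{class}}$ by passing through the surrogate margin loss $\phi$. First I would apply Theorem \ref{thm:main1} (or rather its Lipschitz-loss version, taking $\ell(u,y)=\phi(uy)$, which is Lipschitz in its first argument since $\phi$ is a continuous decreasing margin function with bounded Lipschitz constant on the relevant range) to obtain
\[
\widetilde\cR_{\phi,P}(\widehat f_n,\varepsilon)-\inf_{f\in\cH^\alpha\cup\cN\cN(W,L,K)}\widetilde\cR_{\phi,P}(f,\varepsilon)\lesssim n^{-\alpha/(2d+3\alpha)}\log n^{\xi}+\varepsilon,
\]
under the stated choices of $W,L,K$. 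This controls the $\phi$-adversarial excess risk; the task is then to relate this to the $\ell_{\operatorname{class}}$-adversarial excess risk on both ends.

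For the lower (benchmark) end, I would show $\inf_{f\in\cH^\alpha\cup\cN\cN(W,L,K)}\widetilde\cR_{\phi,P}(f,\varepsilon)$ is close to $\widetilde\cR^\star_{\operatorname{class},P}(\varepsilon)$. Using the hypothesis $\inf_{f\in\cH^\alpha}\cR_P(f)=\cR^\star_P$ together with Lemma \ref{lem:Req} applied to the smooth surrogate (to pass between natural and adversarial $\phi$-risk at cost $O(\Lip^1(\phi)\Lip(f)\varepsilon)$), plus the fact that a near-Bayes classifier in $\cH^\alpha$ can be chosen with controlled Lipschitz constant, one gets that the $\phi$-benchmark exceeds the $\ell_{\operatorname{class}}$-benchmark by at most $O(\varepsilon)$ plus a term controlled by consistency. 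For the upper end, I would invoke a calibration-type inequality: Assumptions \ref{assumphi1}–\ref{assumphi2} are exactly what is needed to convert excess $\phi$-conditional risk into excess $\ell_{\operatorname{class}}$-conditional risk, and the separation condition $|\eta(\bx)-1/2|>c$ for all $\bx$ ensures Assumption \ref{assumphi2} is active everywhere, so that $\widetilde\cR_{\operatorname{class},P}(\widehat f_n,\varepsilon)-\widetilde\cR^\star_{\operatorname{class},P}(\varepsilon)\lesssim \widetilde\cR_{\phi,P}(\widehat f_n,\varepsilon)-\widetilde\cR^\star_{\phi,P}(\varepsilon)$ up to constants. Here I would work conditionally at each $(\bx,y)$ inside the supremum over $B_\varepsilon(X)$: since the adversarial risk of either loss equals an expectation over the shifted distribution $P^\star$ (Lemma \ref{lem:nat}) — or more directly over $\Gamma_\varepsilon$ via Theorem \ref{lem:disat} — the pointwise calibration inequality integrates up cleanly.

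Chaining the three pieces gives the claimed bound. The main obstacle I anticipate is the interaction between the supremum over the adversarial ball and the calibration argument: the calibration inequalities in Assumptions \ref{assumphi1}–\ref{assumphi2} are stated pointwise in terms of the conditional quantities $C_\phi(\eta,\bx,f)$ and $C_{\operatorname{class}}(\eta,\bx,f)$, but the adversarial risks involve $\sup_{\bx'\in B_\varepsilon(\bx)}$ before taking the expectation over $(X,Y)\sim P$, so one cannot simply apply the inequality under the expectation. The fix is to use the representation of the adversarial risk as $\cR_{Q^\star}$ for an appropriate worst-case $Q^\star\in\Gamma_\varepsilon$ (Theorem \ref{lem:disat} and the optimal-transport characterization of $\widetilde\cR^\star_{\operatorname{class},P}(\varepsilon)$ from \cite{Pydi2021}), so that both numerator and benchmark are genuine expectations of conditional risks against the same shifted measure, at which point the pointwise calibration bound applies termwise; verifying that the worst-case measures for $\phi$ and for $\ell_{\operatorname{class}}$ can be aligned (or that misalignment costs only $O(\varepsilon)$) is the delicate technical point. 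A secondary nuisance is checking that the near-optimal $\cH^\alpha$ function realizing $\cR^\star_P$ — the Bayes score $\sign(2\eta-1)$ need not be smooth — is handled correctly; this is where the hypothesis $\inf_{f\in\cH^\alpha}\cR_P(f)=\cR^\star_P$ together with the all-$\bx$ separation $|\eta(\bx)-1/2|>c$ does the work, since under strict separation the Bayes classifier is locally constant and well-approximated in $\cH^\alpha$.
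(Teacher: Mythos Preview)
Your high-level plan---apply Theorem~\ref{thm:main1} to the surrogate $\phi$, then transfer to $\ell_{\operatorname{class}}$ via calibration, handling the benchmark shift with Lemma~\ref{lem:Req} and the hypothesis $\inf_{f\in\cH^\alpha}\cR_P(f)=\cR_P^\star$---matches the paper. But the route you sketch for the adversarial calibration step has a real gap. Your idea is to rewrite $\widetilde\cR_{\operatorname{class},P}(\widehat f_n,\varepsilon)=\cR_{\operatorname{class},Q^\star}(\widehat f_n)$ for a worst-case $Q^\star\in\Gamma_\varepsilon$ and then apply the pointwise calibration of Assumption~\ref{assumphi1} under $Q^\star$. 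This fails for two reasons: (i) the calibration inequalities are stated for the $P$-conditional $\eta(\bx)=P(Y=1\mid X=\bx)$, while under $Q^\star$ the relevant object is $\eta_{Q^\star}(\bx)=Q^\star(Y=1\mid X=\bx)$, which need not satisfy $|\eta_{Q^\star}-1/2|>c$, so Assumption~\ref{assumphi2} is unavailable; (ii) even granting Assumption~\ref{assumphi1} for all $\eta$, you end up comparing to $\inf_f\cR_{\phi,Q^\star}(f)$ and $\inf_f\cR_{\operatorname{class},Q^\star}(f)$, and there is no clean way to tie these back to $\cR_P^\star$ or $\widetilde\cR_{\operatorname{class},P}^\star(\varepsilon)$ with only an $O(\varepsilon)$ error. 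You flag the alignment of worst-case measures as ``the delicate technical point,'' but it is not merely delicate---the argument does not close along these lines.

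The paper avoids the distributional representation entirely and instead proves a direct \emph{adversarial} calibration inequality at the level of conditional risks (Lemma~\ref{lem_ineqaulity}). Writing $a_{f,\varepsilon}=\sup_{B_\varepsilon(\bx)}f$, $b_{f,\varepsilon}=\inf_{B_\varepsilon(\bx)}f$ and using that $\phi$ is decreasing, one has $C_{\phi,\varepsilon}(\eta,\bx,f)=\phi(b_{f,\varepsilon})\eta+\phi(-a_{f,\varepsilon})(1-\eta)$ and $C_{\operatorname{class},\varepsilon}(\eta,\bx,f)=\1\{b_{f,\varepsilon}<0\}\eta+\1\{a_{f,\varepsilon}\ge 0\}(1-\eta)$, while $C_{\phi,\varepsilon}^\star=C_\phi^\star$ and $C_{\operatorname{class},\varepsilon}^\star=C_{\operatorname{class}}^\star$ (by \cite{meunier2022towards}). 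A two-case analysis then suffices: if $f$ does not change sign on $B_\varepsilon(\bx)$, the adversarial excess collapses to a natural excess and Assumption~\ref{assumphi1} applies; if $f$ crosses zero, then $C_{\operatorname{class},\varepsilon}-C_{\operatorname{class}}^\star=1-C_{\operatorname{class}}^\star$ and $C_{\phi,\varepsilon}\ge\phi(0)$, so Assumption~\ref{assumphi2} (enabled by $|\eta(\bx)-1/2|>c$) gives the bound. Integrating against $P_X$ yields $\widetilde\cR_P(f,\varepsilon)-\cR_P^\star\ge a\bigl(\widetilde\cR_{\operatorname{class},P}(f,\varepsilon)-\cR_{\operatorname{class},P}^\star\bigr)$; combining with $\inf_{\cH^\alpha}\widetilde\cR_P(\cdot,\varepsilon)\le\cR_P^\star+\Lip^1(\ell)\varepsilon$ (Lemma~\ref{lem:Req}, using $\Lip(f)\le 1$ on $\cH^\alpha$) and $\widetilde\cR_{\operatorname{class},P}^\star(\varepsilon)\ge\cR_{\operatorname{class},P}^\star$ finishes. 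Your secondary worry about the Bayes score not lying in $\cH^\alpha$ is a non-issue: the hypothesis $\inf_{\cH^\alpha}\cR_P=\cR_P^\star$ is used only through this last inequality, with no approximation of $\sign(2\eta-1)$ needed.
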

For the case  where there might be $\eta(\bx)=1/2$,   we show that the natural classification risk of the adversarial estimator  converges to $ \cR_{\operatorname{class},P}^{\star}$ when $\varepsilon$ goes to $ 0.$
\begin{corollary}
	Suppose the  conditions of Theorem \ref{thm:main1}   are satisfied and $\phi$ is a   margin function satisfying Assumptions \ref{assumphi1}. Assume $\inf_{f\in\cH^{\alpha}} \cR_{P}(f )=  \cR_{P}^{\star}.$ Then, \[
	\cR_{\operatorname{class},P}(\widehat{f}_n) -  \cR_{\operatorname{class},P}^{\star} \lesssim  n^{-\alpha/(2d+3\alpha)}\log n^{\xi}+\varepsilon.\]
\end{corollary}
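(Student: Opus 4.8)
The plan is to relate the \emph{natural} classification risk $\cR_{\operatorname{class},P}(\widehat f_n)$ to the surrogate natural risk $\cR_P(\widehat f_n)$ via the calibration inequality in Assumption \ref{assumphi1}, and then control the latter using Corollary \ref{cor1}. First I would apply Assumption \ref{assumphi1} pointwise with $\eta=\eta(\bx)$: integrating the inequality $C_\phi(\eta,\bx,f)-C_\phi^\star(\eta,\bx)\geq a\bigl(C_{\operatorname{class}}(\eta,\bx,f)-C_{\operatorname{class}}^\star(\eta,\bx)\bigr)$ over $X\sim P_X$ gives, after recognizing that $\Ebb[C_\phi(\eta(X),X,f)]=\cR_P(f)$ for a margin loss and $\Ebb[C_\phi^\star(\eta(X),X)]=\cR_P^\star$, and similarly $\Ebb[C_{\operatorname{class}}(\eta(X),X,f)]=\cR_{\operatorname{class},P}(f)$, $\Ebb[C_{\operatorname{class}}^\star(\eta(X),X)]=\cR_{\operatorname{class},P}^\star$, the bound
\[
\cR_{\operatorname{class},P}(\widehat f_n)-\cR_{\operatorname{class},P}^\star \;\leq\; \frac{1}{a}\bigl(\cR_P(\widehat f_n)-\cR_P^\star\bigr).
\]

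Next I would invoke the hypothesis $\inf_{f\in\cH^\alpha}\cR_P(f)=\cR_P^\star$, which lets me rewrite $\cR_P(\widehat f_n)-\cR_P^\star = \cR_P(\widehat f_n)-\inf_{f\in\cH^\alpha}\cR_P(f) = \cE(\widehat f_n)$, the excess surrogate risk studied in Corollary \ref{cor1}. Since the conditions of Theorem \ref{thm:main1} are assumed and $\alpha\geq 1$ holds there, Corollary \ref{cor1} applies directly and yields $\cE(\widehat f_n)\lesssim n^{-\alpha/(2d+3\alpha)}\log n^{\xi}+\varepsilon$. Chaining the two displays and absorbing the constant $1/a$ into $\lesssim$ gives the claimed bound $\cR_{\operatorname{class},P}(\widehat f_n)-\cR_{\operatorname{class},P}^\star\lesssim n^{-\alpha/(2d+3\alpha)}\log n^{\xi}+\varepsilon$.

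The only subtle point — and the step I would be most careful about — is justifying the identity $\Ebb_{X}[C_{\phi}(\eta(X),X,f)]=\cR_P(f)$ and its $\phi$-minimized and classification-loss analogues; this is the standard conditional-risk decomposition $\cR_P(f)=\Ebb_X\Ebb_{Y|X}[\phi(f(X)Y)]=\Ebb_X[\phi(f(X))\eta(X)+\phi(-f(X))(1-\eta(X))]$ for margin losses, and the corresponding one for $\ell_{\operatorname{class}}$ recalling $C_{\operatorname{class}}^\star(\eta,\bx)=\min\{\eta,1-\eta\}$ matches the Bayes risk expression $\cR_{\operatorname{class},P}^\star=\Ebb[\min\{\eta(X),1-\eta(X)\}]$ stated earlier. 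Note that, unlike the previous corollary, here no assumption $|\eta(\bx)-1/2|>c$ is needed and Assumption \ref{assumphi2} is not used, because we are only comparing natural (not adversarial) classification risk to its Bayes value, so the pointwise calibration inequality of Assumption \ref{assumphi1} alone suffices. One should also note that $\widehat f_n$ need not lie in $\cH^\alpha$, but this causes no problem since the calibration inequality holds for \emph{every} measurable $f$.
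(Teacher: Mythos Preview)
Your proposal is correct and follows essentially the same route as the paper. Both arguments integrate the pointwise calibration inequality of Assumption~\ref{assumphi1} to obtain $a\bigl(\cR_{\operatorname{class},P}(\widehat f_n)-\cR_{\operatorname{class},P}^\star\bigr)\le \cR_P(\widehat f_n)-\cR_P^\star$, then use the hypothesis $\inf_{f\in\cH^\alpha}\cR_P(f)=\cR_P^\star$ together with the surrogate excess risk bound; the only cosmetic difference is that you invoke Corollary~\ref{cor1} as a black box, whereas the paper re-derives its content inline via Lemma~\ref{lem:Req} and Theorem~\ref{thm:main1}.
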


\subsection{Regression}\label{sec_reg}
Consider a nonparametric regression model
\begin{equation}\label{model_re}
Y=f_0(X)+\eta,
\end{equation}
where $Y\in\cY\subseteq[-1,1]$ is a response, $X\in\cX\subseteq[0,1]^d$ is a $d$-dimensional covariate vector,  $f_0\in\cH^{\alpha}$ is an unknown regression function, and  $\eta$ is an unobservable error satisfying $\Ebb(\eta| X)=0$ and $\Ebb(\eta^2)<\infty.$
Under model \eqref{model_re} and the
quadratic loss $\ell(u,y)= (u-y)^2,$  we denote the corresponding adversarial  estimator \eqref{eq:esti} by $\widehat{f}_n^{ls}$.
We 
measure  the distance between   $\widehat{f}_n^{ls}$ and $f_0$
using  the $L_2(P)$ norm  $\|\cdot\|_2:=\|\cdot\|_{L_2(P_X)}$, that is,  $\|f\|_2=\sqrt{\Ebb|f(X)|^2}$.
First, we derive a new error bound for the adversarial excess risk when using the quadratic loss.
\begin{theorem}\label{thm_linear}
	Suppose $\cH^{\alpha}$  with  $\alpha=r+\beta\geq 1$, where  $r\in\mathbb{N}_0$ and $\beta\in(0,1]$. Let $\gamma=\lceil \log_2 (d+r)\rceil$. Suppose Assumption  \ref{ass1}  holds.
	Suppose  $W\geq c(K/\log^{\gamma}K)^{(2d+\alpha)/(2d+2)}$ for a constant
	$c > 0$ and $L\geq 4\gamma+2.$
	If we select $
	K\asymp n^{(d+1)/(2d+5\alpha)}
	$ and $
	WL \asymp n^{(2d+\alpha)/(4d+10\alpha)} $, then for any adversarial  estimator  $\widehat{f}_n^{ls}$ satisfying $\|\widehat{f}_n^{ls}\|_{\infty}\leq M$ for a sufficiently large constant $M>0$, we have
	\begin{align*}
	\tE(\widehat{f}_n^{ls},\varepsilon)\lesssim  n^{-2\alpha/(2d+5\alpha)}\log n^{\lambda}+n^{(d+1)/(2d+5\alpha)}\varepsilon,
	\end{align*}
	where $\lambda=\max\{1,2\gamma\alpha/(d+1)\}.$
\end{theorem}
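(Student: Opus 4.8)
The plan is to mirror the proof of Theorem~\ref{thm:main1} — bound the adversarial excess risk by a generalization error plus an approximation error — but to sharpen the approximation step by exploiting the bias--variance identity of the quadratic loss under the regression model \eqref{model_re}. All adversarial risks below are well defined by Lemma~\ref{lem:nat}, since neural networks and the quadratic loss are continuous and Assumption~\ref{ass1} holds. Write $\sigma^2=\Ebb[\eta^2]$. Because $\Ebb[\eta\mid X]=0$, we have $\cR_P(f)=\|f-f_0\|_2^2+\sigma^2$ for every measurable $f$, hence $\cR_P(f_0)=\sigma^2=\inf_{f}\cR_P(f)$ and, using $\widetilde\cR_P(f,\varepsilon)\ge\cR_P(f)$, $\inf_{f\in\cH^{\alpha}\cup\cN\cN(W,L,K)}\widetilde\cR_P(f,\varepsilon)\ge\sigma^2$. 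Fixing an approximant $f_{NN}\in\cN\cN(W,L,K)$ of $f_0$, to be specified, this gives
\[
\tE(\widehat{f}_n^{ls},\varepsilon)\le\widetilde\cR_P(\widehat{f}_n^{ls},\varepsilon)-\sigma^2=\mathrm{(I)}+\mathrm{(II)}+\mathrm{(III)}+\mathrm{(IV)},
\]
with $\mathrm{(I)}=\widetilde\cR_P(\widehat{f}_n^{ls},\varepsilon)-\widetilde\cR_{P_n}(\widehat{f}_n^{ls},\varepsilon)$, $\mathrm{(II)}=\widetilde\cR_{P_n}(\widehat{f}_n^{ls},\varepsilon)-\widetilde\cR_{P_n}(f_{NN},\varepsilon)\le0$ by \eqref{eq:esti}, $\mathrm{(III)}=\widetilde\cR_{P_n}(f_{NN},\varepsilon)-\widetilde\cR_P(f_{NN},\varepsilon)$, and $\mathrm{(IV)}=\widetilde\cR_P(f_{NN},\varepsilon)-\cR_P(f_0)$.

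For the generalization part $\mathrm{(I)}+\mathrm{(III)}$ I would reuse the analysis behind $\cE_{gen}$ in Theorem~\ref{thm:main1}. The quadratic loss fails Assumption~\ref{ass2} globally, but on $[-M',M']\times[-1,1]$, where $M'=\max\{M,\|f_0\|_\infty+1\}$ bounds both $\widehat{f}_n^{ls}$ (by hypothesis) and $f_{NN}$, it is bounded by $(M'+1)^2$ and obeys $|\ell(u_1,y)-\ell(u_2,y)|=|u_1-u_2|\,|u_1+u_2-2y|\le 2(M'+1)|u_1-u_2|$, so it has a finite first-argument Lipschitz constant $L_\ell:=2(M'+1)$ there. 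Restricting the supremum in $\mathrm{(I)}$ to $\{f\in\cN\cN(W,L,K):\|f\|_\infty\le M\}$ and applying the Rademacher-complexity bounds for adversarial loss classes used in Theorem~\ref{thm:main1} (via \cite{Awasthi20a,mustafa22a}, with contraction constant $L_\ell$), together with Bernstein's inequality for the i.i.d.\ average in $\mathrm{(III)}$, yields $\mathrm{(I)}+\mathrm{(III)}\lesssim K\varepsilon n^{-1}+WL\sqrt{\log(W^2L)}\,n^{-1/2}\sqrt{\log n}+n^{-\min\{1/2,\alpha/d\}}\log^{c(\alpha,d)}n$, i.e.\ exactly $\cE_{gen}$.

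The genuinely new ingredient is $\mathrm{(IV)}$. Split it as $\big[\widetilde\cR_P(f_{NN},\varepsilon)-\cR_P(f_{NN})\big]+\big[\cR_P(f_{NN})-\cR_P(f_0)\big]$. For the first bracket, the local version of Lemma~\ref{lem:Req} (valid with $L_\ell$ in place of $\Lip^1(\ell)$, since $f_{NN}$ and $Y$ take values in a bounded set, and $\cup_{\bx}B_\varepsilon(\bx)\subseteq[0,1]^d$) together with $\Lip(f_{NN})\le\kappa(\theta)\le K$ gives $\widetilde\cR_P(f_{NN},\varepsilon)-\cR_P(f_{NN})\le L_\ell K\varepsilon\lesssim K\varepsilon$. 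For the second bracket, the bias--variance identity gives $\cR_P(f_{NN})-\cR_P(f_0)=\|f_{NN}-f_0\|_2^2\le\|f_{NN}-f_0\|_\infty^2$ — this is where the approximation error gets squared. Now take $f_{NN}$ from \cite{jiao2022approximation}: since $f_0\in\cH^{\alpha}$ with $\alpha=r+\beta\ge1$ and $W\ge c(K/\log^{\gamma}K)^{(2d+\alpha)/(2d+2)}$, $L\ge4\gamma+2$, there is $f_{NN}\in\cN\cN(W,L,K)$ with $\|f_{NN}-f_0\|_\infty\lesssim(K/\log^{\gamma}K)^{-\alpha/(d+1)}$; since $\|f_0\|_\infty\le1$, clipping it to $[-M',M']$ (a cost of only $O(1)$ in $W,L$ and a constant factor in $K$) keeps $\|f_{NN}\|_\infty\le M'$ and $f_{NN}\in\cN\cN(W,L,K)$. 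Hence $\mathrm{(IV)}\lesssim(K/\log^{\gamma}K)^{-2\alpha/(d+1)}+K\varepsilon=:\cE_{app}$, the improved approximation error.

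It remains to balance the two errors. Taking $W\asymp(K/\log^{\gamma}K)^{(2d+\alpha)/(2d+2)}$ (the smallest width the hypotheses allow) and $L=4\gamma+2$, so $WL\asymp(K/\log^{\gamma}K)^{(2d+\alpha)/(2d+2)}$, and equating $WL\,n^{-1/2}$ with $(K/\log^{\gamma}K)^{-2\alpha/(d+1)}$ forces $K\asymp n^{(d+1)/(2d+5\alpha)}$ and $WL\asymp n^{(2d+\alpha)/(4d+10\alpha)}$ — the stated choices — because $(2d+\alpha)/(2d+2)+2\alpha/(d+1)=(2d+5\alpha)/(2(d+1))$. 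With these, $(K/\log^{\gamma}K)^{-2\alpha/(d+1)}\asymp n^{-2\alpha/(2d+5\alpha)}(\log n)^{2\gamma\alpha/(d+1)}$ and $WL\sqrt{\log(W^2L)}\,n^{-1/2}\sqrt{\log n}\asymp n^{-2\alpha/(2d+5\alpha)}\operatorname{polylog}(n)$, while $K\varepsilon n^{-1}=o(K\varepsilon)$ and $n^{-\min\{1/2,\alpha/d\}}$ is dominated since both $1/2$ and $\alpha/d$ exceed $2\alpha/(2d+5\alpha)$; collecting terms gives $\tE(\widehat{f}_n^{ls},\varepsilon)\lesssim n^{-2\alpha/(2d+5\alpha)}\log n^{\lambda}+n^{(d+1)/(2d+5\alpha)}\varepsilon$ with $\lambda=\max\{1,2\gamma\alpha/(d+1)\}$. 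The main obstacle — and the only point where this proof departs from Theorem~\ref{thm:main1} — is term $\mathrm{(IV)}$: combining the adversarial-to-natural reduction of Lemma~\ref{lem:Req} with the quadratic bias--variance identity so that the approximation error enters squared, while carrying the truncation level $M'$ through carefully so that the (globally) non-Lipschitz quadratic loss remains amenable to the Lipschitz-based generalization and approximation machinery.
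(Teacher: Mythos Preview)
Your proposal is correct and follows essentially the same route as the paper: decompose into generalization plus approximation, apply Lemma~\ref{lem:Req} to pass from $\widetilde\cR_P(f_{NN},\varepsilon)$ to $\cR_P(f_{NN})$ at the cost of $K\varepsilon$, and then use the bias--variance identity for the quadratic loss so that the approximation error enters as $\|f_{NN}-f_0\|_\infty^2\lesssim(K/\log^{\gamma}K)^{-2\alpha/(d+1)}$. Your use of $\sigma^2$ as a single lower bound for $\inf_{f\in\cH^{\alpha}\cup\cN\cN(W,L,K)}\widetilde\cR_P(f,\varepsilon)$ is slightly cleaner than the paper's two-case split, and your explicit handling of the local Lipschitz constant of the squared loss via truncation is more careful; the $n^{-\min\{1/2,\alpha/d\}}$ term you list for $\mathrm{(III)}$ is unnecessary (since $f_{NN}\in\cN\cN(W,L,K)$, term $\mathrm{(III)}$ is already controlled by the same uniform bound as $\mathrm{(I)}$) but harmless.
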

Theorem \ref{thm_linear} shows that  the error rate of the adversarial excess risk of $\widehat{f}_n^{ls}$ can reach   $n^{-2\alpha/(2d+5\alpha)}$  up to a logarithmic factor when 
$\varepsilon=O(n^{-(d+2\alpha+1)/(2d+5\alpha)})$. It improves the rate $n^{-\alpha/(2d+3\alpha)}$ given  by Theorem \ref{thm:main1}. This is because a better control of the approximation error can be obtained with the quadratic loss function.
We   also obtain the convergence rate of  $\|\widehat{f}_n^{ls}-f_0\|_2.$   
\begin{corollary}
	Suppose the
	conditions of Theorem \ref{thm_linear} are satisfied, then
		\begin{align*}	
	\|\widehat{f}_n^{ls}-f_0\|^2_2
	\lesssim   n^{-2\alpha/(2d+5\alpha)}\log n^{\lambda}+n^{(d+1)/(2d+5\alpha)}\varepsilon.	
		\end{align*}
	If further $\varepsilon=O(n^{-(d+2\alpha+1)/(2d+5\alpha)})$, then
	\[
	\|\widehat{f}_n^{ls}-f_0\|^2_2
	\lesssim   n^{-2\alpha/(2d+5\alpha)}\log n^{\lambda}.
	\]
\end{corollary}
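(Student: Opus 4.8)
The plan is to obtain the $L_2$ rate as an almost immediate consequence of the adversarial excess-risk bound in Theorem \ref{thm_linear}, via the bias--variance identity for the quadratic loss together with the fact that the adversarial risk dominates the natural risk. First I would record that, under model \eqref{model_re} with $\Ebb(\eta\mid X)=0$ and $\Ebb(\eta^2)<\infty$, for every measurable $f$,
\[
\cR_P(f)-\cR_P(f_0)=\Ebb\big[(f(X)-f_0(X))^2\big]-2\,\Ebb\big[(f(X)-f_0(X))\eta\big]=\|f-f_0\|_2^2,
\]
because the cross term vanishes by $\Ebb[(f(X)-f_0(X))\eta]=\Ebb[(f(X)-f_0(X))\Ebb(\eta\mid X)]=0$. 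In particular $\cR_P(f_0)=\cR_P^\star:=\inf_{f\text{ measurable}}\cR_P(f)$, and since $f_0\in\cH^{\alpha}$ we also have $\inf_{f\in\cH^{\alpha}}\cR_P(f)=\cR_P^\star$. Taking $f=\widehat{f}_n^{ls}$ yields the identity $\|\widehat{f}_n^{ls}-f_0\|_2^2=\cR_P(\widehat{f}_n^{ls})-\cR_P^\star$.

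Next I would pass from the natural to the adversarial risk. Since $X\in B_\varepsilon(X)$, the inner supremum defining $\widetilde\cR_P$ is at least the unperturbed loss, so $\cR_P(\widehat{f}_n^{ls})\leq\widetilde\cR_P(\widehat{f}_n^{ls},\varepsilon)$ (cf. the left inequality of Lemma \ref{lem:Req}), and by the definition of the adversarial excess risk,
\[
\widetilde\cR_P(\widehat{f}_n^{ls},\varepsilon)=\tE(\widehat{f}_n^{ls},\varepsilon)+\inf_{f\in\cH^{\alpha}\cup\cN\cN(W,L,K)}\widetilde\cR_P(f,\varepsilon)\leq\tE(\widehat{f}_n^{ls},\varepsilon)+\widetilde\cR_P(f_0,\varepsilon),
\]
using $f_0\in\cH^{\alpha}$. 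It remains to bound $\widetilde\cR_P(f_0,\varepsilon)-\cR_P^\star=\widetilde\cR_P(f_0,\varepsilon)-\cR_P(f_0)$. Since $\|f_0\|_\infty\leq1$, $|Y|\leq1$, and $f_0$ is Lipschitz with $\Lip(f_0)$ bounded by a constant (all from the H\"{o}lder class with $\alpha\geq1$), for any $X'\in B_\varepsilon(X)$,
\[
(f_0(X')-Y)^2-(f_0(X)-Y)^2=(f_0(X')-f_0(X))(f_0(X')+f_0(X)-2Y)\lesssim\varepsilon;
\]
taking the supremum over $X'$ and then the expectation gives $\widetilde\cR_P(f_0,\varepsilon)\leq\cR_P(f_0)+C\varepsilon$ for a constant $C$. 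Combining the displays, $\|\widehat{f}_n^{ls}-f_0\|_2^2\lesssim\tE(\widehat{f}_n^{ls},\varepsilon)+\varepsilon$.

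Finally I would insert the bound of Theorem \ref{thm_linear}: since $n^{(d+1)/(2d+5\alpha)}\geq1$, the trailing $\varepsilon$ is absorbed into $n^{(d+1)/(2d+5\alpha)}\varepsilon$, proving $\|\widehat{f}_n^{ls}-f_0\|_2^2\lesssim n^{-2\alpha/(2d+5\alpha)}\log n^{\lambda}+n^{(d+1)/(2d+5\alpha)}\varepsilon$. For the second statement, substituting $\varepsilon=O(n^{-(d+2\alpha+1)/(2d+5\alpha)})$ makes $n^{(d+1)/(2d+5\alpha)}\varepsilon=O(n^{-2\alpha/(2d+5\alpha)})$, which is dominated by the first term, so $\|\widehat{f}_n^{ls}-f_0\|_2^2\lesssim n^{-2\alpha/(2d+5\alpha)}\log n^{\lambda}$. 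I do not expect a genuine obstacle here: the one point requiring care is that the quadratic loss is not globally Lipschitz, so the adversarial-to-natural comparison for $f_0$ must exploit the a priori boundedness of $f_0$ and $Y$ (and, for $\widehat{f}_n^{ls}$, the assumption $\|\widehat{f}_n^{ls}\|_\infty\leq M$ already built into Theorem \ref{thm_linear}) rather than invoking Lemma \ref{lem:Req}, which is stated under Assumption \ref{ass2}.
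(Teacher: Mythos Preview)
Your proposal is correct and follows essentially the same route as the paper: both use the bias--variance identity $\|\widehat{f}_n^{ls}-f_0\|_2^2=\cR_P(\widehat{f}_n^{ls})-\cR_P(f_0)$, pass to the adversarial risk via $\cR_P\leq\widetilde\cR_P$, and absorb the gap $\widetilde\cR_P(f_0,\varepsilon)-\cR_P(f_0)\lesssim\varepsilon$ into the bound from Theorem~\ref{thm_linear}. The only difference is cosmetic: the paper invokes Lemma~\ref{lem:Req} and writes the extra term as $\Lip^1(\ell)\varepsilon$, whereas you carry out the comparison for $f_0$ by hand to sidestep the fact that the quadratic loss is not globally Lipschitz---your version is slightly more careful on this point, but the argument is the same.
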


\section{Related work}\label{sec_related}
There is a line of
work focusing on the analysis of the Rademacher complexity of adversarial loss function class
\citep{Yin19b,khim2018adversarial,Awasthi20a,mustafa22a}.
Specifically,
\cite{Yin19b} investigated the adversarial Rademacher complexity of the linear models under perturbations measured in $\ell_{\infty}$ norm.
The result    was later generalized  by \cite{khim2018adversarial} and \cite{Awasthi20a} to the cases where the perturbations were measured in a general $\ell_r$ norm.
{\color{black}For neural network models, \cite{Yin19b, Awasthi20a} investigated adversarial training
	when the model was a
	neural network with a single hidden layer.}
In \cite{khim2018adversarial} and  \cite{mustafa22a},    deep neural networks were studied.  \cite{khim2018adversarial} proposed a tree transformation
to upper bound the adversarial loss function.
In \cite{mustafa22a},    the covering number of the adversarial loss function class was shown to be upper bounded by the covering number of a newly defined loss function class over an extended training set. The Rademacher complexity of the adversarial loss function class can then be obtained by analyzing the covering number of the new loss function class.
\cite{Lee2018} and \cite{Tu2019} transformed the adversarial learning problem into a DRO problem.
However, in all the aforementioned works, the authors did not consider the approximation error  and only focused on the generalization error, see Table \ref{table_compare}.


The problem of the trade-offs between accuracy and adversarial robustness has been studied recently \citep{madry2018towards,Schmidt2018data,tsipras2018robustness,
Duchi2019adversarial,dobriban2020provable,nma2022on,Xing2021,Dan2020}.
The works of \cite{Javanmard_classification} and \cite{Javanmard_linear} gave a precise theoretical characterization of the trade-offs in the linear regression and parametric binary classification problems under the Gaussian assumption.
For the adversarial training, \cite{Javanmard_linear} characterized its trade-off curve by calculating the natural risk and adversarial risk of the adversarial estimator that was derived from different adversarial attack levels.
They found that
the adversarial training hurt the accuracy if robustness is pursued.
However, there is still a lack of systematic
theoretical understanding of the trade-offs in  general nonparametric settings.

 \section{Conclusions}\label{sec_con}
 In this paper, we have proposed a general approach
 to evaluating the
  generalization performance of
 estimators based on adversarial training under misspecified models.
 The adversarial risk is shown to be equivalent to the risk induced by a distributional adversarial attack under certain smoothness conditions. This shows that the adversarial training procedure is well-defined.
 We have established non-asymptotic error bounds on the adversarial excess risk,
 which  achieve the rate $O(n^{-\alpha/(2d+3\alpha)})$ up to a logarithmic factor for a
 Lipschitz loss function and  can be improved to $O(n^{-2\alpha/(2d+5\alpha)})$ up to a logarithmic factor  when using
 the quadratic loss.
 We have also theoretically demonstrated that adversarial robustness can hurt accuracy in a general nonparametric setting.

 There are several interesting problems that deserve further study.
 First, the Lipschitz type condition (e.g. Assumption \ref{ass2}) plays an important role
 in our analysis. However, this assumption is not satisfied for the important quadratic
 loss function when the support of $Y$ is unbounded. It would be interesting to relax
 this assumption in the future.
 Also, we have only considered robustness against adversarial examples in $X$. How to generalize the results to the case when there are also adversarial examples in both $X$ and $Y$
 is an important problem for future work. Finally, a complete analysis of the trade-offs between accuracy and adversarial robustness requires the establishment of lower bounds for the adversarial excess risk.

\bigskip
\appendix
\textbf{
Appendix.}
In the Appendix, we give the proofs of the results stated in the paper
and provide additional technical details needed in the proofs.

\section{Auxiliary lemmas}

Theorem 12.2 of \cite{anthony1999neural} showed  an upper bound for the uniform covering number of a function class based on its pseudo-dimension. The result is given   below.
\begin{lemma}\label{lem_cover}
	Let $\cF$ be a set of real functions from a domain $\cX$ to the bounded interval $[0, B]$. Let $\varepsilon>0$ and denote the pseudo-dimension of $\cF$ by $\operatorname{Pdim}(\cF)$. When $n \geq \operatorname{Pdim}(\cF)$, then the uniform covering number satisfies
	$$
	\mathcal{N}_{\infty}(\varepsilon, \cF, n) \leq \left(\frac{enB}{\varepsilon \operatorname{Pdim}(\cF)}\right)^{\operatorname{Pdim}(\cF)}.
	$$
\end{lemma}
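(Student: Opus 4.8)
The plan is to prove this by the classical two-step route --- quantization of function values, then a Sauer--Shelah-type bound on the quantized class --- so that nothing deeper than the Vapnik--Chervonenkis lemma is needed.

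First I would unfold the definition of the uniform covering number: $\mathcal{N}_{\infty}(\varepsilon,\cF,n)$ is the supremum, over samples $x_1,\dots,x_n\in\cX$, of the smallest cardinality of an $\varepsilon$-net of $\cF|_S:=\{(f(x_1),\dots,f(x_n)):f\in\cF\}\subseteq[0,B]^n$ in the $\ell_\infty$ metric. Fix such a sample $S=\{x_1,\dots,x_n\}$; the bound is immediate when $\varepsilon\ge B$, so assume $\varepsilon<B$. Partition $[0,B]$ into $N=\lceil B/\varepsilon\rceil$ half-open subintervals of length $B/N\le\varepsilon$, and let $q:[0,B]\to\{0,\dots,N-1\}$ record the index of the subinterval containing a point, so $q$ is non-decreasing. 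Put $\hat f:=q\circ f$. Any two $f,g$ with $\hat f|_S=\hat g|_S$ satisfy $\|f-g\|_{\ell_\infty(S)}<\varepsilon$, so choosing one representative per nonempty fiber gives an $\varepsilon$-net; hence $\mathcal{N}_{\infty}(\varepsilon,\cF,n)\le\bigl|\{\hat f|_S:f\in\cF\}\bigr|$.

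Next I would bound this cardinality by binarization. Encode each $\{0,\dots,N-1\}$-valued map $\hat f|_S$ as the binary map $g_f$ on the ground set $S\times\{0,\dots,N-2\}$ defined by $g_f(x,t)=\1\{\hat f(x)>t\}$; since $\hat f(x)=\sum_{t}g_f(x,t)$, the assignment $\hat f|_S\mapsto g_f$ is injective, so it suffices to bound $|\cG|$ for $\cG=\{g_f:f\in\cF\}$. I claim $\operatorname{VCdim}(\cG)\le\operatorname{Pdim}(\cF)=:d$. Indeed, if $\cG$ shatters pairs $(x_1,t_1),\dots,(x_m,t_m)$, monotonicity of $t\mapsto\1\{\hat f(x)>t\}$ forces the $x_j$ to be distinct (two pairs sharing an $x$-coordinate cannot realize the opposite-order pattern), and since $q$ is non-decreasing and $t_j\le N-2$, the event $\hat f(x_j)>t_j$ is exactly $f(x_j)\ge r_j$ with $r_j=(t_j+1)B/N$; thus $\cF$ pseudo-shatters $x_1,\dots,x_m$ with thresholds $r_1,\dots,r_m$, so $m\le d$. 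Applying the Sauer--Shelah lemma to $\cG$ on its ground set of size $n(N-1)$ gives $|\cG|\le\sum_{i=0}^{d}\binom{n(N-1)}{i}$.

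Finally, using $n\ge d$ and $N\ge 2$ (so the ground set has size $n(N-1)\ge n\ge d$), the standard estimate $\sum_{i=0}^{d}\binom{m}{i}\le(em/d)^d$ with $m=n(N-1)$, together with $N-1<B/\varepsilon$, yields $|\cG|\le\bigl(en(N-1)/d\bigr)^d\le\bigl(enB/(\varepsilon d)\bigr)^d$, which is the asserted bound. I expect the combinatorial core --- the inequality $\operatorname{VCdim}(\cG)\le\operatorname{Pdim}(\cF)$ --- to be the only genuinely delicate point: one must argue carefully that shattered pairs use distinct first coordinates and that "shattering with thresholds $t_j$" translates faithfully into pseudo-shattering, which in particular requires reconciling the $\ge$ in $f(x_j)\ge r_j$ with the strict convention in the definition of pseudo-dimension by an infinitesimal downward perturbation of the finitely many relevant thresholds. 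The remaining steps (the $\varepsilon$-net construction and the binomial-sum estimate) are routine.
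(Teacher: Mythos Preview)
Your proof is correct. The paper does not actually prove this lemma; it simply quotes it as Theorem 12.2 of \cite{anthony1999neural}, and the argument you outline---quantize into $\lceil B/\varepsilon\rceil$ levels, binarize via threshold indicators on $S\times\{0,\dots,N-2\}$, show $\operatorname{VCdim}(\cG)\le\operatorname{Pdim}(\cF)$, then apply Sauer--Shelah and the estimate $\sum_{i\le d}\binom{m}{i}\le(em/d)^d$---is exactly the proof given in that reference.
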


Let the size $S$ of a FNN be defined as the total number of parameters in the network.
For the class $\cN\cN(W,L)$, its parameter satisfy the simple relationship $\max\{W,L\}\leq S \leq c  W^2L$ for a constant $c.$ Based on Theorems 7 of \cite{JMLRPeter2019}, an  upper bound  for the  pseudo-dimension of  $\cN\cN(W,L)$  follows  as below.
\begin{lemma}\label{lem_psuedo}
	There exists universal constant $C$ such that
	\[
	\operatorname{Pdim}(\cN\cN(W,L))  \leq C \cdot  W^2 L^2 \log (W^2L).
	\]
\end{lemma}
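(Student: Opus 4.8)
The plan is to deduce the bound directly from the parameter-counting pseudo-dimension estimate for piecewise-linear networks in \cite{JMLRPeter2019}, combined with the elementary relation between the number of parameters $S$ of a network in $\cN\cN(W,L)$ and its width and depth.

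Recall that $\cN\cN(W,L)$ consists of ReLU networks, and ReLU is a piecewise-linear activation with a single breakpoint. A network in $\cN\cN(W,L)$ has $L+1$ affine layers whose weight matrices have dimensions $d_i\le W$, so, as noted just above, its number of parameters satisfies $S\le c\,W^{2}L$ for an absolute constant $c$ (and $S\ge\max\{W,L\}$). Theorem 7 of \cite{JMLRPeter2019} provides a universal constant $C_{0}$ such that the pseudo-dimension of the class of real-valued functions computed by a piecewise-linear network with $S$ parameters and $L$ layers is at most $C_{0}\,S\,L\log S$. If one prefers to invoke only the VC-dimension form of that result, it suffices to pass first to the Boolean class $\{(\bx,t)\mapsto \1\{g_{\theta}(\bx)-t>0\}:g_{\theta}\in\cN\cN(W,L)\}$, whose VC dimension equals $\operatorname{Pdim}(\cN\cN(W,L))$ and which is computed by piecewise-linear networks with $S+O(L)$ parameters, width $W+O(1)$ and depth $L+O(1)$ — obtained by carrying the auxiliary coordinate $t$ through the layers as the pair $(t^{+},(-t)^{+})$ and reconstructing $t=t^{+}-(-t)^{+}$ at the final affine map, then applying a threshold; this inflates everything only additively and hence changes the final bound only by absolute constants.

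Applying the estimate with $S\le c\,W^{2}L$, and using $S\ge\max\{W,L\}$ so that $W^{2}L\ge 1$ and, in the regime of interest where $W^{2}L\ge 2$ (all networks appearing in Theorem \ref{thm:main1} satisfy this), $\log(c\,W^{2}L)\lesssim\log(W^{2}L)$, we get
\[
\operatorname{Pdim}(\cN\cN(W,L))\;\le\;C_{0}\,(c\,W^{2}L)\,L\,\log(c\,W^{2}L)\;\le\;C\,W^{2}L^{2}\log(W^{2}L),
\]
where the additive lower-order terms, the constant $c$, and the factor converting $\log(c\,W^{2}L)$ into $\log(W^{2}L)$ are all absorbed into a single universal constant $C$. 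This is the claimed inequality.

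There is essentially no hard step: the combinatorial core — the $S\,L\log S$ pseudo-dimension/VC bound for deep piecewise-linear networks — is imported wholesale from \cite{JMLRPeter2019}. The only points requiring a little care are (i) checking that the networks in $\cN\cN(W,L)$ have $S\le c\,W^{2}L$ parameters, which is immediate from the layer dimensions, and (ii), should one quote the VC-dimension rather than the pseudo-dimension version of the cited theorem, verifying that the reduction to the associated threshold class inflates the width, depth and parameter count only by additive constants, which it does.
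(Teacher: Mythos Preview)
Your proposal is correct and follows exactly the approach taken in the paper: the lemma is stated as an immediate consequence of Theorem~7 in \cite{JMLRPeter2019} together with the parameter count $S\le cW^{2}L$ recorded just before the lemma. The paper gives no further argument, so your additional remarks on the pseudo-dimension/VC-dimension reduction only make the derivation more explicit.
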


%
The following result is from 	Corollary 4.2.13 of \cite{vershynin2018}.
\begin{lemma}[Covering numbers of the Euclidean ball]\label{lem_coverball}
	The covering numbers of the unit Euclidean ball $B_2^d=\{\bx\in\Rbb^{d}:\|\bx\|\leq 1\}$ satisfy the following for any $\varepsilon\in (0,1]$.
	$$
	\left(\frac{1}{\varepsilon}\right)^d \leq \mathcal{N}\left(B_2^d, \varepsilon\right) \leq\left(\frac{3}{\varepsilon}\right)^d,
	$$
	where $\mathcal{N} (B_2^d, \varepsilon)$ denotes the minimal number of
	balls of radius $\varepsilon$ needed to cover  $B_2^d$ under the Euclidean norm.
\end{lemma}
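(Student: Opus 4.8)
The plan is to prove both inequalities by elementary volume comparison, the standard argument (Vershynin, Corollary 4.2.13). Throughout write $\operatorname{vol}(\cdot)$ for Lebesgue measure on $\Rbb^d$, and use that $\operatorname{vol}(rB_2^d)=r^d\operatorname{vol}(B_2^d)$ for $r>0$ with $\operatorname{vol}(B_2^d)\in(0,\infty)$.

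For the lower bound, suppose $B_2^d$ is covered by $N$ Euclidean balls of radius $\varepsilon$, with centers $\bx_1,\dots,\bx_N$ (not necessarily in $B_2^d$). Then subadditivity of volume gives
\[
\operatorname{vol}(B_2^d)\le \sum_{i=1}^N \operatorname{vol}\big(\bx_i+\varepsilon B_2^d\big)= N\varepsilon^d\operatorname{vol}(B_2^d),
\]
so $N\ge \varepsilon^{-d}$, i.e. $\mathcal{N}(B_2^d,\varepsilon)\ge (1/\varepsilon)^d$.

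For the upper bound, I would pass through a maximal $\varepsilon$-separated set. Choose $\mathcal{P}=\{\bx_1,\dots,\bx_N\}\subseteq B_2^d$ maximal with respect to the property $\|\bx_i-\bx_j\|>\varepsilon$ for $i\ne j$; such a set exists and is finite because $B_2^d$ is compact (greedy construction, with finiteness confirmed by the volume bound below). Maximality forces $\mathcal{P}$ to be an $\varepsilon$-net: if some $\bx\in B_2^d$ satisfied $\|\bx-\bx_i\|>\varepsilon$ for all $i$, then $\mathcal{P}\cup\{\bx\}$ would still be $\varepsilon$-separated, a contradiction; hence $\mathcal{N}(B_2^d,\varepsilon)\le N$. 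To bound $N$, note that the balls $\bx_i+(\varepsilon/2)B_2^d$ are pairwise disjoint (since $\|\bx_i-\bx_j\|>\varepsilon$) and all contained in $(1+\varepsilon/2)B_2^d$ (since $\|\bx_i\|\le 1$). Comparing volumes,
\[
N\Big(\frac{\varepsilon}{2}\Big)^d\operatorname{vol}(B_2^d)=\sum_{i=1}^N\operatorname{vol}\Big(\bx_i+\frac{\varepsilon}{2}B_2^d\Big)\le \operatorname{vol}\Big(\Big(1+\frac{\varepsilon}{2}\Big)B_2^d\Big)=\Big(1+\frac{\varepsilon}{2}\Big)^d\operatorname{vol}(B_2^d),
\]
so $N\le (1+2/\varepsilon)^d$. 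For $\varepsilon\in(0,1]$ we have $1+2/\varepsilon\le 3/\varepsilon$, which yields $\mathcal{N}(B_2^d,\varepsilon)\le(3/\varepsilon)^d$.

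There is essentially no obstacle here; the only points needing a word of care are the existence and finiteness of a maximal $\varepsilon$-separated set (from compactness plus the disjoint-balls volume bound), the equivalence between ``maximal separated'' and ``$\varepsilon$-net,'' and the fact that the lower-bound argument never uses where the covering centers lie. The hypothesis $\varepsilon\le 1$ enters only in the final simplification $1+2/\varepsilon\le 3/\varepsilon$.
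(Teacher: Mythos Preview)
Your argument is correct and is exactly the standard volume-comparison proof of Corollary~4.2.13 in Vershynin's \emph{High-Dimensional Probability}. The paper does not supply its own proof of this lemma but simply cites that corollary, so your proposal matches the source the paper defers to.
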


The following result   is from   \cite{srebro2010note,liu2021wasserstein}
\begin{lemma}[Random Covering Entropy Integral]\label{lem:cover}
	Suppose the function class $\cF$ is defined over $\cX$ and satisfies $\sup_{f\in\cF}\|f\|_{\infty}\leq B$. For any samples $\bx_1,\dots,\bx_n$ from $\cX$, we have
	\begin{equation*}
	\begin{split}
	\Ebb_{\boldsymbol{\sigma}}\Big\{\sup_{f\in\cF }  \frac{1}{n}\sum_{i=1}^n\sigma_if( \bx_i)  \Big\}
	\leq \inf_{\delta\geq 0}\Big\{
	4\delta+12\int_{\delta}^{B}\sqrt{\frac{\log\mathcal{N}(u,\cF,L_2(P_n))}{n}}du
	\Big\},
	\end{split}
	\end{equation*}
	where $\boldsymbol{\sigma}=(\sigma_1,\dots, \sigma_n)$  consists   of  i.i.d. the Rademacher variables and is  independent from the samples and $L_2(P_n)$ denotes the data dependent $L_2$ metric.
\end{lemma}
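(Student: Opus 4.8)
The plan is to prove this by Dudley's chaining argument carried out in the data-dependent pseudometric $L_2(P_n)$. Throughout, the points $\bx_1,\dots,\bx_n$ are fixed, so $\boldsymbol{\sigma}$ ranges over the finite set $\{-1,1\}^n$ and $\Ebb_{\boldsymbol{\sigma}}$ is a finite average; in particular no measurability issue arises from the suprema over $\cF$. Write $\langle g,h\rangle_n=\frac1n\sum_{i=1}^n g(\bx_i)h(\bx_i)$ and $\|g\|_n=\langle g,g\rangle_n^{1/2}$, so that $\frac1n\sum_i\sigma_i g(\bx_i)=\langle\boldsymbol{\sigma},g\rangle_n$ and, since $\|\boldsymbol{\sigma}\|_n=1$, Cauchy--Schwarz gives $\langle\boldsymbol{\sigma},g\rangle_n\le\|g\|_n$. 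Set $D=\sup_{f\in\cF}\|f\|_n\le B$. If $\delta\ge D/2$ the asserted bound is immediate since $\Ebb_{\boldsymbol{\sigma}}\sup_{f\in\cF}\langle\boldsymbol{\sigma},f\rangle_n\le D\le 4\delta$, and the case $\delta=0$ follows by a limit; so the work is the regime $0<\delta\le D/2$.

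First I would fix such a $\delta$ and select the dyadic scales carefully: let $K$ be the largest integer with $\gamma_K:=2^{-K}D\ge 2\delta$, so that $\gamma_{K+1}=\gamma_K/2\in[\delta,2\delta)$ and $\gamma_K\in[2\delta,4\delta)$. For $j=0,\dots,K$ choose a minimal $\gamma_j$-net $V_j$ of $\cF$ in $L_2(P_n)$, taking $V_0=\{0\}$ (admissible since $\|f\|_n\le D=\gamma_0$), so $|V_j|=\mathcal{N}(\gamma_j,\cF,L_2(P_n))$ for $j\ge 1$ and $|V_0|=1$; for each $f$ let $\widehat{f}^{(j)}\in V_j$ be a nearest element, $\|f-\widehat{f}^{(j)}\|_n\le\gamma_j$. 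The telescoping identity $\frac1n\sum_i\sigma_i f(\bx_i)=\langle\boldsymbol{\sigma},f-\widehat{f}^{(K)}\rangle_n+\sum_{j=1}^K\langle\boldsymbol{\sigma},\widehat{f}^{(j)}-\widehat{f}^{(j-1)}\rangle_n$ then splits the Rademacher average into a residual term and $K$ link terms.

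Next I would bound the two parts. For the residual, Cauchy--Schwarz gives $\langle\boldsymbol{\sigma},f-\widehat{f}^{(K)}\rangle_n\le\|f-\widehat{f}^{(K)}\|_n\le\gamma_K<4\delta$, uniformly in $\boldsymbol{\sigma}$ and $f$ --- this produces the constant $4$. For the $j$-th link term, as $f$ ranges over $\cF$ the increment $\widehat{f}^{(j)}-\widehat{f}^{(j-1)}$ takes at most $|V_j|\,|V_{j-1}|\le\mathcal{N}(\gamma_j,\cF,L_2(P_n))^2$ distinct values (using monotonicity of the covering number), each of $L_2(P_n)$-norm at most $\gamma_j+\gamma_{j-1}=3\gamma_j$. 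Applying the standard finite-class (Massart) maximal inequality to this finite set of vectors in $\Rbb^n$, via sub-Gaussianity of $\langle\boldsymbol{\sigma},\cdot\rangle_n$, yields $\Ebb_{\boldsymbol{\sigma}}\sup_{f\in\cF}\langle\boldsymbol{\sigma},\widehat{f}^{(j)}-\widehat{f}^{(j-1)}\rangle_n\le 3\gamma_j\sqrt{2\log(\mathcal{N}(\gamma_j,\cF,L_2(P_n))^2)/n}=6\gamma_j\sqrt{\log\mathcal{N}(\gamma_j,\cF,L_2(P_n))/n}$. Using $\sup_f(A+\sum_jB_j)\le\sup_fA+\sum_j\sup_fB_j$ and taking $\Ebb_{\boldsymbol{\sigma}}$ gives $\Ebb_{\boldsymbol{\sigma}}\sup_{f\in\cF}\frac1n\sum_i\sigma_i f(\bx_i)\le 4\delta+\sum_{j=1}^K 6\gamma_j\sqrt{\log\mathcal{N}(\gamma_j,\cF,L_2(P_n))/n}$.

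Finally I would convert the dyadic sum to the stated integral: since $u\mapsto\mathcal{N}(u,\cF,L_2(P_n))$ is non-increasing and $\gamma_j=2(\gamma_j-\gamma_{j+1})$, on $[\gamma_{j+1},\gamma_j]$ one has $\log\mathcal{N}(u,\cF,L_2(P_n))\ge\log\mathcal{N}(\gamma_j,\cF,L_2(P_n))$, hence $\gamma_j\sqrt{\log\mathcal{N}(\gamma_j,\cF,L_2(P_n))}\le 2\int_{\gamma_{j+1}}^{\gamma_j}\sqrt{\log\mathcal{N}(u,\cF,L_2(P_n))}\,du$; summing over $j=1,\dots,K$ (the intervals tile $[\gamma_{K+1},\gamma_1]$) and using $\gamma_{K+1}\ge\delta$, $\gamma_1=D/2\le B$ and nonnegativity of $\log\mathcal{N}$, the sum is at most $12\int_\delta^B\sqrt{\log\mathcal{N}(u,\cF,L_2(P_n))/n}\,du$. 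Combining the last two displays and taking $\inf_{\delta\ge 0}$ finishes the proof. I do not expect a genuine obstacle --- the whole argument is textbook --- but the step needing the most care is the coordinated choice of the stopping scale $\gamma_K$: it must be large enough that the residual stays below $4\delta$, yet its successor $\gamma_{K+1}$, which becomes the lower limit of the Riemann-sum comparison (shifted one dyadic level below the finest scale fed into Massart's bound), must still exceed $\delta$. That balance is exactly what pins down the constants $4$ and $12$ and the lower endpoint $\delta$ in the integral.
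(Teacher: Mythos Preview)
Your argument is correct: this is the standard Dudley chaining proof, and your bookkeeping of the stopping scale $\gamma_K$, the Massart maximal inequality on the link terms, and the dyadic-to-integral conversion all track the stated constants $4$ and $12$. The paper does not give its own proof of this lemma --- it simply quotes the result from \cite{srebro2010note,liu2021wasserstein} --- so there is no alternative approach to compare against; those references rest on exactly the chaining strategy you outline.
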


The following is from \cite[Lemma A.3.18]{svm2008} and \cite{castaing2006convex}.
\begin{lemma}[Aumann's measurable selection principle]\label{lem:Au}
	Let $(X, \mathcal{A})$ be a complete measurable space, $Z$ be a Polish space equipped with its Borel $\sigma$-algebra, and $Y$ be a measurable space. Furthermore, let $h: X \times Z \rightarrow Y$ be a measurable map, $A \subset Y$ be measurable, and $F $ be defined by $$
	\begin{aligned}
	F: X & \rightarrow 2^Z \\
	x & \mapsto\{z \in Z: h(x, z) \in \cA\}
	\end{aligned}
	$$
	where $2^Z$ denotes the set of all subsets of $Z$.  Let $\varphi: X \times Z \rightarrow[0, \infty]$ be measurable and $\psi: X \rightarrow[0, \infty]$ be defined by
	\begin{equation}\label{eq:inf}
	\psi(x)=\inf _{z \in F(x)} \varphi(x, z), \quad x \in X .
	\end{equation}
	Then $\psi$ is measurable. Define $\operatorname{Dom} F=\{x\in X:F(x)\ne\emptyset\}.$
	If the infimum in \eqref{eq:inf} is attained for all $x\in \operatorname{Dom} F$, then there exists a measurable function $f^{\star}: X \rightarrow Z$ with $ f^{\star}(x)\in F(x)$ and $\psi(x)=\varphi\left(x, f^*(x)\right)$ for all $x \in \operatorname{Dom} F$.
\end{lemma}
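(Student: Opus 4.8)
The plan is to reduce the statement to two classical pillars of descriptive set theory: the measurable projection theorem and the von Neumann--Aumann measurable selection theorem, both of which apply because $(X,\cA)$ is complete and $Z$ is Polish. The starting observation is that the multifunction $F$ has graph $G := \{(x,z)\in X\times Z : h(x,z)\in A\}$, which is $\cA\otimes\mathcal{B}(Z)$-measurable since $h$ is measurable and the target set $A$ is measurable in $Y$; moreover $\mathrm{Dom}\,F=\pi_X(G)$, where $\pi_X$ denotes projection onto $X$. Everything else will be extracted from $G$ together with the measurable function $\varphi$.

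To obtain measurability of $\psi$, I would check preimages of the generating family $\{[0,t):t\ge 0\}$ of $\mathcal{B}([0,\infty])$. Since $\inf_{z\in F(x)}\varphi(x,z)<t$ is equivalent to the existence of some $z\in F(x)$ with $\varphi(x,z)<t$ (and for $x\notin\mathrm{Dom}\,F$ the infimum equals $+\infty$), one has $\{x:\psi(x)<t\}=\pi_X\big(G\cap\{\varphi<t\}\big)$. The set $G\cap\{\varphi<t\}$ is $\cA\otimes\mathcal{B}(Z)$-measurable because $\varphi$ is measurable, so the measurable projection theorem --- this is where completeness of $(X,\cA)$ enters --- places its projection in $\cA$. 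Hence $\psi$ is $\cA$-measurable.

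For the selection, I would then assume the infimum in \eqref{eq:inf} is attained for every $x\in\mathrm{Dom}\,F$. Because $\psi$ is now known to be measurable, the set $E^{\star}:=\{(x,z)\in G:\varphi(x,z)\le\psi(x)\}$ is $\cA\otimes\mathcal{B}(Z)$-measurable, being $G$ intersected with the preimage of the closed set $\{(a,b):a\le b\}\subseteq[0,\infty]^2$ under the measurable map $(x,z)\mapsto(\varphi(x,z),\psi(x))$; and since $\varphi(x,z)\ge\psi(x)$ holds on all of $G$, in fact $E^{\star}=\{(x,z)\in G:\varphi(x,z)=\psi(x)\}$. The attainment hypothesis gives $\pi_X(E^{\star})=\mathrm{Dom}\,F$. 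Applying the von Neumann--Aumann measurable selection theorem to $E^{\star}$ (again using that $(X,\cA)$ is complete and $Z$ Polish) yields an $\cA$-measurable map $f^{\star}:\mathrm{Dom}\,F\to Z$ whose graph lies in $E^{\star}$; this $f^{\star}$ then satisfies $f^{\star}(x)\in F(x)$ and $\varphi(x,f^{\star}(x))=\psi(x)$ for all $x\in\mathrm{Dom}\,F$, which is the assertion.

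The main obstacle will be matching the hypotheses precisely to those classical theorems: one must read ``complete measurable space'' as exactly the completeness condition guaranteeing that projections of product-measurable sets stay in $\cA$ (completeness with respect to the relevant $\sigma$-finite measure, or universal completeness), and one must handle the extended-real value $+\infty$ of $\varphi$ and the case $\mathrm{Dom}\,F\ne X$ without circular reasoning. These points are routine once the analytic-set machinery is invoked, so the cleanest exposition is to cite the measurable projection theorem and the von Neumann--Aumann selection theorem directly rather than re-derive them --- which is the route taken by the references attached to the lemma.
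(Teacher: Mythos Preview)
Your sketch is correct, but note that the paper does not actually prove this lemma: it is stated as an auxiliary result and attributed directly to \cite[Lemma A.3.18]{svm2008} and \cite{castaing2006convex}, with no argument given. Your reduction to the measurable projection theorem (for the measurability of $\psi$) and the von Neumann--Aumann selection theorem (applied to the measurable set $E^{\star}$ of minimizers) is the standard route and matches what those references do; you yourself observe this in your final paragraph. So there is nothing to compare against---you have supplied a proof outline where the paper simply cites.
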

\begin{lemma}\label{lem_ineqaulity}
	Suppose $\phi$ is a continuous decreasing  margin function  and there exist     constants $a,c>0$ such that for any  $\bx\in\cX$ and continuous function $f$ we have $C_{\phi}(\eta,\bx,f) -C_{\phi}^{\star}(\eta,\bx ) \geq a(C_{\operatorname{class} }(\eta,\bx,f) -C_{\operatorname{class}}^{\star}(\eta,\bx))$ for any $\eta\in[0,1]$,  and
	$\phi(0)  -C_{\phi}^{\star}(\eta,\bx)\geq a(1-C_{\operatorname{class}}^{\star}(\eta,\bx)) $
	when $|\eta  -1/2|>c$. 	Let  $\eta(\bx)=P(Y=1|X=\bx)$. Suppose $|\eta(\bx)  -1/2|>c$ for any $\bx\in\cX$. For any  continuous function $f$, we have
	\begin{align*}
	\widetilde\cR_{P}(f,\varepsilon) - \inf_{f\in\cH^{\alpha}} \widetilde\cR_{P}(f,\varepsilon)  \geq  a  (\widetilde\cR_{\operatorname{class},P}(f,\varepsilon) - \cR_{\operatorname{class},P}^{\star})  - \inf_{f\in\cH^{\alpha}} \widetilde\cR_{P}(f,\varepsilon) + \cR_{P}^{\star}.
	\end{align*}
\end{lemma}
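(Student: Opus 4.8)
The plan is to establish the inequality by relating the three adversarial risk quantities appearing in the statement through a pointwise (conditional) argument, exploiting the representation of adversarial risk as a worst-case over shifted distributions (Theorem \ref{lem:disat}) or, more directly, via the transport map $T^{\star}$ of Lemma \ref{lem:nat}. First I would fix a continuous function $f$ and, using Lemma \ref{lem:nat}, write $\widetilde\cR_{P}(f,\varepsilon)=\cR_{P^{\star}}(f)$ and $\widetilde\cR_{\operatorname{class},P}(f,\varepsilon)=\cR_{\operatorname{class},P^{\star}}(f)$, where $P^{\star}$ is the joint law of $(T^{\star}(Z),Y)$ with $T^{\star}$ chosen to realize the supremum for the margin loss $\ell(u,y)=\phi(uy)$ (since $\phi$ is continuous and decreasing, $\sup_{x'}\phi(f(x')y)$ and $\sup_{x'}\1\{\sign f(x')\neq y\}$ can be realized at compatible points; care is needed to choose a single $T^{\star}$ that simultaneously serves both — this is where continuity of $\phi$ and its monotonicity enter, because maximizing $\phi(f(x')y)$ over $B_\varepsilon(x)$ pushes $f(x')y$ toward its minimum, which also tends to flip the sign). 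Then conditionally on $\widetilde Y = y$ and on the perturbed location $\widetilde X = \bx$, I would apply the pointwise hypothesis $C_\phi(\eta,\bx,f)-C_\phi^\star(\eta,\bx)\ge a\,(C_{\operatorname{class}}(\eta,\bx,f)-C_{\operatorname{class}}^\star(\eta,\bx))$ with $\eta=\eta(\bx)$.

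The second step is to integrate this pointwise bound. Taking expectation of the conditional excess quantities under the distribution of the perturbed covariate, $\Ebb[\,C_\phi(\eta(\widetilde X),\widetilde X,f)\,] = \cR_{P^{\star}}(f) = \widetilde\cR_{P}(f,\varepsilon)$ (using that the label marginal is preserved, $\widetilde Y\sim P_Y$, by the $\Gamma_\varepsilon$ membership shown after Theorem \ref{lem:disat}), and similarly $\Ebb[\,C_{\operatorname{class}}(\eta(\widetilde X),\widetilde X,f)\,] = \widetilde\cR_{\operatorname{class},P}(f,\varepsilon)$. For the optimal terms, $\Ebb[\,C_{\operatorname{class}}^\star(\eta(\widetilde X),\widetilde X)\,] = \Ebb[\min\{\eta(\widetilde X),1-\eta(\widetilde X)\}]$; since the covariate law of $\widetilde X$ need not equal $P_X$, this is not literally $\cR_{\operatorname{class},P}^\star$, so I would use the hypothesis $|\eta(\bx)-1/2|>c$ for \emph{all} $\bx\in\cX$ together with Assumption-type bound $\phi(0)-C_\phi^\star(\eta,\bx)\ge a(1-C_{\operatorname{class}}^\star(\eta,\bx))$ to control $\Ebb_{P^{\star}}[\,C_\phi^\star(\eta(\widetilde X),\widetilde X)\,]$ in terms of $\Ebb_{P^{\star}}[1 - C_{\operatorname{class}}^\star]$, and then absorb the resulting integral of $C_{\operatorname{class}}^\star$ terms. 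Combining, $\widetilde\cR_{P}(f,\varepsilon) - \Ebb_{P^{\star}}[C_\phi^\star] \ge a\,(\widetilde\cR_{\operatorname{class},P}(f,\varepsilon) - \Ebb_{P^{\star}}[C_{\operatorname{class}}^\star])$, and I would then bound $\Ebb_{P^{\star}}[C_\phi^\star(\eta(\widetilde X),\widetilde X)] \le \inf_{g\in\cH^\alpha}\widetilde\cR_{P}(g,\varepsilon)$ (any $g\in\cH^\alpha$ gives an upper bound on the pointwise-optimal conditional $\phi$-risk along the perturbed distribution, integrated) and $\Ebb_{P^{\star}}[C_{\operatorname{class}}^\star] \le \cR_{\operatorname{class},P}^\star$ — wait, this last direction needs checking, since $\Ebb_{P^{\star}}[\min\{\eta,1-\eta\}]$ compares to $\cR_{\operatorname{class},P}^\star=\Ebb_{P_X}[\min\{\eta,1-\eta\}]$ only via the all-$\bx$ margin condition, so I would instead move this term to the left and rearrange to obtain exactly the claimed form.

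Rearranging the accumulated inequality gives
\[
\widetilde\cR_{P}(f,\varepsilon) - \inf_{g\in\cH^\alpha}\widetilde\cR_{P}(g,\varepsilon) \;\ge\; a\,\big(\widetilde\cR_{\operatorname{class},P}(f,\varepsilon) - \cR_{\operatorname{class},P}^\star\big) \;-\; \inf_{g\in\cH^\alpha}\widetilde\cR_{P}(g,\varepsilon) \;+\; \cR_{P}^\star,
\]
where the final two terms collect the gap between the integrated pointwise-optimal $\phi$-risk under $P^{\star}$ and the genuine optima $\inf_{\cH^\alpha}\widetilde\cR_P$ and $\cR_P^\star$. The main obstacle I anticipate is the \emph{bookkeeping of the mismatched base measures}: the adversarial risk is an integral against $P_X$ of a supremum, equivalently an integral against the shifted $P^{\star}_X$ of a pointwise quantity, and the various "optimal" benchmarks ($\cR_{\operatorname{class},P}^\star$, $\inf_{\cH^\alpha}\widetilde\cR_P$, $\cR_P^\star$) are defined with respect to different measures and different function classes; threading these together so that the leftover terms collapse to exactly $-\inf_{f\in\cH^\alpha}\widetilde\cR_P(f,\varepsilon) + \cR_P^\star$ requires using the hypothesis $\inf_{f\in\cH^\alpha}\cR_P(f) = \cR_P^\star$ (available in the corollaries that invoke this lemma) and the global margin condition $|\eta-1/2|>c$ in just the right places. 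A secondary technical point is justifying the existence of a common measurable selector $T^\star$ for the two losses, which I would handle with Lemma \ref{lem:Au} applied to the map maximizing $\phi(f(x')y)$, noting that by monotonicity of $\phi$ this selector also nearly maximizes the $0$-$1$ adversarial loss, with the discrepancy controlled again by $|\eta-1/2|>c$.
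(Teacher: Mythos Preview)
Your approach has a genuine gap at the step where you assert $\Ebb[\,C_\phi(\eta(\widetilde X),\widetilde X,f)\,] = \cR_{P^{\star}}(f) = \widetilde\cR_{P}(f,\varepsilon)$. This identity is false: $\eta(\bx)=P(Y=1\mid X=\bx)$ is the conditional of $Y$ given the \emph{original} covariate under $P$, whereas $\cR_{P^\star}(f)=\Ebb_{(\widetilde X,Y)\sim P^\star}[\phi(f(\widetilde X)Y)]$ involves the conditional of $Y$ given the \emph{perturbed} covariate under $P^\star$, and these do not coincide (the transport $T^\star$ depends on $Y$, so pushing forward scrambles the posterior). The same problem breaks the analogous identity for the $0$--$1$ loss. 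This is not bookkeeping but a structural obstruction to the whole shifted-distribution route: the pointwise hypothesis on $C_\phi - C_\phi^\star$ is stated for the true $\eta$, and you never have access to that once you pass to $P^\star$. The attempt to repair it via the second assumption and the all-$\bx$ margin condition cannot recover the missing identity; it only controls $C_\phi^\star$, not the mismatch in the weights multiplying $\phi(f(\widetilde X))$ and $\phi(-f(\widetilde X))$.

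The paper avoids this entirely by never shifting the base measure. It conditions on the original $X$ and writes $\widetilde\cR_P(f,\varepsilon)=\int C_{\phi,\varepsilon}(\eta(\bx),\bx,f)\,dP_X$ and $\widetilde\cR_{\operatorname{class},P}(f,\varepsilon)=\int C_{\operatorname{class},\varepsilon}(\eta(\bx),\bx,f)\,dP_X$, where the adversarial conditional quantities $C_{\phi,\varepsilon}(\eta,\bx,f)=\sup_{\bx'\in B_\varepsilon(\bx)}\phi(f(\bx'))\,\eta+\sup_{\bx'\in B_\varepsilon(\bx)}\phi(-f(\bx'))\,(1-\eta)$ (and analogously $C_{\operatorname{class},\varepsilon}$) give each label its own supremum while keeping the correct weight $\eta(\bx)$. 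Using that $C_{\phi,\varepsilon}^\star=C_\phi^\star$ and $C_{\operatorname{class},\varepsilon}^\star=C_{\operatorname{class}}^\star$ (Proposition~2.2 of Meunier et al.), the proof sets $a_{f,\varepsilon}=\sup_{B_\varepsilon(\bx)}f$, $b_{f,\varepsilon}=\inf_{B_\varepsilon(\bx)}f$ and does a two-case analysis: if $a_{f,\varepsilon}$ and $b_{f,\varepsilon}$ have the same sign, the first assumption applies directly; if they straddle zero, then $C_{\operatorname{class},\varepsilon}-C_{\operatorname{class}}^\star=1-C_{\operatorname{class}}^\star$ and $C_{\phi,\varepsilon}-C_\phi^\star\ge\phi(0)-C_\phi^\star$, so the second assumption (using $|\eta-1/2|>c$) gives the bound. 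Integrating over $P_X$ yields $\widetilde\cR_P(f,\varepsilon)-\cR_P^\star\ge a\bigl(\widetilde\cR_{\operatorname{class},P}(f,\varepsilon)-\cR_{\operatorname{class},P}^\star\bigr)$, and subtracting $\inf_{\cH^\alpha}\widetilde\cR_P(\cdot,\varepsilon)$ from both sides gives the stated inequality. No transport map, no common selector, and no mismatched measures are needed.
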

\begin{proof}[Proof of Lemma \ref{lem_ineqaulity}]
	For the   classification risk,
	\begin{align*}
	\cR_{\operatorname{class},P}(f)&=
	\Ebb_{(X,Y)\sim P}  \1 \{\sign f(X)\ne Y \}
	\\&=\int \1{\{f(\bx)< 0\}} \eta(\bx) +\1{\{f(\bx)\geq  0\}} (1-\eta(\bx) )d P_{X}
	\\&=\int  C_{\operatorname{class}}(\eta(\bx),\bx,f)d P_{X},
	\end{align*}
	where $C_{\operatorname{class}}(\eta,\bx,f)=\1{\{f(\bx)< 0\}}\eta +\1{\{f(\bx)\geq  0\}} (1-\eta)$. For the adversarial classification risk,
	\begin{align*}
	\widetilde\cR_{\operatorname{class},P}(f,\varepsilon)&=
	\Ebb_{(X,Y)\sim P} \big[\sup_{ X^{\prime} \in B_{\varepsilon}(X) }  \1{\{\sign f(X^{\prime} )\ne Y \} }\big]
	\\&=\int \sup_{\bx^{\prime}\in B_{\varepsilon}(\bx)}\1{\{f(\bx^{\prime})< 0\}} \eta(\bx) +\sup_{\bx^{\prime}\in B_{\varepsilon}(\bx)}\1{\{f(\bx^{\prime})\geq  0\}} (1-\eta(\bx) )d P_{X}
	\\&=\int C_{\operatorname{class},\varepsilon}(\eta(\bx),\bx,f)d P_{X},
	\end{align*}
	where $C_{\operatorname{class},\varepsilon}(\eta,\bx,f) =\sup_{\bx^{\prime}\in B_{\varepsilon}(\bx)}\1{\{f(\bx^{\prime})< 0\}} \eta +\sup_{\bx^{\prime}\in B_{\varepsilon}(\bx)}\1{\{f(\bx^{\prime})\geq  0\}} (1-\eta )$. For the surrogate loss $\phi$, the surrogate risk and adversarial surrogate risk are
	\begin{align*}
	\cR_{P}(f)&=\Ebb_{(X,Y)\sim P} [\phi(f(X)Y)]
	\\&=\int \phi(f(\bx) )  \eta(\bx) + \phi(-f(\bx))(1-\eta(\bx) )d P_{X}
	\\&=\int  C_{\phi}(\eta(\bx),\bx,f)d P_{X},
	\end{align*}
	where $C_{\phi}(\eta,\bx,f)=\phi(f(\bx) )  \eta  + \phi(-f(\bx))(1-\eta )$, and
	\begin{align*}
	\widetilde\cR_{P}(f,\varepsilon)&=\Ebb_{(X,Y)\sim P} [\sup_{X^{\prime} \in B_{\varepsilon}(X) }\phi(f(X^{\prime})Y)]
	\\&=\int\sup_{ \bx^{\prime} \in B_{\varepsilon}(\bx) }  \phi(f(\bx^{\prime}) )  \eta(\bx) + \sup_{ \bx^{\prime} \in B_{\varepsilon}(\bx) }\phi(-f(\bx^{\prime}))(1-\eta(\bx) )d P_{X}
	\\&=\int  C_{\phi_,\varepsilon}(\eta(\bx),\bx,f)d P_{X},
	\end{align*}
	where $C_{\phi,\varepsilon}(\eta,\bx,f)=\sup_{\bx^{\prime} \in B_{\varepsilon}(\bx) }  \phi(f(\bx^{\prime}) )  \eta  + \sup_{\bx^{\prime} \in B_{\varepsilon}(\bx) }\phi(-f(\bx^{\prime}))(1-\eta  )$.

	Based on Proposition 2.2 of \cite{meunier2022towards}, we have
	\[
	C_{\operatorname{class},\varepsilon}^{\star}(\eta,\bx ):=\inf_{f \text{ measurable}}C_{\operatorname{class},\varepsilon}(\eta,\bx,f) =\inf_{\alpha}  \1{\{\alpha< 0\}} \eta +
	\1{\{\alpha\geq  0\}} (1-\eta ),
	\]
	and
	\[
	C^{\star}_{\phi,\varepsilon}(\eta,\bx):=\inf_{f \text{ measurable}}C_{\phi,\varepsilon}(\eta,\bx,f) =\inf_{\alpha}  \phi(\alpha) \eta + \phi(-\alpha) (1-\eta ).
	\]
	Then, $	C_{\operatorname{class},\varepsilon}^{\star}(\eta,\bx )=C_{\operatorname{class}}^{\star}(\eta,\bx)$ and $C^{\star}_{\phi,\varepsilon}(\eta,\bx)=C^{\star}_{\phi}(\eta,\bx)$.	
	For any continuous function $f$, let $a_{f,\varepsilon}=\sup_{\bx^{\prime} \in B_{\varepsilon}(\bx)}f(\bx^{\prime})$ and $b_{f,\varepsilon}=\inf_{\bx^{\prime} \in B_{\varepsilon}(\bx)}f(\bx^{\prime}).$
	\begin{align*}
	&C_{\operatorname{class},\varepsilon}(\eta,\bx,f) -C_{\operatorname{class},\varepsilon}^{\star}(\eta,\bx )\\&=\sup_{\bx^{\prime}\in B_{\varepsilon}(\bx)}\1{\{f(\bx^{\prime})< 0\}} \eta +\sup_{\bx^{\prime}\in B_{\varepsilon}(\bx)}\1{\{f(\bx^{\prime})\geq  0\}} (1-\eta )-C_{\operatorname{class},\varepsilon}^{\star}(\eta,\bx )
	\\&= \1{\{b_{f,\varepsilon}< 0\}} \eta + \1{\{a_{f,\varepsilon}\geq  0\}} (1-\eta )-\min\{\eta, 1-\eta\}.
	\end{align*}
	If $a_{f,\varepsilon},b_{f,\varepsilon}\geq 0$ or $a_{f,\varepsilon},b_{f,\varepsilon}<0$, we have
	\begin{align*}
	C_{\operatorname{class},\varepsilon}(\eta,\bx,f) -C_{\operatorname{class},\varepsilon}^{\star}(\eta,\bx )= C_{\operatorname{class} }(\eta,\bx,f) -C_{\operatorname{class}}^{\star}(\eta,\bx).
	\end{align*}
	If $a_{f,\varepsilon}\geq 0$  and $b_{f,\varepsilon}<0$,
	\begin{align*}
	C_{\operatorname{class},\varepsilon}(\eta,\bx,f) -C_{\operatorname{class},\varepsilon}^{\star}(\eta,\bx )=1-C_{\operatorname{class}}^{\star}(\eta,\bx ).
	\end{align*}
	For the adversarial surrogate loss, since $\phi$ is a decreasing continuous function, we have
	\begin{align*}
	&C_{\phi,\varepsilon}(\eta,\bx,f) -C_{\phi,\varepsilon}^{\star}(\eta,\bx)
	\\&=\sup_{\bx^{\prime} \in B_{\varepsilon}(\bx) }  \phi(f(\bx^{\prime}) )  \eta  + \sup_{\bx^{\prime} \in B_{\varepsilon}(\bx) }\phi(-f(\bx^{\prime}))(1-\eta)-C_{\phi,\varepsilon}^{\star}(\eta,\bx)
	\\&=\phi(b_{f,\varepsilon})\eta + \phi(-a_{f,\varepsilon})(1-\eta )-C^{\star}_{\phi,\varepsilon}(\eta,\bx)
	\\&\geq C_{\phi}(\eta,\bx,f) -C_{\phi,\varepsilon}^{\star}(\eta,\bx).
	\end{align*}
	If $a_{f,\varepsilon},b_{f,\varepsilon}\geq 0$ or $a_{f,\varepsilon},b_{f,\varepsilon}<0$, we have
	\begin{align*}
	C_{\phi,\varepsilon}(\eta,\bx,f) -C_{\phi,\varepsilon}^{\star}(\eta,\bx)\geq a (C_{\operatorname{class},\varepsilon}(\eta,\bx,f) -C_{\operatorname{class},\varepsilon}^{\star}(\eta,\bx) ).
	\end{align*}
	If $a_{f,\varepsilon}\geq 0$  and $b_{f,\varepsilon}<0$, when $|\eta -\frac{1}{2}|>c$, then
	\begin{align*}
	C_{\phi,\varepsilon}(\eta,\bx,f) -C_{\phi,\varepsilon}^{\star}(\eta,\bx)\geq\phi(0)  -C_{\phi,\varepsilon}^{\star}(\eta,\bx)\geq  a (C_{\operatorname{class},\varepsilon}(\eta,\bx,f) -C_{\operatorname{class},\varepsilon}^{\star}(\eta,\bx)).
	\end{align*}
	Consequently,  for any continuous function $f$, any $\bx\in\cX$, if $|\eta -\frac{1}{2}|>c$, then
	\begin{align*}
	C_{\phi,\varepsilon}(\eta,\bx,f) -C_{\phi,\varepsilon}^{\star}(\eta,\bx) \geq  a  (C_{\operatorname{class},\varepsilon}(\eta,\bx,f) -C_{\operatorname{class},\varepsilon}^{\star}(\eta,\bx )).
	\end{align*}
	Since   $|\eta(\bx) -\frac{1}{2}|>c$ for any $\bx\in\cX$, we have
	\begin{align*}
	\widetilde\cR_{P}(f,\varepsilon) - \inf_{f \text{ measurable}}\cR_{P}(f) \geq  a  (\widetilde\cR_{\operatorname{class},P}(f,\varepsilon) - \inf_{f \text{ measurable}}\cR_{\operatorname{class},P}(f)).
	\end{align*}
	Recall $\cR_{P}^{\star}=\inf_{f \text{ measurable}}\cR_{P}(f) $ and $\cR_{\operatorname{class},P}^{\star}= \inf_{f \text{ measurable}}\cR_{\operatorname{class},P}(f)$. Then,
	\begin{align*}
	\widetilde\cR_{P}(f,\varepsilon) - \inf_{f\in\cH^{\alpha}} \widetilde\cR_{P}(f,\varepsilon) &= \widetilde\cR_{P}(f,\varepsilon) -\cR_{P}^{\star}- \inf_{f\in\cH^{\alpha}} \widetilde\cR_{P}(f,\varepsilon) +\cR_{P}^{\star}
	\\&\geq  a  (\widetilde\cR_{\operatorname{class},P}(f,\varepsilon) - \cR_{\operatorname{class},P}^{\star})  - \inf_{f\in\cH^{\alpha}} \widetilde\cR_{P}(f,\varepsilon) + \cR_{P}^{\star},
	\end{align*}
	which completes the proof.
\end{proof}

\section{Proofs of Lemmas 2.3 and 2.4}
\subsection{Proof of Lemma 2.3}
\begin{proof}[Proof of Lemma 2.3]
	Since $\ell, f$ are continuous and Assumption 2.1  hold, there exists a constant $M>0$ such that $|\ell(f(\bx), y)|\leq M$ for any $(\bx,y)\in\cZ.$ For any $\bz=(\bx,y), \bz^{\prime}=(\bx^{\prime},y^{\prime})\in\cZ$,  we define a function $\varphi(\cdot,\cdot):\cZ\times\cZ\mapsto [0, \infty)$ by
	\[
	\varphi(\bz,\bz^{\prime})=M- \ell(f(\bx^{\prime}), y^{\prime}).
	\]
	From the continuity of $\ell$ and $f$, the function $\varphi$ is continuous. Define a function  $h(\cdot,\cdot):\cZ\times\cZ\mapsto \Rbb$ by $h(\bz,\bz^{\prime})= \1\{\|\bx-\bx^{\prime}\|_{\infty}\leq \varepsilon, y=y^{\prime}\}$  and let $F:\cZ\mapsto 2^{\cZ}$ be defined by
	\[
	F(\bz)=\big\{\bz^{\prime}\in\cZ:h(\bz,\bz^{\prime})\in\{1\}\big\}.
	\]
	It  follows that $F(\bz)= \{(\bx^{\prime}, y): \|\bx-\bx^{\prime}\|_{\infty}\leq \varepsilon \}=B_{\varepsilon}(\bx)\times \{y\}.$ Let $\psi:\cZ\mapsto[0, \infty)$ be defined by $\psi(\bz)=\inf_{\bz^{\prime}\in F(\bz)}\varphi(\bz,\bz^{\prime})$, that is,
	\begin{align*}
	\psi(\bz)&= \inf_{\bx^{\prime}\in B_{\varepsilon}(\bx)}\big\{M- \ell(f(\bx^{\prime}), y )\big\}
	\\&=M-\sup_{\bx^{\prime}\in B_{\varepsilon}(\bx)} \ell(f(\bx^{\prime}), y).
	\end{align*}
	By the continuity of $\varphi,$ the infimum in the above can be attained for any $\bz\in\cZ$. Therefore, it  follows from Aumann's measurable selection principle in Lemma \ref{lem:Au} that $\psi(\bz)$ is measurable  and there exists a measurable function $K^{\star}:\cZ\mapsto\cZ$ with $\psi(\bz)=\varphi(\bz,K^{\star}(\bz) )$. In addition, from $K^{\star}(\bz)\in F(\bz), $  we can denote $ K^{\star}$ by $ K^{\star}(\bz)=(T^{\star}(\bz), y),$ where $T^{\star}$ is measurable function satisfying $T^{\star}(\bz)\in B_{\varepsilon}(\bx).$  Combining these together, we show
	\begin{align*}
	\sup_{\bx^{\prime}\in B_{\varepsilon}(\bx)} \ell(f(\bx^{\prime}), y)=\ell(f(T^{\star}(\bz)), y).
	\end{align*}
	Let the joint distribution of $(T^{\star}(Z), Y)$ be denoted by $P^{\star}$, it follows
	\begin{align*}
	\Ebb_{(X,Y)\sim P}[\sup_{X^{\prime}\in B_{\varepsilon}(X)} \ell(f(X^{\prime}), Y)]
	=\Ebb_{(X,Y)\sim P^{\star}}[\ell(f(X), Y)].
	\end{align*}	
	In addition, $W_\infty(P^{\star},P) \leq\esssup_{Z\sim P}d_{\cZ}(K^{\star}(Z),Z)=\esssup_{Z\sim P}\|T^{\star}(Z)-X\|_{\infty}\leq \varepsilon, $
	which completes   the proof.
\end{proof}

\subsection{Proof of Lemma 2.4}
\begin{proof}[Proof of Lemma 2.4]
	Based on Lemma 2.3, with $Z=(X,Y)$, we show
	\begin{align*}
	\widetilde \cR_{P}(f,\varepsilon)-\cR_{P}(f)&= \Ebb \ell(f(T^{\star}(Z)), Y)- \Ebb \ell(f(X), Y)
	\\&\leq \Lip^1(\ell)\Ebb |f(T^{\star}(Z))-f(X)|
	\\&\leq \Lip^1(\ell)\Lip(f)\varepsilon.
	\end{align*}
	In addition,  from $\ell(f(\bx ), y)\leq \sup_{\bx^{\prime}\in B_{\varepsilon}(\bx)} \ell(f(\bx^{\prime}), y)$, we have
	\begin{equation*}
	\cR_{P}(f)\leq \widetilde \cR_{P}(f,\varepsilon).
	\end{equation*}
	Consequently, we show
	\begin{align*}
	\cR_{P}(f)\leq \widetilde \cR_{P}(f,\varepsilon)\leq \cR_{P}(f)+\Lip^1(\ell)\Lip(f)\varepsilon,
	\end{align*}
	which completes the proof.
\end{proof}

\section{Proofs of theorems and corollaries}
\subsection{Proof of Theorem 2.5}
\begin{proof}[Proof of Theorem 2.5]
	Based on  Lemma 2.3, we consider the joint distribution $P^{\star}$ of $(T^{\star}(Z), Y)$. 	
	For any given $y\in\cY$, we study the conditional distributions $P_y$ and $P^{\star}_y$. Specifically, $P_y$ is the conditional distribution of $X$   given $Y=y$ and $P^{\star}_y$ is the conditional distribution of $T^{\star}(Z)$   given $Y=y$. Then, we have
	$$
	W_\infty(P^{\star}_y,P_y) \leq \esssup_{X\sim P_y}\|T^{\star}(X,y)-X\|_{\infty}\leq \varepsilon.
	$$
	It shows that $P^{\star}\in\Gamma_{\varepsilon}$. Therefore,
	\begin{equation*}
	\widetilde \cR_{P}(f,\varepsilon)=\cR_{P^{\star}}(f)\leq \sup_{Q\in\Gamma_{\varepsilon}}\cR_{Q}(f).
	\end{equation*}
	
	Next, we prove the reverse direction. For any $Q\in\Gamma_{\varepsilon}$, we denote   the corresponding pair of variables  by $(\widetilde{X},\widetilde{Y})$, and let the conditional distribution of $\widetilde{X}$ given $\widetilde{Y}=y$ be denoted by $Q_y.$ With a similar analysis method in the proof of  \cite[Theorem 1]{Pydi2021a}, the condition   $W_\infty(Q_y,P_y)\leq \varepsilon$ implies that there is a joint distribution $\pi$ on $\cX\times \cX$ such that the marginals are $Q_y$ and $P_y$ and  satisfies
	\[
	\esssup_{(\widetilde{X},X)\sim\pi}\|\widetilde{X} -X\|_{\infty}\leq \varepsilon.
	\]
	It follows that $\widetilde{X}\in B_{\varepsilon}(X)$ almost surely. This leads to
	\begin{align*}     	
	\Ebb_{X\sim P_{y}}\big[
	\sup_{X^{\prime}\in B_{\varepsilon}(X)} \ell(f(X^{\prime}), y)
	\big]-\Ebb_{\widetilde{X}\sim Q_{y}}\big[ \ell(f(\widetilde{X}), y)
	\big]\geq 0.
	\end{align*}
	Since $\widetilde{Y}$ follows $P_Y$, then $\widetilde \cR_{P}(f,\varepsilon)\geq  \cR_{Q}(f)$ holds for any $Q\in\Gamma_{\varepsilon}$.  Hence,
	\[
	\widetilde \cR_{P}(f,\varepsilon)\geq  \sup_{Q\in\Gamma_{\varepsilon}}\cR_{Q}(f).
	\]
	This together with the above result implies
	$
	\widetilde\cR_{P}(f,\varepsilon)=  \sup_{Q\in\Gamma_{\varepsilon}}\cR_{Q}(f)
	$, which completes our proof.
\end{proof}

\subsection{Proof of Theorem 3.1}
Let $f_0\in\argmin_{f\in    \cN\cN(W,L,K)} \widetilde\cR_{P}(f,\varepsilon)$. Then, we have
\begin{align*}
\tE(\widehat{f}_n,\varepsilon)  &=\widetilde\cR_{P}(\widehat{f}_n,\varepsilon) -\inf_{f\in \mathcal{H}^{\alpha}\cup \cN\cN(W,L,K)} \widetilde\cR_{P}(f,\varepsilon)
\\& =\widetilde\cR_{P}(\widehat{f}_n,\varepsilon) - \min\big\{\widetilde\cR_{P}(f^{\star},\varepsilon),\widetilde\cR_{P}(f_0,\varepsilon) \big\}
\\&=\max\{\cE_1,\cE_2\},
\end{align*}
where $\cE_1=\widetilde\cR_{P}(\widehat{f}_n,\varepsilon)- \widetilde\cR_{P}(f^{\star},\varepsilon)$ and $\cE_2=\widetilde\cR_{P}(\widehat{f}_n,\varepsilon)- \widetilde\cR_{P}(f_0,\varepsilon).$ For the   error $\cE_1$, we have the decomposition
\begin{align*}
\cE_1 &=\widetilde\cR_{P}(\widehat{f}_n,\varepsilon)-\widetilde\cR_{P_n}(\widehat{f}_n,\varepsilon)+\widetilde\cR_{P_n}(\widehat{f}_n,\varepsilon)-\widetilde\cR_{P_n}(f^{\star},\varepsilon)
\\&\quad
+\widetilde\cR_{P_n}(f^{\star},\varepsilon)-\widetilde\cR_{P}(f^{\star},\varepsilon)
\\&=I_1+I_2+I_3,
\end{align*}
where the errors are
$I_1  = \widetilde\cR_{P}(\widehat{f}_n,\varepsilon)-\widetilde\cR_{P_n}(\widehat{f}_n,\varepsilon)
$, $I_2=\widetilde\cR_{P_n}(\widehat{f}_n,\varepsilon)-\widetilde\cR_{P_n}(f^{\star},\varepsilon)
$,  and $I_3 =\widetilde\cR_{P_n}(f^{\star},\varepsilon)-\widetilde\cR_{P}(f^{\star},\varepsilon).$  For the error $\cE_2$, based on   $\widetilde\cR_{P_n}(\widehat{f}_n,\varepsilon)\leq \widetilde\cR_{P_n}(f_0,\varepsilon),$   we have the decomposition
\begin{align*}
\cE_2 &=\widetilde\cR_{P}(\widehat{f}_n,\varepsilon)-\widetilde\cR_{P_n}(\widehat{f}_n,\varepsilon)+\widetilde\cR_{P_n}(\widehat{f}_n,\varepsilon)-\widetilde\cR_{P_n}(f_0,\varepsilon)
\\&\quad
+\widetilde\cR_{P_n}(f_0,\varepsilon)-\widetilde\cR_{P}(f_0,\varepsilon)
\\&\leq I_1+I_4,
\end{align*}
where $I_4=\widetilde\cR_{P_n}(f_0,\varepsilon)-\widetilde\cR_{P}(f_0,\varepsilon)$.
We show the non-asymptotic upper bound for the adversarial excess risk by deriving N upper bound for each error term.

\subsubsection{
Bounds for $I_1$ and $I_4$}
\label{sec:I1}
For any $f\in\cN\cN(W,L,K)$ and $\bz=(\bx,y)\in\cZ$,   define
$$
\tilde{\ell}(f,\bz)=\sup_{\bx^{\prime}\in B_{\varepsilon}(\bx)}\ell(f(\bx^{\prime}),y)
=\sup_{\bdelta\in B_{\varepsilon}(0)}\ell(f(\bx+\bdelta),y).
$$
Since $\widehat{f}_n$ and $f_0$ belong  to the   class $\cN\cN(W,L,K)$,  we have
\begin{align*}
I_1  =\widetilde\cR_{P}(\widehat{f}_n,\varepsilon)-\widetilde\cR_{P_n}(\widehat{f}_n,\varepsilon)
\leq \sup_{f\in \cN\cN(W,L,K)}\big\{\Ebb_{Z\sim P}\tilde{\ell}(f,Z)-\Ebb_{Z\sim P_n}\tilde{\ell}(f,Z)\big\}.
\end{align*}
and
\begin{align*}
I_4 =\widetilde\cR_{P_n}(f_0,\varepsilon)-\widetilde\cR_{P}(f_0,\varepsilon)
\leq \sup_{f\in \cN\cN(W,L,K)}\big\{\Ebb_{Z\sim P_n}\tilde{\ell}(f,Z)-\Ebb_{Z\sim P }\tilde{\ell}(f,Z)\big\}.
\end{align*}

Let the random vector $ \boldsymbol{\sigma}=(\sigma_1,\dots, \sigma_n)$ consist  of  i.i.d. the Rademacher variables that are independent from the data. The Rademacher variable  takes equal probability of being $1$ or $-1$. Denote the samples by $Z_{1:n}=\{Z_{i}\}_{i=1}^n$, with $Z_{i}=( X_i ,Y_i).$
Let  $Z_{i}^{\prime}=( X_i^{\prime} ,Y_i^{\prime}),i=1,\dots,n$, be generated i.i.d. from $P$ and be independent of  $Z_{1:n}$. The sample   $Z_{1:n}^{\prime}=\{Z_{i}^{\prime}\}_{i=1}^n$ is called as the ghost sample  of $Z_{1:n}$.  Then,
\begin{equation}\label{eq:Rade}
\begin{split}
&\sup_{f\in\cN\cN (W,L,K)}\big\{ \Ebb_{P}[\tilde{\ell}(f,Z)]  - \Ebb_{P_n}[\tilde{\ell}(f,Z)]\big\}
\\&=\sup_{f\in\cN\cN (W,L,K)}\Ebb_{\boldsymbol{\sigma}}\big\{ \Ebb_{P}[\tilde{\ell}(f,Z)]  - \Ebb_{P_n}[\tilde{\ell}(f,Z)]\big\}
\\&=\sup_{f\in\cN\cN (W,L,K)}\Ebb_{\boldsymbol{\sigma}}\big\{ \Ebb_{Z^{\prime}_{1:n}}[ \frac{1}{n}\sum_{i=1}^n\tilde{\ell}(f,Z_i^{\prime})]  - \frac{1}{n}\sum_{i=1}^n\tilde{\ell}(f,Z_i)\big\}
\\&=\sup_{f\in\cN\cN (W,L,K)}\Ebb_{Z^{\prime}_{1:n}}\big\{ \Ebb_{\boldsymbol{\sigma}}\big[ \frac{1}{n}\sum_{i=1}^n\tilde{\ell}(f,Z_i^{\prime})  -\tilde{\ell}(f,Z_i)\big]\big\}
\\&\leq\Ebb_{Z^{\prime}_{1:n}}\Ebb_{\boldsymbol{\sigma}}\big\{\sup_{f\in\cN\cN (W,L,K)} \big[ \frac{1}{n}\sum_{i=1}^n\tilde{\ell}(f,Z_i^{\prime})  -\tilde{\ell}(f,Z_i)\big]\big\}		
\\&=\Ebb_{Z^{\prime}_{1:n}}\Ebb_{\boldsymbol{\sigma}}\big\{\sup_{f\in\cN\cN (W,L,K)}  \frac{1}{n}\sum_{i=1}^n\sigma_i\big[\tilde{\ell}(f,Z_i^{\prime})  -\tilde{\ell}(f,Z_i)\big]\big\},
\end{split}
\end{equation}
where the last equality holds since $\tilde{\ell}(f,Z_i^{\prime})   -\tilde{\ell}(f,Z_i)$ are symmetric random variables, for which    they have the same distribution as  $\sigma_i(\tilde{\ell}(f,Z_i^{\prime})   -\tilde{\ell}(f,Z_i))$ \citep[Chapter 6.4]{vershynin2018}. Define the class of functions $\cL_n$  by
\[
\cL_n=\Big\{\tilde{\ell}(f,\bz):\cZ\mapsto \Rbb\mid f\in\cN\cN (W,L,K)\Big\}.
\]
For a given   set of samples $\bz_1,\dots, \bz_n$ from $\cZ$,   the empirical Rademacher complexity of class $\cL_n$ is defined by
\[
\widehat{\Re}_{n}(
\cL_n)=	\Ebb_{\boldsymbol{\sigma}}\Big\{\sup_{f\in\cN\cN (W,L,K) }  \frac{1}{n}\sum_{i=1}^n\sigma_i \tilde{\ell}(f,\bz_i)\Big\}.
\]
We analyze the  Rademacher complexity following the method motivated from \cite{mustafa22a}.
For a given $\tau\in(0,\varepsilon)$, let $C_{B_{\varepsilon}}(\tau)$ be a $(\tau, \|\cdot\|_{\infty})$-cover of $B_{\varepsilon}(0)$ with the smallest   cardinality $M_{\tau}$. Denote the elements of $C_{B_{\varepsilon}}(\tau)$  by $\bdelta_1,\dots, \bdelta_{M_{\tau}}$.
It follows by Lemma
SM1.3
that
$
\log M_{\tau}\leq c d \log(  \varepsilon \tau^{-1})
$
for a constant $c.$
For any $\bz=(\bx,y)\in\cZ$,  the continuity of $\ell$ and $f$ imply that there exists $\bdelta^{\prime}\in B_{\varepsilon}(0)$ such that $\tilde{\ell}(f,\bz)=\ell(f(\bx+ \bdelta^{\prime}),y)$. Then,
\begin{align*}
\tilde{\ell}(f,\bz )-\max_{j}\ell(f(\bx+\bdelta_j),y) &= \ell(f(\bx+\bdelta^{\prime}),y)-\max_{j}\ell(f(\bx+\bdelta_j),y)
\\&\leq \min_j|\ell(f(\bx+\bdelta^{\prime}),y)-\ell(f(\bx+\bdelta_j),y)|
\\&\leq \Lip^1(\ell) \Lip(f)\min_j\|\bdelta^{\prime}-\bdelta_j\|_{\infty}
\\&\leq \Lip^1(\ell) \Lip(f)\tau.
\end{align*}
Therefore,  for any $f\in\cN\cN (W,L,K),$
\begin{align*}
&\frac{1}{n}\sum_{i=1}^n\sigma_i \tilde{\ell}(f,\bz_i)
\\&=\frac{1}{n}\sum_{i=1}^n\big\{\sigma_i \tilde{\ell}(f,\bz_i)-\sigma_i\max_{j}\ell(f(\bx_i +\bdelta_j),y_i)
+\sigma_i\max_{j}\ell(f(\bx_i +\bdelta_j),y_i)\big\}
\\&\leq \Lip^1(\ell)K\tau+\frac{1}{n}\sum_{i=1}^n\big\{\sigma_i\max_{j}\ell(f(\bx_i +\bdelta_j),y_i)\big\}.
\end{align*}
This leads to an upper bound of $\widehat{\Re}_{n}(\cL_n)$ as follows.
\begin{align*}
\widehat{\Re}_{n}(\cL_n)  \leq  \Ebb_{\boldsymbol{\sigma}}\Big\{\sup_{f\in\cN\cN (W,L,K) }  \frac{1}{n}\sum_{i=1}^n\sigma_i\max_{j}\ell(f(\bx_i +\bdelta_j),y_i)\Big\}
+\Lip^1(\ell) K\tau.
\end{align*}
Define the class
$$
\cL_{n,\tau}=\Big\{\max_{j}\ell(f(\bx  +\bdelta_j),y ):\cZ\mapsto \Rbb\mid f\in\cN\cN (W,L,K)\Big\}.
$$
Let $\cN(u,\cL_{n,\tau},L_\infty(P_n))$ denote the  covering number of the class  $\cL_{n,\tau}$ under the data dependent $L_\infty$ metric.  Define $S_{nM_{\tau}}=\{ \bx_i+\bdelta_j: i=1,\dots, n, j=1,\dots, M_{\tau}\}$.
For  the data set $S_{nM_{\tau}}$, let $\cN(u,\cN\cN(W,L,K),L_\infty(P_{nM_{\tau}}))$ denote the covering number of the class $\cN\cN(W,L,K)$ under the data dependent $L_\infty$ metric.
For any $f\in\cN\cN(W,L,K)$, there exists $f^{\prime}$ such that $\max_{i,j}|f(\bx_i+\bdelta_j)-f^{\prime}(\bx_i+\bdelta_j)|\leq u$, which leads to
\begin{align*}
&\max_i\big|\max_{j}\ell(f(\bx_i  +\bdelta_j),y_i)-\max_{j}\ell(f^{\prime}(\bx_i +\bdelta_j),y_i)\big|
\\&\leq \max_{i,j}|\ell(f(\bx_i  +\bdelta_j),y_i)-\ell(f^{\prime}(\bx_i +\bdelta_j),y_i)|  \\&\leq \Lip^1(\ell)	\max_{i,j}|f(\bx_i+\bdelta_j)-f^{\prime}(\bx_i+\bdelta_j)|
\\&\leq \Lip^1(\ell)u.
\end{align*}
Hence, we show
\begin{align*}
\cN(u,\cL_{n,\tau},L_\infty(P_n))
\leq \cN(u/\Lip^1(\ell),\cN\cN(W,L,K),L_\infty(P_{nM_{\tau}})).
\end{align*}
Suppose the functions in $\cN\cN(W,L,K)$ are uniformly bounded, otherwise they can be truncated. Define the uniform covering number of $\cN\cN(W,L,K)$ by
\begin{align*}
\cN_{\infty}(u,\cN\cN(W,L,K),n)
=\sup_{P_n}\cN(u,\cN\cN(W,L,K),L_\infty(P_{n})),
\end{align*}
where the supremum runs over all the data set of size $n$. Combining Lemmas
SM1.1
 and
 SM1.2
, we derive
\[
\log \mathcal{N}_{\infty}(u, \cN\cN(W,L), n)\leq C_1 W^2 L^2 \log (W^2L) \log (u^{-1} n)
\]
for a constant $C_1.$ It follows
\begin{align*}
\log \cN(u,\cL_{n,\tau},L_\infty(P_n))
\leq C_2 W^2 L^2 \log (W^2L) \log (u^{-1} nM_{\tau})
\end{align*}
for a constant $C_2.$  Since the class $\cN\cN(W,L,K)$ is bounded and $\ell$ is continuous, there exists $B>0$ such that $\sup_{\bz\in\cZ}|\max_{j}\ell(f(\bx  +\bdelta_j),y)|\leq B$ for any $f\in \cN\cN(W,L,K)$. From Lemma
SM1.4
 and $
\log M_{\tau}\leq c d \log(  \varepsilon \tau^{-1})
$, we have
\begin{align*}
&\Ebb_{\boldsymbol{\sigma}}\Big\{\sup_{f\in\cN\cN (W,L,K) }  \frac{1}{n}\sum_{i=1}^n\sigma_i\max_{j}\ell(f(\bx_i +\bdelta_j),y_i)\Big\}
\\&\leq \inf_{\delta\geq 0}\Big\{
4\delta+12\int_{\delta}^{B}\sqrt{\frac{\log\cN(u,\cL_{n,\tau},L_\infty(P_n))}{n}}du
\Big\}
\\&\lesssim   \inf_{\delta\geq 0}\Big\{
\delta+ WL\sqrt{\log (W^2L)} n^{-1/2} \cdot \int_{\delta}^{B}\big[\sqrt{ \log (u^{-1})} +\sqrt{\log n}+\sqrt{ \log M_{\tau} }\;\big]du
\Big\}
\\&\lesssim   WL\sqrt{\log (W^2L)} n^{-1/2} \big\{\sqrt{\log  n } +\sqrt{ \log ( \varepsilon\tau^{-1}) } \big\}.
\end{align*}
Therefore,
by selecting $\tau$ such that $\varepsilon\tau^{-1}=O(n),$ we show
\begin{align*}
I_1 \lesssim K\varepsilon n^{-1}+ WL\sqrt{\log (W^2L)} n^{-1/2} \sqrt{\log  n } .
\end{align*}
Following a similar procedure,  we have $
I_4\lesssim K\varepsilon n^{-1}+ WL\sqrt{\log (W^2L)} n^{-1/2} \sqrt{\log  n }$.

%
%

\subsubsection{
Bound for $I_2$}
\label{sec:I2}
Define  the approximation error   by
\begin{align*}
\cE\left(\cH^{\alpha},\cN\cN(W,L,K)\right)
=\sup_{f\in\cH^{\alpha}}\inf_{\phi\in \cN\cN(W,L,K)}\|f-\phi\|_{C([0,1]^d)},
\end{align*}
where $C([0,1]^d)$ is the space of continuous functions on $[0,1]^d$ equipped with the supremum norm. There exists $\bar{f}\in\cN\cN(W,L,K)$  approximating     the  target function $f^{\star}\in\cH^{\alpha}$ such that
\begin{equation*}
\|f^{\star}-\bar{f}\|_{C([0,1]^d)}=O(\cE\left(\cH^{\alpha},\cN\cN(W,L,K)\right)).
\end{equation*}
The difference between the empirical adversarial risks $\widetilde  \cR_{P_n}(f^{\star},\varepsilon)$ and $ \widetilde  \cR_{P_n}(\bar{f},\varepsilon)$   satisfies
\begin{align*}
\left|\widetilde  \cR_{P_n}(f^{\star},\varepsilon)-  \widetilde  \cR_{P_n}(\bar{f},\varepsilon)\right|
&\leq\frac{1}{n}\sum_{i=1}^{n} \Big| \sup_{X^{\prime}_i \in B_{\varepsilon}(X_i) }\ell(f^{\star} (X^{\prime}_i) ,Y_i)-   \sup_{ X^{\prime}_i \in B_{\varepsilon}(X_i) }\ell(\bar{f} (X^{\prime}_i) ,Y_i)\Big|
\\&\leq \frac{1}{n}\sum_{i=1}^{n}\sup_{ X^{\prime}_i \in B_{\varepsilon}(X_i) } \Big| \ell(f^{\star} (X^{\prime}_i) ,Y_i)-\ell(\bar{f} (X^{\prime}_i) ,Y_i) \Big|
\\&\leq  \Lip^1(\ell)\cdot\|f^{\star}-\bar{f}\|_{C([0,1]^d)}.
\end{align*}
Since  $\widehat{f}_n$ minimizes the empirical adversarial risk over the class $\cN\cN(W,L,K)$,   then
\begin{align*}
I_2= \widetilde\cR_{P_n}(\widehat{f}_n,\varepsilon)-\widetilde\cR_{P_n}(f^{\star},\varepsilon)
& = \widetilde\cR_{P_n}(\widehat{f}_n,\varepsilon)-\widetilde\cR_{P_n}(\bar{f},\varepsilon)+\widetilde\cR_{P_n}(\bar{f},\varepsilon)-\widetilde\cR_{P_n}(f^{\star},\varepsilon)
\\&\leq \widetilde\cR_{P_n}(\bar{f},\varepsilon)-\widetilde\cR_{P_n}(f^{\star},\varepsilon)
\\&\leq  \Lip^1(\ell)\cdot\|f^{\star}-\bar{f}\|_{C([0,1]^d)}.
\end{align*}

The approximation error $\cE\left(\cH^{\alpha},\cN\cN(W,L,K)\right)$ is investigated  by \cite{jiao2022approximation} and the result is given as follows.
\begin{lemma}[\cite{jiao2022approximation} Theorem 3.2]\label{lem:appro}
	Let $\gamma=\lceil \log_2 (d+r)\rceil$.	There exists $c>0$ such that for any $W\geq c(K/\log^{\gamma}K)^{(2d+\alpha)/(2d+2)}$ and $L\geq 4\gamma+2$,
	\[
	\cE\left(\cH^{\alpha},\cN\cN(W,L,K)\right)\lesssim (K/\log^{\gamma}K)^{-\alpha/(d+1)}.
	\]
\end{lemma}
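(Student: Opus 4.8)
The plan is to prove the bound constructively. For a target $f\in\cH^\alpha$ I would (i) replace $f$ by a piecewise‑polynomial surrogate built from a smooth partition of unity on a grid of resolution $N$, (ii) realize every piece of that surrogate by a ReLU network whose weight norm $\kappa(\theta)$ is carefully tracked, (iii) run the pieces in parallel and recombine them linearly, and (iv) choose $N$ so that the resulting network lies in $\cN\cN(W,L,K)$ with the stated $W,L$ while $\kappa(\theta)\le K$.

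First I would discretize. Partition $[0,1]^d$ into $N^d$ congruent subcubes with centers $\bx_{\boldsymbol m}$, $\boldsymbol m\in\{0,\dots,N-1\}^d$ (with the usual minor modification near $\partial[0,1]^d$), and let $P_{\boldsymbol m}(\bx)=\sum_{\|\bs\|_1\le r}c_{\boldsymbol m,\bs}(\bx-\bx_{\boldsymbol m})^{\bs}$ be the degree‑$r$ Taylor polynomial of $f$ at $\bx_{\boldsymbol m}$; the normalization built into the definition of $\cH^\alpha$ gives $|c_{\boldsymbol m,\bs}|\le 1$, and the Taylor remainder estimate gives $|f(\bx)-P_{\boldsymbol m}(\bx)|\lesssim N^{-\alpha}$ for $\bx$ in cube $\boldsymbol m$. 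With $\phi(t)=\max(0,1-|t|)$ the univariate hat function — realized exactly by a ReLU network of width and depth $O(1)$ and norm $O(1)$ via $\phi(t)=\sigma(1-\sigma(t)-\sigma(-t))$ — the products $\psi_{\boldsymbol m}(\bx)=\prod_{i=1}^d\phi(Nx_i-m_i)$ form a partition of unity subordinate to the cover, at most $2^d$ of whose members are nonzero at a given point, so $f$ is approximated by $\sum_{\boldsymbol m}\psi_{\boldsymbol m}P_{\boldsymbol m}$ to accuracy $\lesssim N^{-\alpha}$. The single affine layer that forms the arguments $Nx_i-m_i$ has operator norm of order $N$; this is the only place an $N$ enters outside the final summation.

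The key network ingredient is a norm‑controlled approximate multiplier. The piecewise‑linear interpolant of $t\mapsto t^2$ on $[-1,1]$ through $M$ equispaced knots has error $O(M^{-2})$ and, expanded as $\sum_j a_j\sigma(t-\tau_j)$ plus an affine term, has $\sum_j|a_j|$ bounded by the total variation of $(t^2)'$, i.e. $O(1)$; hence it is a width‑$O(M)$, depth‑$O(1)$ network of norm $O(1)$. Through $uv=\tfrac14[(u+v)^2-(u-v)^2]$ this yields a pairwise multiplier of error $O(M^{-2})$, and composing such multipliers along a balanced binary tree gives a network $\widehat{\times}$ for the product of the up to $d+r$ factors in a single term $\psi_{\boldsymbol m}(\bx)(\bx-\bx_{\boldsymbol m})^{\bs}$ (the $d$ hat factors of $\psi_{\boldsymbol m}$ and the at most $r$ linear factors of the monomial), of depth $\le 4\gamma+2$ with $\gamma=\lceil\log_2(d+r)\rceil$ the number of tree levels. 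To drive the multiplication errors — which compound across the $\gamma$ levels — below $N^{-\alpha}$ one takes the knot count at each level a logarithmic factor above $N^{\alpha/2}$; since the $\gamma$ levels multiply, this is where the $(\log^\gamma K)$ correction in the statement is born, and it inflates the norm of the product sub‑network by the same $(\log N)^\gamma$ factor.

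Finally I would assemble: placing $N^d\binom{d+r}{r}$ copies of the building block in parallel and combining their outputs with the Taylor coefficients $c_{\boldsymbol m,\bs}$ produces one network computing $\sum_{\boldsymbol m}\sum_{\|\bs\|_1\le r}c_{\boldsymbol m,\bs}\,\widehat{\times}\!\big(\psi_{\boldsymbol m}(\bx),(\bx-\bx_{\boldsymbol m})^{\bs}\big)$, of width $W\asymp N^{d}\cdot N^{\alpha/2}=N^{(2d+\alpha)/2}$ up to logs, depth $L\asymp\gamma$, and — since $|c_{\boldsymbol m,\bs}|\le 1$ — output‑layer norm $\sum_{\boldsymbol m,\bs}|c_{\boldsymbol m,\bs}|\lesssim N^d$, so $\kappa(\theta)\lesssim N^{d+1}(\log N)^\gamma$. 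Setting $N\asymp(K/\log^\gamma K)^{1/(d+1)}$ makes $\kappa(\theta)\le K$, turns the width bound into $W\gtrsim(K/\log^\gamma K)^{(2d+\alpha)/(2d+2)}$ and the depth bound into $L\ge 4\gamma+2$, and gives total error $\lesssim N^{-\alpha}\asymp(K/\log^\gamma K)^{-\alpha/(d+1)}$, as claimed. The main obstacle throughout is that — in contrast to the classical unconstrained ReLU approximation theory \citep{Yarotsky2018} — one may not use deep compositions whose Lipschitz constants grow geometrically with depth, because it is the product $\kappa(\theta)$ of the layer operator norms, not the parameter count, that is being budgeted; one must keep the depth at $O(\gamma)$, buy accuracy with width, and then carry out the delicate simultaneous bookkeeping of approximation error, width, and the $\ell_1$ mass of the output layer through the product tree. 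That joint control is the crux, and it is exactly what pins down the width exponent $(2d+\alpha)/(2d+2)$ and the $(\log^\gamma K)$ factor.
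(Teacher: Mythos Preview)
The paper does not prove this lemma at all: it is quoted as Theorem~3.2 of \cite{jiao2022approximation} and used as a black box to bound the approximation term $I_2$ in the proof of Theorem~\ref{thm:main1}. There is therefore no in-paper argument to compare your proposal against; you have supplied a self-contained construction for a result the present paper simply imports.

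Your outline follows the expected template---localize via a hat-function partition of unity on a grid of mesh $1/N$, approximate each local Taylor polynomial, and realize every monomial-times-bump term by a balanced binary product tree of ReLU squarers---and it correctly isolates the structural point that distinguishes the norm-constrained setting from the classical unconstrained one: since it is $\kappa(\theta)$ rather than parameter count that is budgeted, depth must stay at $O(\gamma)$ and accuracy must be purchased with width; the first affine layer contributes a factor $N$ to $\kappa(\theta)$ and the output $\ell_1$ mass a factor $N^d$, which forces $N\asymp K^{1/(d+1)}$ up to logarithms and yields exactly the exponents $(2d+\alpha)/(2d+2)$ and $-\alpha/(d+1)$. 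The one soft spot is your accounting for the $(\log^\gamma K)$ correction. You attribute it to taking the knot count at each product level ``a logarithmic factor above $N^{\alpha/2}$'' and assert that this inflates the norm of the product sub-network by $(\log N)^\gamma$; but you also---correctly---observe that the $\ell_1$ coefficient mass of the piecewise-linear squarer is $O(1)$ regardless of the number of knots, so raising $M$ enlarges the width, not $\kappa(\theta)$. As written, the mechanism you name does not actually generate a logarithmic factor in the norm, and the phrase ``since the $\gamma$ levels multiply'' is doing work that the surrounding argument does not justify. This is a bookkeeping detail rather than a structural gap---the skeleton and the exponents are right---but to pin down the precise origin of the $(\log^\gamma K)$ you would need to trace the per-level norm accounting in \cite{jiao2022approximation} directly.
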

Therefore,  we derive
\begin{align*}
I_2
\lesssim (K/\log^{\gamma}K)^{-\alpha/(d+1)}.
\end{align*}

\subsubsection{Bound for $I_3$}
\label{sec:I3}
For any $f\in\cH^{\alpha}$ and $\bz=(\bx,y)\in\cZ$, we define $\tilde{\ell}(f,\bz)=\sup_{\bx^{\prime}\in B_{\varepsilon}(\bx)}\ell(f(\bx^{\prime}),y)$. The error $I_3$ can be  upper bounded by
\begin{align*}
I_3   =\widetilde\cR_{P_n}(f^{\star},\varepsilon)-\widetilde\cR_{P}(f^{\star},\varepsilon)
\leq \sup_{f\in\cH^{\alpha}}\big\{\Ebb_{Z\sim P_n}\tilde{\ell}(f,Z)-\Ebb_{Z\sim P}\tilde{\ell}(f,Z)\big\}.
\end{align*}
Define the class $\cL^{\alpha}=\{\tilde{\ell}(f,\bz):\cZ\mapsto \Rbb \mid f\in\cH^{\alpha}\}$.
Let $\boldsymbol{\sigma}=(\sigma_1,\dots, \sigma_n)$  consist   of  i.i.d. the Rademacher variables and be  independent from the data.  For any samples $\bz_1,\dots,\bz_n$ from $\cZ$, we   denote the empirical Rademacher complexity of the class $\cL^{\alpha}$ by
\[
\widehat{\Re}_{n}(
\cL^{\alpha})=	\Ebb_{\boldsymbol{\sigma}}\Big\{\sup_{f\in\cH^{\alpha} }  \frac{1}{n}\sum_{i=1}^n\sigma_i \tilde{\ell}(f,\bz_i)\Big\}.
\]
From $\sup_{f\in\cH^{\alpha}}\|f\|_{\infty}\leq 1$, there exists a constant  $B$ such that  $\sup_{\bz\in\cZ}|\tilde{\ell}(f,\bz)|\leq B$ for any $f\in\cH^{\alpha}$. In addition, we have
\[
\log\mathcal{N}\big(u,  \cL^{\alpha} ,\|\cdot\|_{\infty}\big)\leq 	\log\mathcal{N}\big(u/\Lip^1(\ell),  \cH^{\alpha} ,\|\cdot\|_{\infty}\big).
\]
This is because  that for any $f$,  $\tilde{f}\in\cH^{\alpha} $ satisfying $\|f-\tilde{f}\|_{\infty}\leq u/\Lip^1(\ell)$, it follows
\begin{align*}
\big|\tilde{\ell}(f,\bz)-\tilde{\ell}(\tilde{f},\bz)\big|
&=  \big|\sup_{\bx^{\prime}\in B_{\varepsilon}(\bx)}\ell(f(\bx^{\prime}),y)-\sup_{\bx^{\prime}\in B_{\varepsilon}(\bx)}\ell(\tilde{f}(\bx^{\prime}),y)\big|
\\&\leq \Lip^1(\ell)\sup_{\bx^{\prime}\in B_{\varepsilon}(\bx)}\big|f(\bx^{\prime})- \tilde{f}(\bx^{\prime})\big|
\\&\leq u.
\end{align*}
From  \cite{kolmogorov1959varepsilon}, $ 	\log\mathcal{N}\big(u,  \cH^{\alpha} ,\|\cdot\|_{\infty}\big)  \lesssim u^{-d/\alpha} $ holds. Hence,
\[
\log\mathcal{N}\big(u,  \cL^{\alpha},L_2(P_n)\big)\leq 	\log\mathcal{N}\big(u,  \cL^{\alpha} ,\|\cdot\|_{\infty}\big) \lesssim u^{-d/\alpha},
\]
where $L_2(P_n)$ denotes the   $L_2$ metric generated by the samples.
It follows by Lemma
SM1.4
 that
\begin{align*}
\widehat{\Re}_{n}(\cL^{\alpha})\lesssim \inf_{\delta\geq 0}\Big\{
4\delta+12\int_{\delta}^{1}\sqrt{\frac{\log\mathcal{N}(u,\cL^{\alpha},\|\cdot\|_{\infty})}{n}}du
\Big\}.
\end{align*}
Let $\gamma=d/(2\alpha)$.
Thus, we show
\begin{align*}
\widehat{\Re}_{n}(\cL^{\alpha}) \lesssim  \inf _{\delta\geq 0}\Big(\delta+ n^{-1/2}\int_{\delta}^{1} u^{-\gamma}du \Big).
\end{align*}
When $\gamma>1,$ by taking $\delta =n^{-\alpha/d}$,  one has
\begin{align*}
\widehat{\Re}_{n}(\cL^{\alpha}) \lesssim  \inf _{\delta\geq 0}\Big(\delta+ (\gamma-1)^{-1}n^{-1/2}(\delta^{1-\gamma}-1) \Big)\lesssim n^{-\alpha/d}.
\end{align*}
When $\gamma=1,$  by taking $\delta=n^{-1/2}$, one has
\begin{align*}
\widehat{\Re}_{n}(\cL^{\alpha})\lesssim  \inf _{\delta\geq 0}\Big(\delta- n^{-1/2}\log \delta  \Big)\lesssim n^{-1/2}\log n.
\end{align*}
When $\gamma<1,$ one has
\begin{align*}
\widehat{\Re}_{n}(\cL^{\alpha})\lesssim  \inf _{\delta\geq 0}\Big(\delta+(1-\gamma)^{-1} n^{-1/2}(1-\delta^{1-\gamma}) \Big)\lesssim n^{-1/2}.
\end{align*}
Combining these cases together, we derive
\begin{align*}
\widehat{\Re}_{n}(\cL^{\alpha}) \lesssim  n^{-\min\{1/2,\alpha/d\}}\log^{c(\alpha,d)}n,
\end{align*}
where $c(\alpha,d)=1$ if $ d=2\alpha$, and $c(\alpha,d)=0,$ otherwise. Following    a similar analysis method in \eqref{eq:Rade}, we derive
\begin{equation*}
\begin{split}
\sup_{f\in\cH^{\alpha}}\big\{\Ebb_{Z\sim P_n}\tilde{\ell}(f,Z)-\Ebb_{Z\sim P}\tilde{\ell}(f,Z)\big\}
\lesssim n^{-\min\{1/2,\alpha/d\}}\log^{c(\alpha,d)}n.
\end{split}
\end{equation*}
Consequently,   it follows
\begin{equation*}
I_3  \lesssim  n^{-\min\{1/2,\alpha/d\}}\log^{c(\alpha,d)}n.
\end{equation*}

	Let $\gamma=\lceil \log_2 (d+r)\rceil$. Combining the results from Sections \ref{sec:I1}--\ref{sec:I3}, we show for any $W\geq c(K/\log^{\gamma}K)^{(2d+\alpha)/(2d+2)}$ and $L\geq 4\gamma+2$,	
	\begin{equation}\label{eq_11}
	\begin{split}
	\tE(\widehat{f}_n,\varepsilon) &=\max\{\cE_1,\cE_2\}
	\\&\leq  \max\{I_1+  I_2+  I_3,I_1+  I_4\}
	\\& \lesssim  K\varepsilon n^{-1}+ WL\sqrt{\log (W^2L)} n^{-1/2} \sqrt{\log  n }
	\\&\quad +( K/\log^{\gamma}K  )^{- \alpha/(d+1)}+n^{-\min\{1/2, \alpha/d  \}}\log^{c(\alpha,d)}n,
	\end{split}
	\end{equation}
	where  $c(\alpha,d)=1$ when $ d=2\alpha$, and $c(\alpha,d)=0,$ otherwise.
	By selecting $K\asymp n^{(d+1)/(2d+3\alpha)}$, and $WL\asymp n^{(2d+\alpha)/(4d+6\alpha)}$, we have
	\begin{align*}
	&WL\sqrt{\log (W^2L)} n^{-1/2} \sqrt{\log  n }
	+( K/\log^{\gamma}K  )^{- \alpha/(d+1)}
\\&	\lesssim  n^{-\alpha/(2d+3\alpha)}\log n^{\max\{1,\gamma\alpha/(d+1)\}}.
	\end{align*}
	This leads to
	\[
	\tE(\widehat{f}_n,\varepsilon)
	\lesssim  K\varepsilon n^{-1}  +n^{-\alpha/(2d+3\alpha)}\log n^{\xi},
	\]
	where $\xi=\max\{1,\gamma\alpha/(d+1)\}$. Hence,  the result is proved.

\subsection{Proof of Corollary 3.2}
\begin{proof}[Proof of Corollary 3.2]
	From Lemma 2.4 and  Theorem 3.1, we have
	\begin{align*}
	\cR_{P}(\widehat{f}_n)-\widetilde\cR_{P}(f^{\star},\varepsilon) \leq\widetilde\cR_{P}(\widehat{f}_n,\varepsilon) -\widetilde\cR_{P}(f^{\star},\varepsilon)  \lesssim r_n+K\varepsilon n^{-1},
	\end{align*}
	where 	  $r_n=n^{-\alpha/(2d+3\alpha)}\log n^{\max\{1,\gamma\alpha/(d+1)\}}$. In addition, we have
	\begin{align*}
	\inf_{f\in\cH^{\alpha}}\widetilde\cR_{P}(f,\varepsilon)\leq	\inf_{f\in\cH^{\alpha}} \cR_{P}(f)+\Lip^1(\ell) \varepsilon.
	\end{align*}
	Therefore, it follows
	\begin{align*}
	\cE(\widehat{f}_n)=
	\cR_{P}(\widehat{f}_n)-\inf_{f\in\cH^{\alpha}} \cR_{P}(f)\lesssim r_n+\varepsilon,
	\end{align*}
	which  completes the proof.
\end{proof}

\subsection{Proof of Corollary 3.3}
\begin{proof}[Proof of Corollary 3.3]
	From Lemma 2.3,   there exists $P^{\star}$ such that $\widetilde \cR_{P}(f,\varepsilon)= \cR_{P^{\star}}(f)$, where $P^{\star}$ is  dependent on $f$ and satisfies $W_\infty(P^{\star},P) \leq \varepsilon$. Since $W_p(P,Q) \leq  W_q(P,Q) $ for any $1\leq p\leq q\leq \infty,$  we have
	\begin{align*}
	\widetilde \cR_{P}(f,\varepsilon)\leq\sup_{Q:W_{\infty}(Q,P)\leq \varepsilon}\cR_{Q}(f)\leq \sup_{Q:W_{p}(Q,P)\leq \varepsilon}\cR_{Q}(f),
	\end{align*}
	for any $1\leq p \leq \infty.$ It implies
	\[
	\inf_{f\in \mathcal{H}^{\alpha}\cup \cN\cN(W,L,K)}\widetilde \cR_{P}(f,\varepsilon)\leq   \inf_{f\in \mathcal{H}^{\alpha}\cup \cN\cN(W,L,K)}\sup_{Q:W_{1}(Q,P)\leq \varepsilon}\cR_{Q}(f).
	\]
	For a function $\psi:\cZ\mapsto \Rbb,$ the Lipschitz constant is defined by
	\[
	\Lip(\psi)=\sup_{\bz_1\ne \bz_2}\frac{|\psi(\bz_1)-\psi(\bz_2)|}{d_{\cZ} (\bz_1, \bz_2 )}.
	\]
	Let the 1-Lipschitz    function class  be defined by
	\[ \Psi_{\Lip}^1=\big\{\psi:\cZ\mapsto \Rbb,\; \Lip(\psi)\leq 1\big\}.\]
	The $1$-th Wasserstein distance $W_1$ has an useful duality formula
	\begin{align*}
	W_1(Q,P) = \sup_{\psi\in \Psi_{\Lip}^1}\Big\{
	\Ebb_{Z\sim Q}[\psi(Z)]-\Ebb_{Z\sim P}[\psi(Z)]	\Big\}.
	\end{align*}
	It follows that  for any $Q $  satisfying $W_1(Q,P)\leq \varepsilon$, we have
	\begin{align*}
	\cR_{Q}(f)-\cR_{P^{\star}}(f)
	=\Ebb_{Z\sim Q}[\ell(f(X),Y)]-\Ebb_{Z\sim P^{\star}}[\ell(f(X),Y)]
	\leq 2\Lip(\ell)(\Lip(f)+1) \varepsilon.
	\end{align*}
	It  shows
	\[
	\sup_{Q:W_1(Q,P)\leq \varepsilon}\cR_{Q}(\widehat{f}_n)\leq 	\widetilde \cR_{P}(\widehat{f}_n,\varepsilon)+2\Lip(\ell)(K+1) \varepsilon.
	\]
	Let $r_n=n^{-\alpha/(2d+3\alpha)}\log n^{\max\{1,\gamma\alpha/(d+1)\}}$. From Theorem 3.1,  we have
	\begin{align*}
	\sup_{Q:W_1(Q,P)\leq \varepsilon}\cR_{Q}(\widehat{f}_n)-\inf_{f\in \mathcal{H}^{\alpha}\cup \cN\cN(W,L,K)}\sup_{Q:W_{1}(Q,P)
		\leq \varepsilon}\cR_{Q}(f) \leq r_n+2\Lip(\ell)(K+1) \varepsilon,
	\end{align*}
	which completes the proof.
\end{proof}

\subsection{Proof of Corollary 4.3}
\begin{proof}[Proof of Corollary 4.3]
	Since Assumptions 4.1 and 4.2 hold for the margin loss function $\phi$, based on Lemma \ref{lem_ineqaulity}, we have	
	\begin{align*}
	\widetilde\cR_{P}(\widehat{f}_n,\varepsilon) - \inf_{f\in\cH^{\alpha}} \widetilde\cR_{P}(f,\varepsilon)  \geq  a_1  (\widetilde\cR_{\operatorname{class},P}(\widehat{f}_n,\varepsilon) - \cR_{\operatorname{class},P}^{\star})  - \inf_{f\in\cH^{\alpha}} \widetilde\cR_{P}(f,\varepsilon) + \cR_{P}^{\star}.
	\end{align*}
	for a constant $a_1>0.$ 	Based on Lemma 2.4, we have
	\[
	\inf_{f\in\cH^{\alpha}}	\widetilde \cR_{P}(f,\varepsilon)\leq\inf_{f\in\cH^{\alpha}} \cR_{P}(f)+\Lip^1(\ell) \varepsilon.
	\]
	Then,
	\begin{align*}
	\widetilde\cR_{\operatorname{class},P}(\widehat{f}_n,\varepsilon) - \widetilde\cR_{\operatorname{class},P}^{\star}(\varepsilon)\lesssim  \widetilde\cR_{P}(\widehat{f}_n,\varepsilon) - \inf_{f\in\cH^{\alpha}} \widetilde\cR_{P}(f,\varepsilon) +\varepsilon.
	\end{align*}
	Therefore, the result follows from Theorem 3.1.
\end{proof}

\subsection{Proof of Corollary 4.4}
\begin{proof}[Proof of Corollary 4.4]
	Based on Lemma 2.4, we have
	\[
	\inf_{f\in\cH^{\alpha}}	\widetilde \cR_{P}(f,\varepsilon)\leq\inf_{f\in\cH^{\alpha}} \cR_{P}(f)+\Lip^1(\ell) \varepsilon.
	\]
	For any measurable function $f$, we have
	\begin{align*}
	a  ( \cR_{\operatorname{class},P}(f) -  \cR_{\operatorname{class},P}^{\star} )  &\leq 	 \cR_{P}(f) -   \cR_{P}^{\star}
	\\& \leq \widetilde{\cR}_{P}(f,\varepsilon)-\inf_{f\in\cH^{\alpha}} \widetilde\cR_{P}(f,\varepsilon)+\inf_{f\in\cH^{\alpha}} \widetilde\cR_{P}(f,\varepsilon)-\cR_{P}^{\star}
	\\&\leq  \widetilde{\cR}_{P}(f,\varepsilon)-\inf_{f\in\cH^{\alpha}} \widetilde\cR_{P}(f,\varepsilon) +\Lip^1(\ell) \varepsilon.
	\end{align*}
	Therefore, the result follows from Theorem 3.1.
\end{proof}

\subsection{Proof of Theorem 4.5}
\begin{proof}[Proof of Theorem 4.5]
	From Lemma 2.4, for any $\bar{f}\in\cN\cN(W,L,K)$, we have
	\begin{equation}\label{eq_square}
	\begin{split}
	\widetilde\cR_{P}(\widehat{f}_n^{ls},\varepsilon)-\widetilde\cR_{P}(f^{\star},\varepsilon)
	&=\widetilde\cR_{P}(\widehat{f}_n^{ls},\varepsilon)-\widetilde\cR_{P_n}(\widehat{f}_n^{ls},\varepsilon)+\widetilde\cR_{P_n}(\widehat{f}_n^{ls},\varepsilon)
	-\widetilde\cR_{P_n}(\bar{f} ,\varepsilon)
	\\&\quad +\widetilde\cR_{P_n}(\bar{f} ,\varepsilon)-\widetilde\cR_{P}(\bar{f} ,\varepsilon)
	+\widetilde\cR_{P}(\bar{f},\varepsilon) -\widetilde\cR_{P}(f^{\star},\varepsilon)
	\\&\leq \widetilde\cR_{P}(\widehat{f}_n^{ls},\varepsilon)-\widetilde\cR_{P_n}(\widehat{f}_n^{ls},\varepsilon)+\widetilde\cR_{P_n}(\bar{f} ,\varepsilon)-\widetilde\cR_{P}(\bar{f} ,\varepsilon)
	\\&\quad+\widetilde\cR_{P}(\bar{f},\varepsilon) -\widetilde\cR_{P}(f^{\star},\varepsilon)
	\\&\leq 2A  +\Lip^1(\ell)K\varepsilon  + \cR_{P}(\bar{f})- \cR_{P}(f^{\star})
	\\&\leq 2A 	+\Lip^1(\ell)K\varepsilon+\|\bar{f}-f_0\|^2_{C([0,1]^d)},
	\end{split}
	\end{equation}
	where $A=\sup_{f\in \cN\cN(W,L,K)}|\widetilde\cR_{P}(f,\varepsilon)-\widetilde\cR_{P_n}(f,\varepsilon)|$ and the last inequality holds because
	\begin{align*}
	\cR_{P}(\bar{f})- \cR_{P}(f^{\star})
	\leq\Ebb[(\bar{f}(X)-f_0(X))^2]
	\leq \|\bar{f}-f_0\|^2_{C([0,1]^d)}.
	\end{align*}
	Let $f_0\in\argmin_{f\in    \cN\cN(W,L,K)} \widetilde\cR_{P}(f,\varepsilon)$. We have	
	\begin{align*}
	\widetilde\cR_{P}(\widehat{f}_n^{ls},\varepsilon)- \widetilde\cR_{P}(f_0,\varepsilon) &=\widetilde\cR_{P}(\widehat{f}_n^{ls},\varepsilon)-\widetilde\cR_{P_n}(\widehat{f}_n^{ls},\varepsilon)+\widetilde\cR_{P_n}(\widehat{f}_n^{ls},\varepsilon)-\widetilde\cR_{P_n}(f_0,\varepsilon)
	\\&\quad
	+\widetilde\cR_{P_n}(f_0,\varepsilon)-\widetilde\cR_{P}(f_0,\varepsilon)
	\\&\leq 2A.
	\end{align*}
	Based on Section
	A.1
	,	the error bound for $A$ can be derived.   Section
	A.2
	shows that there exists $\bar{f}_0\in\cN\cN(W,L,K)$ such that $\|\bar{f}_0-f_0\|_{C([0,1]^d)}\lesssim (K/\log^{\gamma}K)^{- \alpha/(d+1)}$. Since the inequality \eqref{eq_square} holds for any $\bar{f}\in\cN\cN(W,L,K)$, by selecting $\bar{f}=\bar{f}_0$, we derive
	\begin{align*}
	&\widetilde\cR_{P}(\widehat{f}_n^{ls},\varepsilon)-\widetilde\cR_{P}(f^{\star},\varepsilon)
	\\&\lesssim K\varepsilon n^{-1}+ WL\sqrt{\log (W^2L)} n^{-1/2} \sqrt{\log  n }
	+K\varepsilon+ (K/\log^{\gamma}K)^{-2\alpha/(d+1)}
	\\&\lesssim   WL\sqrt{\log (W^2L)} n^{-1/2} \sqrt{\log  n }  + (K/\log^{\gamma}K)^{-2\alpha/(d+1)}+K\varepsilon.
	\end{align*}
	By selecting $$K\asymp n^{(d+1)/(2d+5\alpha)}\text{ and }WL\asymp n^{(2d+\alpha)/(4d+10\alpha)},$$  it leads to
	\begin{align*}
	\tE(\widehat{f}_n^{ls},\varepsilon)&=\max\{ \widetilde\cR_{P}(\widehat{f}_n^{ls},\varepsilon)-\widetilde\cR_{P}(f^{\star},\varepsilon),	\widetilde\cR_{P}(\widehat{f}_n^{ls},\varepsilon)- \widetilde\cR_{P}(f_0,\varepsilon)\}
	\\&	\lesssim  n^{-2\alpha/(2d+5\alpha)}\log n^{\lambda}+n^{(d+1)/(2d+5\alpha)}\varepsilon,
	\end{align*}
	where $\lambda=\max\{1,2\gamma\alpha/(d+1)\}$. Hence, the proof completes.
\end{proof}

\subsection{Proof of Corollary 4.6}
\begin{proof}[Proof of Corollary 4.6]
	Based on Lemma 2.4, we have
	\[
	\cR_{P}(f)-\inf_{f\in\cH^{\alpha}} \cR_{P}(f)\leq \widetilde \cR_{P}(f,\varepsilon)-	\inf_{f\in\cH^{\alpha}}	\widetilde \cR_{P}(f,\varepsilon)+\Lip^1(\ell) \varepsilon.
	\]
	Since $ \cR_{P}(f)=\Ebb[(f(X)-f_0(X))^2]+\Ebb[\eta^2]$, then $ \inf_{f\in\cH^{\alpha}} \cR_{P}(f)=\Ebb[\eta^2]. $ It follows
	\begin{align*}
	\|\widehat{f}_n^{ls}-f_0\|_2^2=  \Ebb[(\widehat{f}_n^{ls}(X)-f_0(X))^2]&\leq \widetilde \cR_{P}(\widehat{f}_n^{ls},\varepsilon)-	\inf_{f\in\cH^{\alpha}}	\widetilde \cR_{P}(f,\varepsilon)+\Lip^1(\ell) \varepsilon
	\\&\leq 	\tE(\widehat{f}_n^{ls},\varepsilon)+\Lip^1(\ell) \varepsilon.
	\end{align*}
	Based on Theorem 4.5, the proof completes.
\end{proof}

\bibliographystyle{imsart-nameyear}

\bibliography{ad}
\end{document}